\DeclareMathOperator*{\argmax}{arg\,max}
\DeclareMathOperator*{\argmin}{arg\,min}
\newcommand{\g}{\,\vert\,}
\newcommand{\s}{\,;\,}
\newcommand{\kl}[2]{D_{\textsc{kl}}\left( #1 \parallel #2\right)}
\newcommand{\rmp}{\mathrm{p}}
\newcommand{\rmq}{\textrm{q}}
\newcommand{\KL}{D_{\text{KL}}}
\newcommand{\C}{\mathcal{C}}
\newcommand{\Q}{\mathbb{Q}}
\newcommand{\E}{\mathbb{E}}
\renewcommand{\P}{\mathbb{P}}
\newcommand{\M}{\mathbb{M}}
\newcommand{\iid}{\stackrel{iid}{\sim}}
\newcommand\numberthis{\addtocounter{equation}{1}\tag{\theequation}}
\newcommand{\cC}{\mathcal{C}}
\newcommand{\cN}{\mathcal{N}}
\newcommand{\cS}{{\mathcal{S}}}
\newcommand{\cT}{{\mathcal{T}}}
\newcommand{\cZ}{\mathcal{Z}}
\newcommand{\mbx}{\mathbf{x}}
\newcommand{\bl}{\bm{l}}
\newcommand{\bw}{\bm{w}}
\newcommand{\bx}{\bm{x}}
\newcommand{\by}{\bm{y}}
\newcommand{\bB}{\bm{B}}
\newcommand{\bC}{\bm{C}}
\newcommand{\bE}{\bm{E}}
\newcommand{\bF}{\bm{F}}
\renewcommand{\H}{\mathbb{H}}
\newcommand{\bM}{\bm{M}}
\newcommand{\bQ}{\bm{Q}}
\newcommand{\bR}{\bm{R}}
\newcommand{\bS}{\bm{S}}
\newcommand{\bX}{\bm{X}}
\newcommand{\CC}{\mathbb{C}}
\newcommand{\BW}{\text{BW}}
\renewcommand{\P}{\mathbb{P}}
\providecommand{\doi}[1]{}
\providecommand{\isbn}[1]{}
\providecommand{\eprint}[1]{}
\providecommand{\url}[1]{}
  \renewcommand{\doi}[1]{}
  \renewcommand{\isbn}[1]{}
  \renewcommand{\eprint}[1]{}
  \renewcommand{\url}[1]{}
\newcommand{\R}{\mathbb{R}}
\setlist[itemize]{leftmargin=1em}
\setlist[enumerate]{leftmargin=1em}
\newtheorem{assumption}{Assumption}
\newaliascnt{proposition}{theorem}
\newtheorem{proposition}[proposition]{Proposition}
\newaliascnt{lemma}{theorem}
\newtheorem{lemma}[lemma]{Lemma}
\newaliascnt{remark}{theorem}
\newtheorem{remark}[remark]{Remark}
\newaliascnt{corollary}{theorem}
\newtheorem{corollary}[corollary]{Corollary}
\crefname{assumption}{Assumption}{Assumptions}
\crefname{proposition}{Proposition}{Propositions}
\Crefname{proposition}{Proposition}{Propositions}
\crefname{theorem}{Theorem}{Theorems}
\Crefname{theorem}{Theorem}{Theorems}
\crefname{corollary}{Corollary}{Corollaries}
\Crefname{corollary}{Corollary}{Corollaries}
\crefname{equation}{eq.}{eqs.}
\Crefname{equation}{Eq.}{Eqs.}
\Crefname{algocf}{Algorithm}{Algorithms}
\crefname{algocf}{algorithm}{algorithms}
\newcounter{problem}
\crefname{problem}{problem}{problems}
\Crefname{problem}{Problem}{Problems}
\numberwithin{equation}{section}
\begin{document}
\jmlrheading{27}{2026}{1-\pageref{LastPage}}{7/24; Revised
12/25}{1/26}{24-1057}{Bohan Wu and David M. Blei}
\ShortHeadings{Beyond Mean-Field}{Wu and Blei}

\title{Extending Mean-Field Variational Inference via Entropic Regularization: Theory and Computation}

\author{\name Bohan Wu \email bw2766@columbia.edu\\
  \addr Department of Statistics\\
  Columbia University\\
  New York, NY 10027, USA
  \AND
  \name David M. Blei \email  david.blei@columbia.edu\\
  \addr Department of Computer Science and Department of Statistics \\
  Columbia University\\
  New York, NY 10027, USA}

\editor{Pierre Alquier}
\maketitle

\begin{abstract}
  Variational inference (VI) has emerged as a popular method for approximate inference for high-dimensional Bayesian models. In this paper, we propose a novel VI method that extends the naive mean field via entropic regularization, referred to as $\Xi$-variational inference ($\Xi$-VI). $\Xi$-VI has a close connection to the entropic optimal transport problem and benefits from the computationally efficient Sinkhorn algorithm. We show that $\Xi$-variational posteriors effectively recover the true posterior dependency, where the likelihood function is downweighted by a regularization parameter. We analyze the role of dimensionality of the parameter space on the accuracy of $\Xi$-variational approximation and the computational complexity of computing the approximate distribution, providing a rough characterization of the statistical-computational trade-off in $\Xi$-VI, where higher statistical accuracy requires greater computational effort. We also investigate the frequentist properties of $\Xi$-VI and establish results on consistency, asymptotic normality, high-dimensional asymptotics, and algorithmic stability. We provide sufficient criteria for our algorithm to achieve polynomial-time convergence.  Finally, we show the inferential benefits of using $\Xi$-VI over mean-field VI and other competing methods, such as normalizing flow, on simulated and real datasets.

\end{abstract}

\begin{keywords}
  Variational inference, optimal transport,
  mean-field approximation, statistical-computational tradeoff,
  high-dimensional Bayesian inference.
\end{keywords}

\section{Introduction}

Variational inference (VI) is a widely used method for approximate probabilistic inference. VI approximates a difficult-to-compute distribution by positing a family of simpler distributions and minimizing the KL divergence between the family and the target. In Bayesian modeling, the target is a posterior distribution of latent variables given observations $\rmp(\theta \g \mbx)$ and the variational family is of distributions of the latent variables $\rmq(\theta) \in \mathbb{Q}(\Theta)$. VI approximates the posterior with
\begin{align}\label{def:VI}
  \rmq^*(\theta) = \arg \min_{\rmq \in \mathbb{Q}}
  \kl{\rmq(\theta)}{\rmp(\theta \g \mbx)}.
\end{align}

To set up variational inference, we need to select the family of distributions over which to optimize. In many applications, practitioners use the \textit{mean-field} or \textit{fully factorized} family. This is the family of product distributions, where each variable is independent and endowed with its own distributional factor.  Consider a model with $D$ latent variables $\theta = \{\theta_1, \ldots, \theta_D\}$. The corresponding mean-field family is
\begin{align}
  \rmq(\theta) = \prod_{i=1}^{D} \rmq_i(\theta_i),
\end{align}
where each $q_i(\theta_i)$ is the variational factor for $\theta_i$.  Thanks to this simple family, the variational optimization is computationally efficient (to a local optimum). But this efficiency comes at a cost. Mean-field VI suffers in accuracy because it cannot capture posterior dependencies between the elements of $\theta$~\citep{Blei2017}.

In this paper, we develop a new way of doing variational inference.  The idea is to optimize over \textit{all} distributions of the latent variables, i.e., $\rmq \in \mathbb{P}(\Theta)$, but to regularize the variational objective function to encourage simpler distributions that are ``more like the mean-field.'' At one end of the regularization path we effectively optimize over the mean-field family, providing traditional mean-field VI (MFVI). At the other end we optimize over all distributions, providing exact inference (but at prohibitive cost).  Between these extremes, our method smoothly trades off efficiency and accuracy.

In detail, consider a probabilistic model $\rmp(\theta, \mbx) = \rmp(\theta) \rmp(\mbx \g \theta)$ and the goal to approximate the posterior $\rmp(\theta \g \mbx)$. Denote the prior $\pi(\theta) := \rmp(\theta)$ and the log likelihood $\ell(\mbx \s \theta) := \log \rmp(\mbx \g \theta)$. We propose to approximate the posterior by optimizing an \textit{expressivity-regularized variational objective} over the entire space of distributions $\rmq \in \P(\Theta)$.

Take an arbitrary distribution $\rmq(\theta)$ with marginal distributions denoted $\rmq_j(\theta_j)$. We define the \textit{expressivity functional} as the KL divergence between $\rmq(\theta)$ and the product of its marginals:
\begin{align}
  \Xi(\rmq) = \kl{\rmq(\theta)}{\prod_{i = 1}^D \rmq_i(\theta_i)}.
\end{align}
Expressivity measures the dependence among the $D$ latent variables
under $\rmq(\theta)$. In the language of information theory, it is the
multivariate mutual information of
$\theta \sim \rmq(\theta)$~\citep{Cover2006}. Intuitively, it
quantifies how "un-mean-field" the distribution $\rmq$ is. A larger
$\Xi(\rmq)$ indicates that the distribution is further from a
factorized distribution.

We define \textit{$\Xi$-variational inference} ($\Xi$-VI) as an expressivity-regularized optimization problem:
\begin{align}
  \label{eq:xi_vi}
  \rmq_{\lambda}^*(\theta) =
  \arg \max_{\rmq \in \mathbb{P}(\Theta)} \,
  \underbrace{\E_{\rmq} \left[ \ell(\mbx \s \theta) \right]
  - \kl{\rmq(\theta)}{\pi(\theta)}}_{\text{ELBO}(\rmq)}
  - \underbrace{\lambda \, \Xi(\rmq)}_{\text{Expressivity penalty}}.
\end{align}
The first two terms comprise the \textit{evidence lower bound}
(ELBO)~\citep{Jordan1999a,Blei2017}, which is the usual objective
functon for variational inference. When optimized relative to the full
set of distributions of $\theta$, maximizing the ELBO recovers the
exact posterior~\citep{Zellner1988,Knoblauch2022}.  The third term,
however, is a penalty term. It encourages the optimal $\rmq$ to
resemble a product distribution, i.e., a member of the mean-field
family. By varying $\lambda > 0$, we interpolate between the exact
posterior and its mean-field approximation.

We will study the theory and application of \Cref{eq:xi_vi}, which we call $\Xi$-VI (pronounced ``ksee VI''). First we show that we can solve this optimization by iterating between (1) calculating approximate posterior marginals for each variable and (2) solving a problem of \textit{entropic optimal transport} (EOT) with a multi-marginal Sinkhorn algorithm~\citep{Cuturi2013,Lin2022}. We then develop \textit{expressivity-corrected mean field}. It first approximates marginals using traditional VI (e.g., black-box VI~\citep{Ranganath2014} or expectation propagation~\citep{Minka2013}), and then optimizes \Cref{eq:xi_vi} with the Sinkhorn algorithm to model dependencies in the variational approximation.

We prove that $\Xi$-VI gives frequentist guarantees including posterior consistency and a Bernstein-von Mises theorem. Further, we theoretically characterize how to choose the regularization parameter $\lambda$ to balance accuracy and efficiency. Specifically, we characterize the regions of possible $\lambda$ values where the resulting variational approximation is either mean-field or Bayes-optimal.

Empirically, we apply $\Xi$-VI correction to multivariate Gaussians, linear regression with a Laplace prior, and hierarchical Bayesian modeling. The results demonstrate the competitive performance of $\Xi$-VI over other variational inference methods, including mean-field and full-rank ADVI \citep{Kucukelbir2017}, normalizing flow \citep{Rezende2015NFVI}, and Stein variational gradient descent \citep{Liu2016SVGD}. To set the regularization strength $\lambda$, our empirical findings suggest that $\lambda = D$ is a reasonable choice, marking a phase transition between the computationally efficient and statistically accurate regimes.

The rest of the paper is organized as follows.  \Cref{sect-xi-vi} introduces $\Xi$-VI and the $\Xi$-VI correction algorithm. \Cref{sect-examples} provides an empirical study. \Cref{sect-theory} establishes theoretical guarantees for the $\Xi$-variational posterior, including posterior consistency, Bernstein-von Mises theorem, high-dimensional bounds, finite-sample convergence, and algorithmic stability. \Cref{sect-discussion} concludes the paper with a discussion of limitations and further research.

\paragraph{Related Work.} This paper proposes $\Xi$-VI, a new way to
relax the mean-field assumption in variational inference. With this
new algorithm, we also add to two existing areas of VI research:
statistical guarantees and computational guarantees.

Mean-field VI is efficient, but it also has limitations. It poorly approximates posteriors in settings such as multivariate Gaussian models \citep{Blei2017}, state-space models \citep{Wang2004Lack}, piecewise-constant models \citep{ZhangGao2020Rates}, and spike covariance models \citep{Ghorbani2019}. To address these shortcomings, researchers have proposed a variety of solutions, including structural VI \citep{Xing2012,Ranganath2016Hierarchical}, copula-based methods \citep{Tran2015Copula,Tran2017}, linear response corrections \citep{Giordano2018,Raymond2017}, TAP corrections \citep{Opper2001,Fan2021,Celentano2023TAPMF,Celetano2023TAPZ2}, and variational boosting \citep{Miller2017Boosting2,Locatello2018}. Our method makes a contribution to this landscape of research, providing a principled and theoretically supported approach to capture dependencies among latent variables and to manage the statistical-computation tradeoff.

Several lines of recent research examine the statistical properties of VI approximations. This work includes results on asymptotic normality \citep{Hall2011GaussianVI,Hall2011PoissonMixture,Bickel2013,Wang2019}, posterior contraction rates \citep{ZhangGao2020Rates,ZhangZhou2020}, finite-sample bounds \citep{Alquier2016,Alquier2020,Yang2020Alpha}, and performance in high-dimensional settings \citep{Basak2017,Ray2020,Ray2021,Mukherjee2022,Mukherjee2023,Qiu2024}. We contribute to this research by proving frequentist guarantees---posterior consistency and a Bernstein-von Mises theorem---for our proposed class of variational approximations.

Other research examines computational aspects of VI, including convergence rates for coordinate ascent methods \citep{Mukherjee2018SBM,Plummer2020,ZhangZhou2020,Xu2022,bhattacharya2023convergence}, black-box optimization \citep{Kim2023BBVI}, and the trade-off between statistical accuracy and computational complexity \citep{Bhatia2022}. Related work also analyzes VI through gradient flow techniques \citep{Yao2022,Lambert2022,Diao2023,Jiang2023}. Our paper contributes to these computational analyses by explicitly characterizing the trade-off of accuracy for computational simplicity. We also expand the interface between VI and optimal transport, in using entropic optimal transport methods \citep{Cuturi2013,Lin2022} in VI optimization.

\section{$\Xi$-variational inference} \label{sect-xi-vi}

\label{sect-xi}

Again, we consider a general probabilistic model
\begin{align}
  \rmp(\theta, \mbx) = \pi(\theta) \exp\{\ell(\mbx \s \theta)\},
\end{align}
where $\pi(\theta)$ is the prior of the unknown parameter and $\ell(\mbx \s \theta)$ is the log likelihood of the data under $\theta$. While we use the log-likelihood throughout our methods and experiments (Section3), all proposed techniques and theoretical results extend readily to generalized posteriors in which $\ell(\bx \mid \theta)$ is a loss function \citep{Bissiri2016,Knoblauch2019,Miller2021}.  We consider the prior $\pi$ that is a product distribution of the form $\pi(\theta) = \prod_{i = 1}^d \pi_i(\theta_i)$. Our goal is to approximate the posterior $\rmp(\theta \g \mbx)$.

In this section, we formally define $\Xi$-VI and analyze its structure. We reformulate $\Xi$-VI as a nested optimization, separating the problem into an outer optimization over marginals and an inner optimization over their couplings, i.e., a representation of the dependency structure in the variational approximation. We focus on solving the inner optimization, showing how to correct mean-field (factorized) solutions using entropic optimal transport (EOT). We present a computationally practical algorithm and discuss its interpretation.

\subsection{The $\Xi$ variational objective and its nested formulation}

We aim to find the distribution $q_\lambda^*$ that solves the problem in \Cref{eq:xi_vi}. In this problem, $\lambda \geq 0$ is a user-defined regularization parameter, and the optimal $\rmq_\lambda^*(\theta)$ is called the \textit{$\Xi$-variational posterior}. When it is not unique, $\rmq^*_\lambda$ is one of the optimizers of \Cref{eq:xi_vi}.

We make two observations about the $\Xi$-VI problem: (1) When $\lambda = 0$, $\rmq^*_0$ is the exact posterior. When $\lambda = \infty$, $\rmq^*_\infty$ is a mean-field variational posterior. (2) By the standard duality theory, the $\Xi$-VI problem is equivalent to optimizing the standard ELBO over a neighborhood of the mean-field family:
\begin{align*}
  \rmq^*_\lambda =
  \argmax_{\rmq \in \P(\Theta): \Xi(\rmq)  \leq \delta}
  \text{ELBO}\left(\rmq \right).
\end{align*}
The $\Xi$-VI posterior is the distribution over the latent variables
closest to the posterior, but within the neighborhood of expressivity.

We can rewrite $\Xi$-VI as a \textit{nested minimization} problem.  Let $m_i(\theta_i)$ denote a marginal distribution of $\theta_i$ and let $\M(\Theta)$ denote the space of product distributions over $\Theta$,
\begin{align}
  \M(\Theta) = \left\{m(\theta) : m(\theta) = \prod_{i=1}^{D}
  m_i(\theta_i)\right\}.
\end{align}

Given a set of $D$ marginals let $\C\left(m_1, \ldots, m_D\right)$ denote the set of $D$-dimensional joint distributions where $m_j(\theta_j)$ is the $j^{th}$ marginal,
\begin{align}
  \C\left(m_1, \ldots, m_D\right) = \{\rmq(\theta_1, \ldots, \theta_D) : \rmq_j(\theta_j) =
  m_j(\theta_j) \, , \, j = 1, \ldots, D\}.
\end{align}
The set $\C(m_1, \cdots, m_D)$ is called the set of \textit{couplings} over the distributions $\{m_1, \cdots, m_D\}$. As shorthand, we write $\C(m)$ as the set of couplings over the marginal distributions of $m(\theta)$. Note the set $\C(m)$ is convex and closed in the Wasserstein distance \citep{Nutz2021Notes}, and we assume that there exists $\rmq \in \C(m)$ with finite (Boltzmann) entropy.

With these definitions in place, we write $\Xi$-VI as a double minimization problem,
\begin{align}
  \label{VI-EOT-decomposition}
  \min\limits_{m \in
  \M(\Theta)} \min \limits_{\rmq \in \C(m)}
  \E_\rmq[-\ell(\bx; \theta)] + \lambda \KL(\rmq \parallel m) + \KL(\rmq
  \parallel \pi).
\end{align}
The equation follows from expressing the minimization set $\P(\Theta)$ as $\{\rmq \in \C(m), m \in \M(\Theta)\}$, while the objective stays the same.

In \Cref{VI-EOT-decomposition}, the \textit{outer variational problem} minimizes the objective with respect to the space of marginal distributions. Given a set of marginals, the \textit{inner variational problem} minimizes the objective over its set of couplings. Here we will focus on the inner variational problem. Given fixed marginal distributions---such as those produced by mean-field VI---the inner problem finds the optimal coupling that corrects these marginals, i.e., the $\lambda$-regularized optimal dependencies between the variables. We show this problem is solvable using entropic optimal transport tools. Our method improves a given mean-field solution to capture dependencies in the latent variables and better approximate the posterior.

\subsection{Expressivity-corrected mean-field VI}
\label{sect-eot-derivation}

We now derive an algorithm to correct mean-field variational inference using $\Xi$-VI\@. We fix the solution to the outer variational problem with a product distribution $m(\theta)$, obtained from mean-field VI or another approximate method. We then solve \Cref{VI-EOT-decomposition} for the optimal distribution $\rmq(\theta)$ that matches these marginals, i.e., by solving the inner variational problem with respect to the coupling $\rmq \in \mathcal{C}(m)$. As we will see, we can optimize over the set of couplings using tools from entropic optimal transport (EOT)~\citep{Villani2009,Nutz2021Notes}.

\paragraph{Solving the inner variational problem with the Sinkhorn
  algorithm.} We view the inner variational problem as an EOT problem
\citep{Nutz2021Notes}. Again, we fix $m$ and optimize $\rmq$. A simple
calculation (in \Cref{sect-proof1}) shows that
\begin{align}\label{eqn-VI-inner}
  \rmq^*_\lambda(\theta) = \argmin\limits_{\rmq \in \C(m)} \E_\rmq[-\ell(\bx; \theta)] + (\lambda +1) \KL(\rmq \parallel m).
\end{align}
\Cref{thm:MEOT-struct} (\Cref{sect-proof1}) shows the unique solution to \Cref{eqn-VI-inner} has the following form:
\begin{align} \label{eqn-EOT-solution}
      \rmq^*_\lambda(\theta) =\exp\left(\sum_{i = 1}^D \phi_i^*(\theta_i) +
      \frac{1}{\lambda + 1} \ell\left(\bx; \theta\right) \right)
      m(\theta),
\end{align}
where each $\phi_i^*: \Theta_i \to \mathbb{R}$ is a measurable
function, called an \textit{EOT potential}.

The set of EOT potentials $\phi^* := (\phi_1^*, \cdots, \phi_D^*)$ are identifiable up to an additive constant. So we identify the solution by imposing $D-1$ constraints that each one has mean-zero under the marginal,
\begin{align}
  \label{eqn-EOT-constraints}
  \E_{m_1} \phi_1^* (\theta_1) = \cdots = \E_{m_{D-1}}
  \phi_{D-1}^*(\theta_D) = 0.
\end{align}
Let $\bE(m)$ denote the space of $\phi$ such that \Cref{eqn-EOT-constraints} holds. We find the optimal potentials from \Cref{eqn-EOT-solution} by maximizing the Lagrangian dual problem,
\begin{align}
  \label{eqn-VI-inner-dual}
  \phi^* = \arg \max\limits_{\phi \in \bE(m)}\sum_{i = 1}^D \E_{m_i}[\phi_i(\theta_i)]- \E_{m} \left[\exp\left( \sum_{i = 1}^D \phi_i(\theta_i) + \frac{1}{\lambda + 1} \ell\left(\bx; \theta\right)\right) \right].
\end{align}
See \Cref{sect-proof1} for the derivation.

Finally, we solve \Cref{eqn-VI-inner-dual} with a block coordinate
ascent algorithm called the \textit{Sinkhorn algorithm}
\citep{Cuturi2013}. Given the marginals $m^t$ at time $t$, the
Sinkhorn algorithm iteratively updates each $\phi_i$,
\begin{align}
  \label{sinkhorn-problem}
  \phi_i^{t+1} = \argmax_{\phi_i \in L^1_0(m_i^t)} \E_{m_i^t}
  \phi_i(\theta_i) + \sum_{j = 1}^{i-1} \E_{m_j^t}
  \phi_j^{t+1}(\theta_j) + \sum_{j = i+1}^{D} \E_{m_j^t}
  \phi_j^t(\theta_j)-\E_{m^t} \left[\exp(\Lambda^{t+1}(\theta_i,
  \theta_{-i})) \right],
\end{align}
where
\begin{align*}
  \Lambda^{t+1}(\theta_i, \theta_{-i}) := \sum_{j = 1}^{i-1}
  \phi_j^{t+1}(\theta_j) + \sum_{j = i + 1}^D \phi_j^t(\theta_j) +
  \frac{1}{\lambda + 1} \ell(\bx; \theta_i, \theta_{-i}).
\end{align*}
To solve for \Cref{sinkhorn-problem}, the update has an explicit formula:
\begin{align}\label{sinkhorn-update}
  \phi_i^{t+1}(\theta_i) = -\log \E_{m_{-i}^t}
  \exp(\Lambda^{t+1}(\theta_i, \theta_{-i}))  + \eta_i^t , \quad
  \forall \theta_i \in \Theta_i,
\end{align}
where
\begin{align*}
  \eta_i^t =
  \begin{cases}
    \E_{m_i^t} \log \E_{m_{-i}^t} \exp(\Lambda^{t+1}(\theta_i,
    \theta_{-i})) , \quad \mbox{for } i \leq D-1, \\
    0 \quad \mbox{ otherwise. }
    \end{cases}
\end{align*}
The updated EOT potentials satisfy the identifiability constraints~\eqref{eqn-EOT-constraints}. The solution $\rmq^*_\lambda$ calculated with these EOT potentials is a valid probability distribution.

In practice, the expectations required of \Cref{sinkhorn-update} might be difficult to compute. In our algorithm, we approximate them with Monte Carlo.

Specifically, given samples $\theta_i^1, \cdots, \theta_i^N\sim m_i^t$ for $i \in [D]$, we approximate $ \phi_i^{t+1}$ with the Monte Carlo estimate $\hat \phi_i^{t+1}$:
\begin{equation} \label{Monte Carlo step}
    \hat  \phi_i^{t+1}(\theta_i) =  -\log\left(\frac{1}{N^{D-1}} \sum_{J \in [N]^{D-1}}
  \exp(\hat \Lambda^{t+1}(\theta_i, \theta_{-i}^{J})) \right) + \hat \eta_i^T,
\end{equation}
where
\begin{align*}
\hat \Lambda^{t+1}(\theta_i, \theta_{-i}) := \sum_{j = 1}^{i-1}
\hat \phi_j^{t+1}(\theta_j) + \sum_{j = i + 1}^D  \hat \phi_j^t(\theta_j) +
\frac{1}{\lambda + 1} \ell(\bx; \theta_i, \theta_{-i}).
\end{align*}
and
\begin{equation*}
 \hat \eta_i^t =
  \begin{cases}
   \frac{1}{N} \sum_{k = 1}^N\log\left(\frac{1}{N^{D-1}} \sum_{J \in [N]^{D-1}}
  \exp(\hat \Lambda^{t+1}(\theta_i, \theta_{-i}^{J})) \right), \quad \mbox{for } i \leq D-1, \\
    0 \quad \mbox{ otherwise. }
    \end{cases}
\end{equation*}
\paragraph{One-step expressivity-corrected mean-field VI.}
\Cref{alg-approx} implements the entropic correction in a single round of updates. In the first stage, it computes a set of pseudomarginals $\{\tilde m_i\}_{i \in [D]}$, and draws samples from them. In the second stage, it uses those samples in a multi-marginal Sinkhorn algorithm to compute the optimal EOT coupling.

\begin{algorithm}[h]
\SetAlgoLined
\DontPrintSemicolon
\BlankLine
\KwIn{
  \BlankLine
  \begin{itemize}[noitemsep]
  \item Data $\bx$
  \item Likelihood function $\ell(x \s \theta)$
  \item Prior distribution $\pi(\theta)$
  \item Expressivity regularization parameter $\lambda > 0$
  \end{itemize}
} \BlankLine
Compute marginals from a mean-field algorithm
\begin{align*}
  \tilde m_1, \cdots, \tilde m_D \leftarrow \text{mean field inference}(\bx, \ell, \pi).
\end{align*}

Iteratively compute EOT potentials
\begin{align*}
  \tilde \phi_{\lambda, 1}, \cdots, \tilde \phi_{\lambda, D} \leftarrow \text{Sinkhorn}(\bx, \ell, \tilde m_{1:D}, \lambda) && \text{\Cref{sinkhorn-problem,sinkhorn-update}}
\end{align*}

\KwOut{$\tilde \rmq_\lambda(\theta) = \exp\left(\sum_{i=1}^D \tilde \phi_{\lambda, i}(\theta_i) + \frac{1}{\lambda + 1} \ell (\bx; \theta) \right) \prod_{i = 1}^D \tilde m_i(\theta_i)$}

\caption{Expressivity-corrected mean-field variational inference}
\label{alg-approx}
\end{algorithm}

Note that in the first stage, we can use any algorithm for
approximating the posterior marginals, e.g., variational inference
\citep{Blei2017}, expectation propagation (EP) \citep{Minka2013}, or
MCMC \citep{Robert2004}.  Ideally, the first step of
\Cref{alg-approx} would produce accurate estimates of the marginals of the exact posterior.

In practice, we recommend using an approximate method that yields overdispersed marginals, such as EP, because the additional variability often improves downstream coupling approximations.  Intuitively, it produces more variation in the initial samples of $\{\theta_i^1, \cdots, \theta_i^N\}_{i \in [D]}$ for the Monte Carlo step~\eqref{Monte Carlo step}, which leads to better downstream approximations. We demonstrate this empirically in \Cref{sect-examples}.

\Cref{alg-approx} only outputs an approximate solution to the full $\Xi$-VI problem in \Cref{VI-EOT-decomposition}. However, by coupling the marginals $\tilde m_1, \cdots, \tilde m_D$, the final estimate $\tilde \rmq_\lambda(\theta)$ is guaranteed to be \textit{at least as good as} the initial approximation $\tilde m(\theta) := \prod_{j = 1}^D m_j(\theta_j)$ in terms of KL divergence to the exact posterior. The reason is that $\tilde \rmq_\lambda$ maximizes the regularized ELBO in \Cref{eq:xi_vi} over the coupling $\cC(\tilde m)$.

With a large number of variables, \Cref{alg-approx} is computationally challenging because the cost of averaging over $N^{D-1}$ term in step~\eqref{Monte Carlo step} scales exponentially in $D$. For this reason, a stochastic (minibatch) approximation to step~\eqref{Monte Carlo step} is necessary in practice when either $N$ or $D$ is large. In \Cref{sect-implementation}, we outline conditions on the likelihood for the algorithm to be polynomial-time solvable.  Specifically, we provide polynomial-time complexity guarantees in two settings: (i) graphical models with bounded treewidth, and (ii) models in which the likelihood evaluated at the sample points $\{\theta_i^1, \cdots, \theta_i^N\}_{i \in [D]}$ forms a low-rank and sparse tensor. In the first setting, we show that the algorithm converges in time polynomial in the dimension $D$, but exponential in the treewidth and inversely proportional to the regularization parameter $\lambda$.

\subsection{$\Xi$-VI Solution and Connection to Generalized Bayes}

In this section, we discuss the structure of the $\Xi$-VI solution and its connection to existing theories of generalized Bayesian methods \citep{Knoblauch2022}.

As shown in \Cref{eqn-EOT-solution}, the $\Xi$-VI solution consists of three components: (i) a scaled log-likelihood term, (ii) a set of potential functions $\{\phi^*_{\lambda, i}\}_{i=1}^D$, and (iii) a product of marginals $m^*_\lambda(\theta) = \prod_{i=1}^D m^*_{\lambda, i}(\theta_i)$. The regularization parameter $\lambda$ controls the temperature of the likelihood term $\ell(\bx; \theta)$.

Intuitively, $\lambda$ divides a sample size of $n$ between the true posterior and the mean-field solution by a factor of $1/(\lambda+1)$ and $\lambda/(\lambda + 1)$, respectively. It thus quantifies the tradeoff between the likelihood and a product distribution. Higher $\lambda$ allows the variational posterior to be close to the mean field, while lower $\lambda$ allows the solution to better approximate the exact posterior (but at computational cost). When $\lambda = 0$, the likelihood term is untempered---the variational solution is the exact posterior. When $\lambda = \infty$, the solution matches the mean-field variational posterior. The curve of measures $\{\rmq^*_\lambda, \lambda \in \bar \R_+\}$ smoothly interpolates between the mean-field variational posterior and the true posterior.

We can view $\rmq^*_\lambda$ as a nonlinear tilt of a $1/(\lambda+1)$-tempered posterior \citep{Miller2018,Bhattacharya2019}. Posteriors of this form have been studied extensively in statistical learning theory for addressing model misspecification \citep{grunwald2012safe,Grunwald2017}, and in machine learning they have been studied for their predictive performance where they are called “cold posteriors’’ \citep{Aitchison2021,Wenzel2020,mclatchie2025predictive}. Write
$$f_{\lambda, i}^*(\theta_i) := \phi_{\lambda, i}^*(\theta_i) + \log m_\lambda^*(\theta_i)-\log
\pi_i(\theta_i).$$ Then we can represent $\rmq^*_\lambda$ as a nonlinear tilt of the tempered posterior ${\rmq^*_0}^\lambda$,
\begin{align}
  \label{eqn-posterior-tempering}
  \rmq^*_\lambda (\theta)
  \propto
  \exp\left( \sum_{i = 1}^D f_{\lambda, i}^*(\theta_i) \right)
  {\rmq^*_0}^\lambda,
  \quad \text{where} \quad
  {\rmq^*_0}^\lambda(\theta)
  \propto \exp\left( \frac{1}{\lambda + 1}
  \ell (\bx; \theta) \right) \pi(\theta).
\end{align}
\citet{Wainwright2008} shows that the mean-field variational posterior of the quadratic interaction model amounts to a linear tilting of the prior.  \Cref{eqn-posterior-tempering} extends this result, where $f_{\lambda, i}^*(\theta_i)$ is the tilting function.

\section{Examples}\label{sect-examples}

We apply $\Xi$-VI to three statistical models: a multivariate Gaussian model, a high-dimensional Bayesian linear regression, and a hierarchical Bayesian model on the 8-schools data (\cite{Gelman1995}, Section 5.5).
\begin{itemize}
\item In the multivariate Gaussian example, $\Xi$-VI is explicitly solvable. This example illustrates the limitations of mean-field VI \citep{Blei2017}, and demonstrates how $\Xi$-VI improves it.

\item In high-dimensional Bayesian linear regression, mean-field VI produces valid inference under weak covariate interactions \citep{Mukherjee2022, Mukherjee2023}, but fail when the interaction among the covariates is strong \citep{Qiu2024, Celentano2023TAPMF}.  Again, $\Xi$-VI improves on the classical approach.

\item Our analysis of the Bayesian hierarchical model shows how $\Xi$-VI provides more accurate posterior inferences on a real-world dataset.
\end{itemize}

\subsection{Multivariate Gaussian distributions}
\label{example:multivariate-Gaussian}

We first apply $\Xi$-VI to approximating a multivariate Gaussian with the family of all Gaussian distributions.  In this demonstration, no algorithm is needed because $\Xi$-VI admits a closed form solution. In general, it is well known that mean-field VI underestimates the marginal variance of its target posterior~\citep{Blei2017}. Here we show how $\Xi$-VI interpolates between the mean-field and the target posterior, and strictly outperforms mean-field VI in covariance estimation.

Assume that the exact posterior is a multivariate normal, $\rmq_0^*:= N(\mu_0, \Sigma_0)$ with $D$-dimensional mean vector $\mu_0$ and a $D \times D$ full-rank covariance matrix $\Sigma_0$. The $\Xi$-VI formulation is
\begin{equation} \label{eqn-mGaussian-1} \rmq^*_\lambda =
  \argmin_{\rmq = \cN(\mu, \Sigma)} \KL(\rmq \parallel \rmq_0^*) +
  \lambda \Xi(\rmq).
\end{equation}

The next result establishes the self-consistency equations for the $\Xi$-VI solution and establishes upper and lower bounds for the approximating covariance:
\begin{proposition}
  \label{prop:mGaussian}
  Suppose we solve the Gaussian $\Xi$-VI problem~\eqref{eqn-mGaussian-1} with
$\cN(\mu_0, \Sigma_0)$ the exact posterior and  $\lambda > 0$. Then the minimizer $\rmq_\lambda^* = \cN(\mu^*, \Sigma^*)$ where $\mu^*, \Sigma^*$ satisfy the following fixed point equations:
\begin{equation*}
    \mu^* = \mu_0, \quad \left(\Sigma^{*} \right)^{-1}  =  \frac{1}{\lambda + 1} \Lambda_0  +\frac{\lambda}{\lambda + 1} \left(\Sigma_{\text{diag}}^{*} \right)^{-1}.
\end{equation*}
For any matrix norm $\|.\|$, the following bounds
  hold:
\begin{align*}
  \left\| \left[\frac{1}{\lambda + 1} \Lambda_0 + \frac{\lambda}{\lambda + 1} \Sigma_{0, \text{diag}}^{-1}\right]^{-1} -\Sigma_0 \right\| \leq \|\Sigma^*-\Sigma_0 \|  \leq \left\| \left[\frac{1}{\lambda + 1} \Lambda_0 + \frac{\lambda}{\lambda + 1}   \Lambda_{0, \text{diag}}\right]^{-1} -\Sigma_0\right\|.
\end{align*}
\end{proposition}
The proof can be found in \Cref{sect-proof-Gaussian}. Our result shows that $\Lambda^*$ is a convex combination
of the true precision $\Lambda_0$ and the inverse of the variational
marginal variances. As
the regularizer $\lambda \to \infty$, the off-diagonal elements of
$\Lambda^*$ converge to $0$ while the diagonal elements approach those
of $\Lambda_0$.

The weight $\lambda$ controls the approximation error of a variational posterior covariance by combining the marginal precisions of the exact posterior and the mean-field precision with weights \( \frac{1}{\lambda + 1} \) and \( \frac{\lambda}{\lambda + 1} \), respectively. For any $\lambda < \infty$, the $\Xi$-variational posterior offers a tighter approximation than the naive mean field. To see this, we note that $\kl{\rmq_\lambda^*}{\rmq_0^*} \leq \kl{\rmq_\lambda^*}{\rmq_0^*} + \lambda \Xi(\rmq_\lambda^*) \leq \kl{\rmq_\infty^*}{\rmq_0^*}$, where $\Xi(\rmq_\lambda^*) \geq 0$ implies the first inequality and the optimality of $\rmq_\lambda^*$ implies the second inequality.

As a concrete demonstration of these ideas, we study a bivariate Gaussian posterior. Here, the $\Xi$-variational posterior has an analytical solution that can be exactly computed (see \Cref{prop:mGaussian2} in the \Cref{sect-proof-Gaussian}).

\Cref{fig:mGaussian-1}illustrates the interpolation, where the regularization downweights the off-diagonal entries of the precision matrix by a factor of $1/(\lambda+1)$. It shows
$\hat{\rmq}_\lambda$ fitted to a bivariate Gaussian, for different
values of $\lambda$. The left panel shows $\hat
\rmq_\lambda$ as a smooth interpolation between the true posterior and
the mean-field variational posterior. Increasing $\lambda$ smoothly reduces posterior dependence, with a sharp structural change only at $\lambda = \infty$. The right panel paints a
quantitative picture of this interpolation: when
$\lambda \leq 10^{-1}$, the
$\Xi$-variational posterior closely approximates the covariance values
of the exact bivariate Gaussian posterior. For $\lambda \geq
10^{1}$, the covariance is close to zero, which indicates proximity to
the mean-field variational posterior. Both plots suggest that
$\hat
\rmq_\lambda$ undergoes a "phase transition" phenomenon at
$\lambda \in [10^{-1}, 10^1]$.

\begin{figure}[t]
\centering
\includegraphics[width=0.8\textwidth, height=0.4\textwidth]{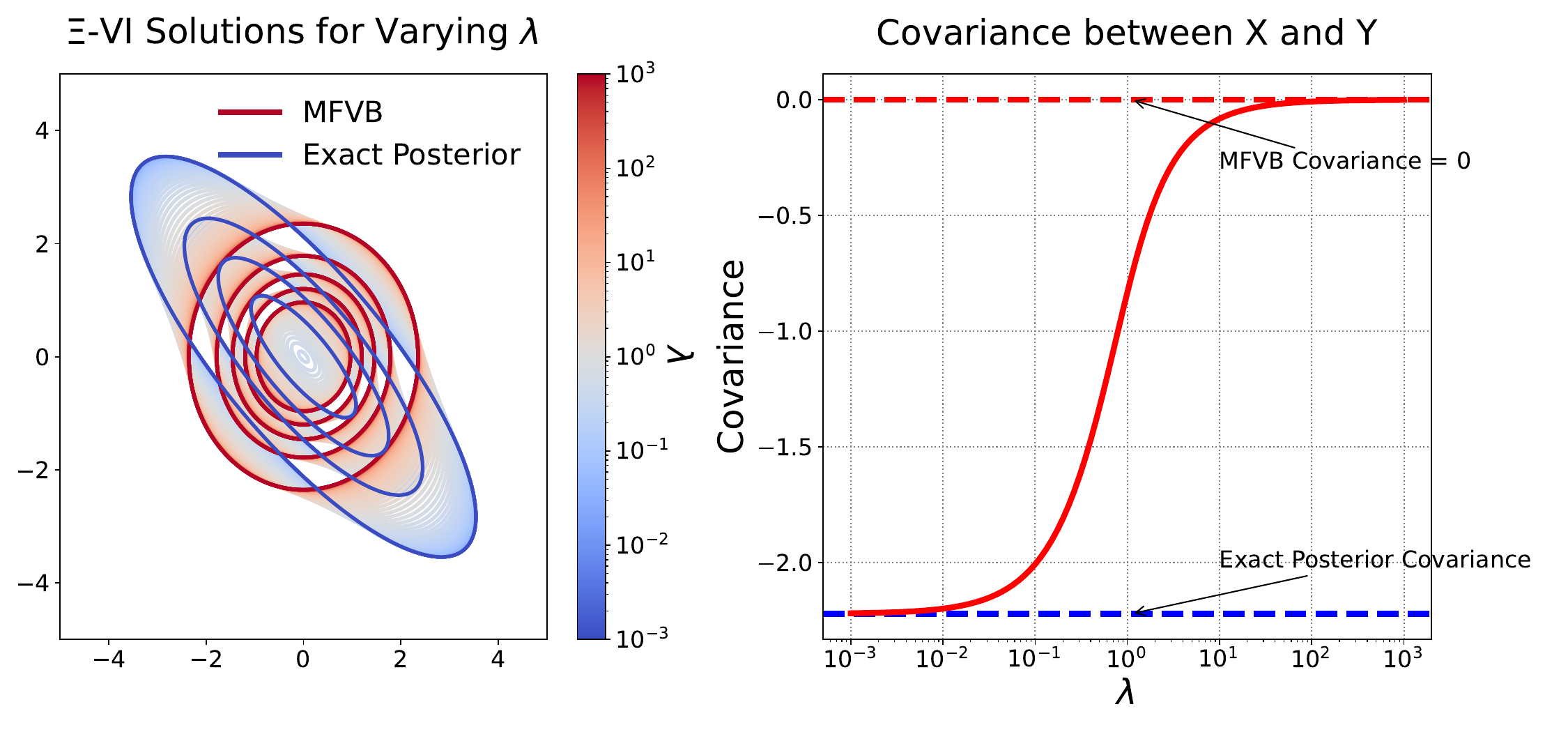}
\caption{$\Xi$-VI solutions for a bivariate Gaussian posterior for varying
  $\lambda$. The left panel illustrates the transition of the
  variational posterior $\rmq^*_\lambda$ from closely
  approximating the exact posterior (at low $\lambda$) to
  resembling the mean-field approximation (at high $\lambda$). The
  right panel shows the covariance between the two normal
  coordinates versus $\lambda$ on a log scale. Note that the
  $\Xi$-variational approximation to the covariance is very
  accurate up to a critical $\lambda$ ($\approx 10^{-1}$), after
  which it degrades rapidly to $0$.}
\label{fig:mGaussian-1}
\end{figure}

\subsection{Bayesian Linear Regression with Laplace Prior}
\label{example:linear-model}

$\Xi$-VI involves a tradeoff between statistical accuracy and computational complexity: as the regularization increases away from the mean-field solution, the quality of VI approximation improves at the cost of increased computational complexity.

To study this,
we consider a a Bayesian linear model with Laplace prior,
\begin{equation}  \label{eqn-linear-experiment}
    \by = \bX \theta + \epsilon, \quad \epsilon \sim N(0, \sigma^2 I_n), \quad \theta_i \sim \text{Laplace}(0, 1).
\end{equation}
The Laplace prior has density $\pi(\theta_i) = \frac{1}{2b}\exp\left(-\frac{|\theta_i|}{b}\right)$.

We simulate a dataset consisting of \(n = 100\) observations and \(d = 12\) features. The true regression coefficients is drawn randomly from a $12$-dimensional standard Gaussian distribution, and $\sigma^2 = 1$. Columns $(1, 2, 3, 8, 9)$ of $\bX$ are generated from a standard Gaussian distribution. Then we set each of features $(4, 5, 6, 11, 12)$ equal to each of features $(1, 2, 3, 8, 9)$ plus a standard Gaussian noise. This setup aims to simulate realistic multicollinearity. Finally, we generate the response $\by$ using model~\eqref{eqn-linear-experiment}. With this simulated data, we calculate an ``exact'' posterior with a long-run MCMC algorithm of 3,000 iterations. The MCMC draws produce an $\hat R$ of below $1.01$ across coefficients \citep{Gelman1995}, which is below the typical threshold of $1.1$ for satisfactory mixing.

Since coupling all $12$ coefficients is computationally expensive, we couple groups of coefficients in the EOT step. We adopt a naive grouping approach where features $(1, 2, 3)$, $(4, 5, 6)$, $(7, 8, 9)$, $(10, 11,12)$ are grouped together. This effectively reduces the computational cost by reducing a twelve-dimensional coupling problem into a four-dimensional one. While it is beneficial to use an informed grouping, any choice of grouping will improve the approximation accuracy of MFVI.

For each dimension, we use \( N = 20 \) points to represent each pseudomarginal. Recent work by \cite{frazier2024exact} shows that generalized Bayes posteriors can be sensitive to the number of pseudo-samples used in stochastic likelihood estimation. In our setting, the posterior $\hat{\rmq}_\lambda$ depends on the EOT potentials computed via the Sinkhorn algorithm (\Cref{eqn-EOT-solution}). To assess this sensitivity, \Cref{tab:xi-vi-sensitivity} in \Cref{sect-additional-simulation} reports the approximation quality of $\Xi$-VI for various $N$ under $\lambda = 10$. Larger $N$ offers only modest accuracy gains while the memory cost scales as $O(N^{D})$ (see \Cref{Monte Carlo step}) and runtime increases sharply. Empirically, $N = 20$ is sufficient to accurately compute $\Xi$-VI under a reasonable computational budget.

With this simulated data, we use \Cref{alg-approx} to compute the $\Xi$-VI approximation. In the first step, we use expectation propagation (EP)  to compute the pseudomarginals. For the analysis, we chose 100 \(\lambda\) values on a logarithmic scale from $10^{-4}$ to $10^6$, and represented the variational posterior for each \(\lambda\) by $2,000$ sample points.
\begin{figure}[t]
\centering
\begin{subfigure}{0.53\textwidth}
\includegraphics[width=\textwidth, height=5cm]{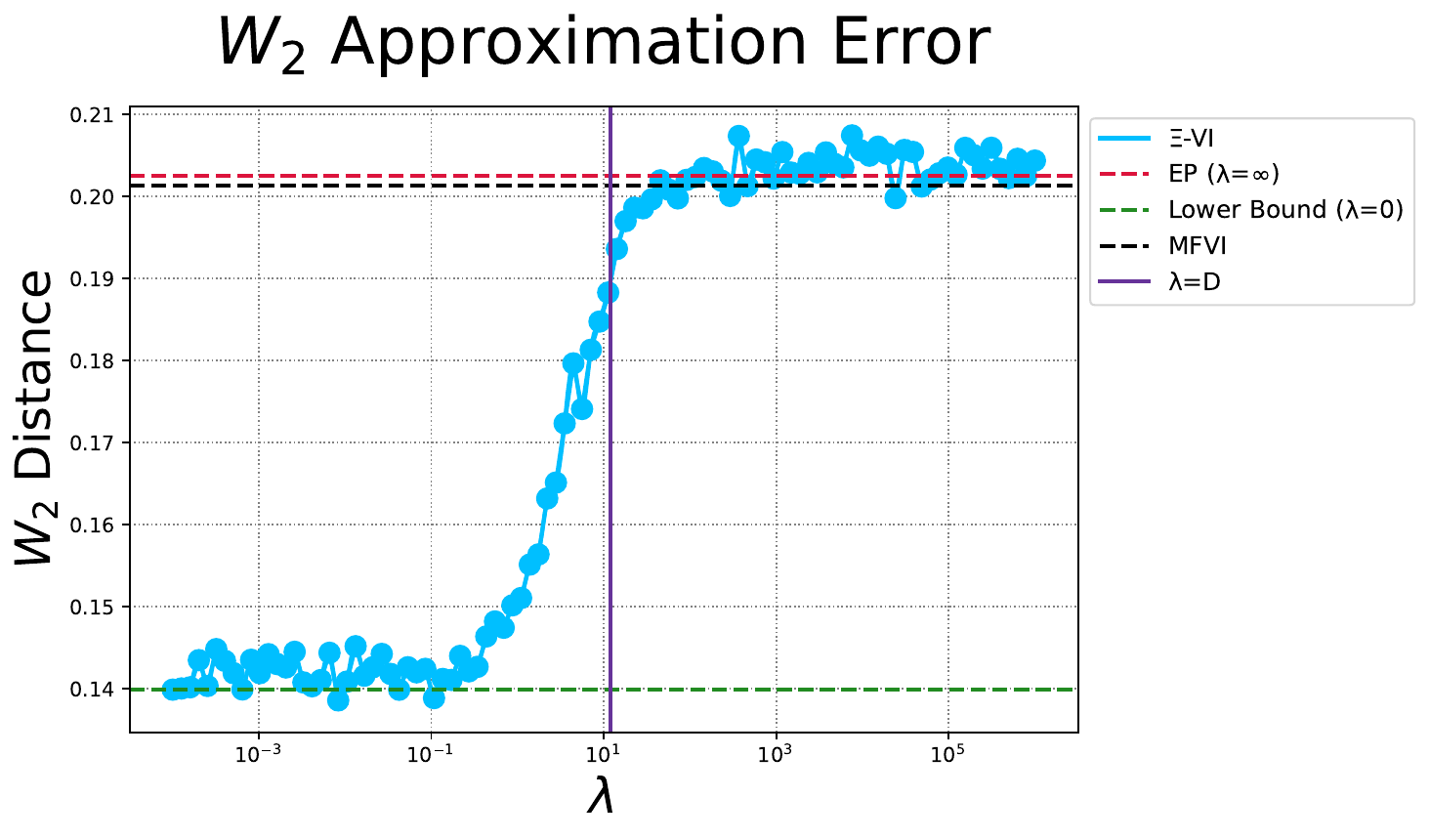}
\label{fig:laplace_W2}
\caption{}
\end{subfigure}
\hfill
\begin{subfigure}{0.46\textwidth}
\centering
\includegraphics[width=\textwidth, height=5cm]{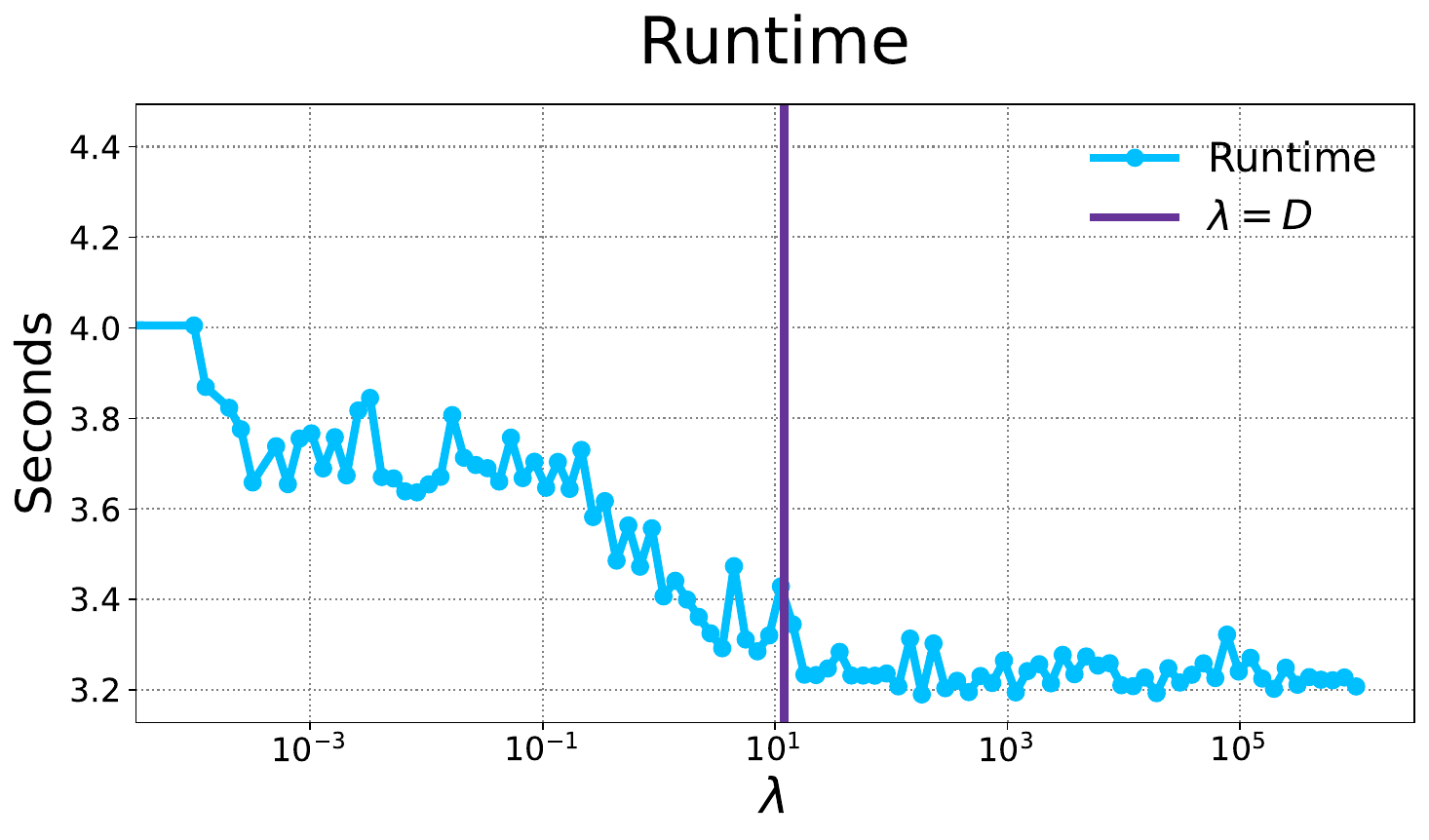}
\label{fig:laplace_runtime}
\caption{}
\end{subfigure}
\caption{\textbf{Left.} accuracy of $\Xi$-VI for Laplace linear regression, measured in $W_2$ across values of $\lambda$. \textbf{Right.} runtime of $\Xi$--VI for Laplace linear regression with a Sinkhorn error of $10^{-5}$, measured in seconds across values of $\lambda$.}
\label{fig:laplace_comparison}
\end{figure}
\Cref{fig:laplace_comparison}(a) shows the approximation errors of $\Xi$-VI as a
function of \(\lambda\), measured using the Wasserstein distance
(\(W_2\)). These distances are computed between the posterior
distributions sampled via MCMC and those obtained from $\Xi$-VI. The
$\Xi$-VI approximation errors are benchmarked against the baseline
errors of EP at \(\lambda = \infty\), mean-field VI, and the
theoretical lower bound at \(\lambda = 0\). A vertical line at
\(\lambda = D\), the number of features, marks an inflection point
where the posterior variational approximation error transitions from
rapidly converging to the EP error (\(\lambda \leq D\)) to relatively stable (\(\lambda > D\)).

\Cref{fig:laplace_comparison}(b) reports the runtime of the approximate
coordinate-ascent algorithm for Laplace linear regression, measured in seconds until
convergence. The $\lambda$ values are displayed on a logarithmic scale to highlight the performance over several orders of magnitude. The runtime decreases sharply for
$\lambda \le D$ and stabilizes once $\lambda > D$. The inflection in both the approximation error and
runtime plots suggests that a regularization strength around \(\lambda = D\) offers an balanced tradeoff between
approximation accuracy and computational complexity.

\subsection{Hierarchical Model}\label{example:Eight-school}

The 8-schools model (\cite{Gelman1995}, Section 5.5) is a classical example of a hierarchical Bayesian model. Each of the 8 schools run a randomized trial to assess the effect of tutoring on a standardized test. Each school provides separate estimates for the mean $y_i$ and standard deviation $\sigma_i$ of their respective treatment effects.

Let $\theta_j$ be the treatment effect in school $j$. We treat the outcomes from each school as independent:
\begin{equation} \label{eqn:school-1}
\begin{aligned}
  y_j | \theta_j &\sim N(\theta_j, \sigma_j^2), \quad \theta_j | \mu, \tau \sim \mathcal{N}(\mu, \tau^2), \quad 1 \leq j \leq 8, \\
  \mu&\sim N(0, 5), \quad \tau \sim \text{halfCauchy}(0, 5).
\end{aligned}
\end{equation}
where \( \mu \) and \( \tau \) are the global parameters common to all schools, $\theta_j$ is a local parameter specific to school $j$. The target of posterior inference are $\{\theta_j\}_{j=1}^{8}$, $\mu$ and $\tau^2$.

To match the $\Xi$-VI formulation in \Cref{sect-eot-derivation}, we define $ z_j := (\theta_j-\mu)/\tau$ and rewrite the model as follows:
\begin{equation}
  \begin{aligned}
    y_j &\mid \mu,  z_j, \tau \sim N(\mu + \tau z_j, \sigma_j^2), \\
    z_j &\sim N(0,1), \quad \mu \sim N(0, 5), \quad \tau \sim \text{halfCauchy}(0, 5).
  \end{aligned}
\end{equation}
This reparameterization transforms the joint prior of $z_j$'s, $\mu$,
and $\tau$ into a product distribution.

We apply \Cref{alg-approx} to solve the $\Xi$-VI problem for this model, expressed as:
\begin{equation} \label{Eight-school-vi}
\rmq_\lambda^* \in  \argmin_{\rmq(z, \mu, \tau)} \E_{\rmq}\left[\sum_{j = 1}^8 \frac{(y_j-\tau z_j-\mu)^2}{2\sigma_j^2} \right] + \KL(\rmq \parallel \pi) + \lambda \Xi(\rmq).
\end{equation}

In the first step, we use automatic differentiation variational inference (ADVI, \citep{Kucukelbir2017,Carpenter2015}) to compute a set of pseudomarginals. In the second step, we use the Sinkhorn algorithm to solve the EOT problem:
\begin{equation} \label{Eight-school-eot}
 \rmq_\lambda^*  =  \argmin_{\rmq(z, \mu, \tau) \in \C(\rmq^*_\infty)} \E_\rmq \left[\sum_{j = 1}^8 \frac{\tau^2 z_j^2 + 2(\mu-y_j)\tau z_j}{2\sigma_j^2}  \right] + (\lambda + 1) \KL(\rmq \parallel \hat m_\lambda).
\end{equation}
The problem~\eqref{Eight-school-eot} is solvable in polynomial time using the Sinkhorn algorithm, as detailed in \Cref{prop:Sinkhorn1} of \Cref{sect-implementation}. Assumption 1 of \Cref{prop:Sinkhorn1} is upheld due to efficient storage of the cost tensor as third-order tensors. Ultimately, we derive the joint distribution $\rmq_\lambda^*(\theta_1, \cdots, \theta_8, \mu, \tau)$ by setting $\theta_j = \mu + \tau z_j$ based on the optimal coupling in \eqref{Eight-school-eot}.

To benchmark the performance of our VI methods, we compute the true posterior using MCMC draws with 4 chains for 1000 tune and 5000 draw iterations.  For each of the VI methods, we represent the approximate posterior with $10,000$ sample points.

$\Xi$-VI captures the dependency among the variables in the posterior.
\Cref{fig:school-joint} compares the strength of association between $\theta_1$ and $\theta_7$ under the true posterior, mean-field variational posterior and $\Xi$-variational posteriors when $\lambda \in \{0, 1,10, 1000\}$. The true posterior shows a strong positive correlation between $\theta_1$ and $\theta_7$, which is effectively captured by $\Xi$-VI at small $\lambda$. As $\lambda$ increases, the correlation decreases to the MFVI level that attains a slope estimate of $0.19$.
\begin{figure}[t]
\centering
\includegraphics[width=0.8\textwidth]{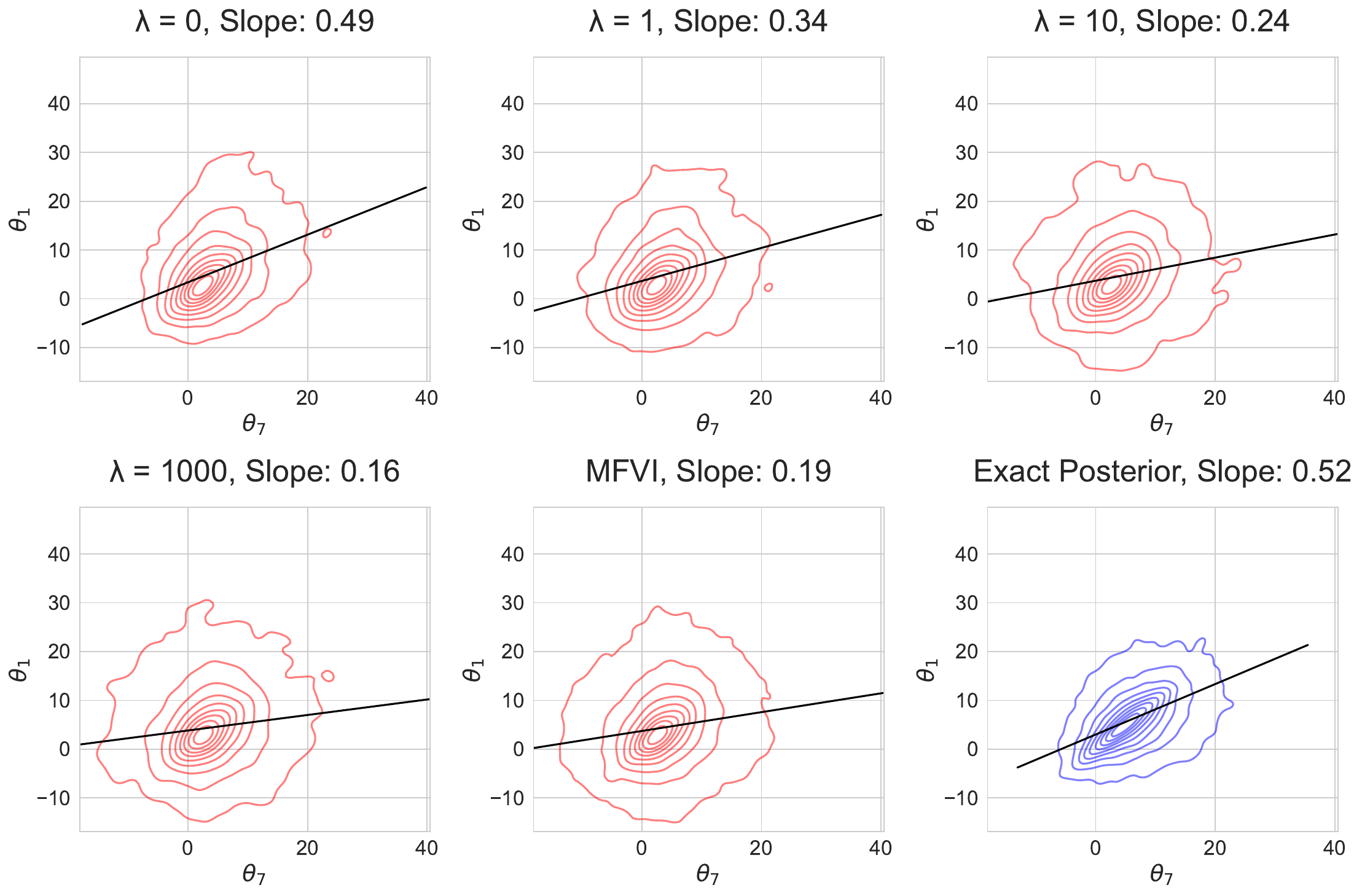}
\caption{
Contour plots for the joint distribution of \(\theta_1\) and $\theta_7$ across various variational approximation of the Eight School model. The subplots compare the exact posterior distribution with $\Xi$-variational posteriors for varying \(\lambda\) values, and the MFVI approximation. A linear regression fitted slope of $\theta_7$ over $\theta_1$ is provided for each subplot. Each subplot includes a linear regression line showing the fitted slope of \(\theta_7\) over \(\theta_1\).}
\label{fig:school-joint}
\end{figure}

$\Xi$-VI excels in inference that involves multiple variables in the posterior. \Cref{fig:max_min_treatment} illustrates credible intervals for maximum and minimum treatment effects across schools, comparing $\Xi$-VI with MFVI, normalizing flow variational inference (NFVI), Stein variational gradient descent (SVGD), and full-rank ADVI. $\Xi$-VI achieves more accurate interval width and coverage accuracy for both  max and min treatment effects compared to other VI methods. Specifically, for the maximum treatment effect, while MFVI, NFVI, and full-rank ADVI produce overly large or small intervals, SVGD results in overly small intervals. In contrast, $\Xi$-VI closely approximates the true 95\% posterior credible interval. For the minimum treatment effect, none of the VI methods precisely capture the true posterior interval. MFVI, NFVI, and full-rank ADVI produce intervals with a downward-shifted center, SVGD offers considerably undersized intervals, and $\Xi$-VI generates reasonably-sized intervals with less downward shift compared to MFVI.
\begin{figure}[t]
\centering
\includegraphics[width=1\textwidth]{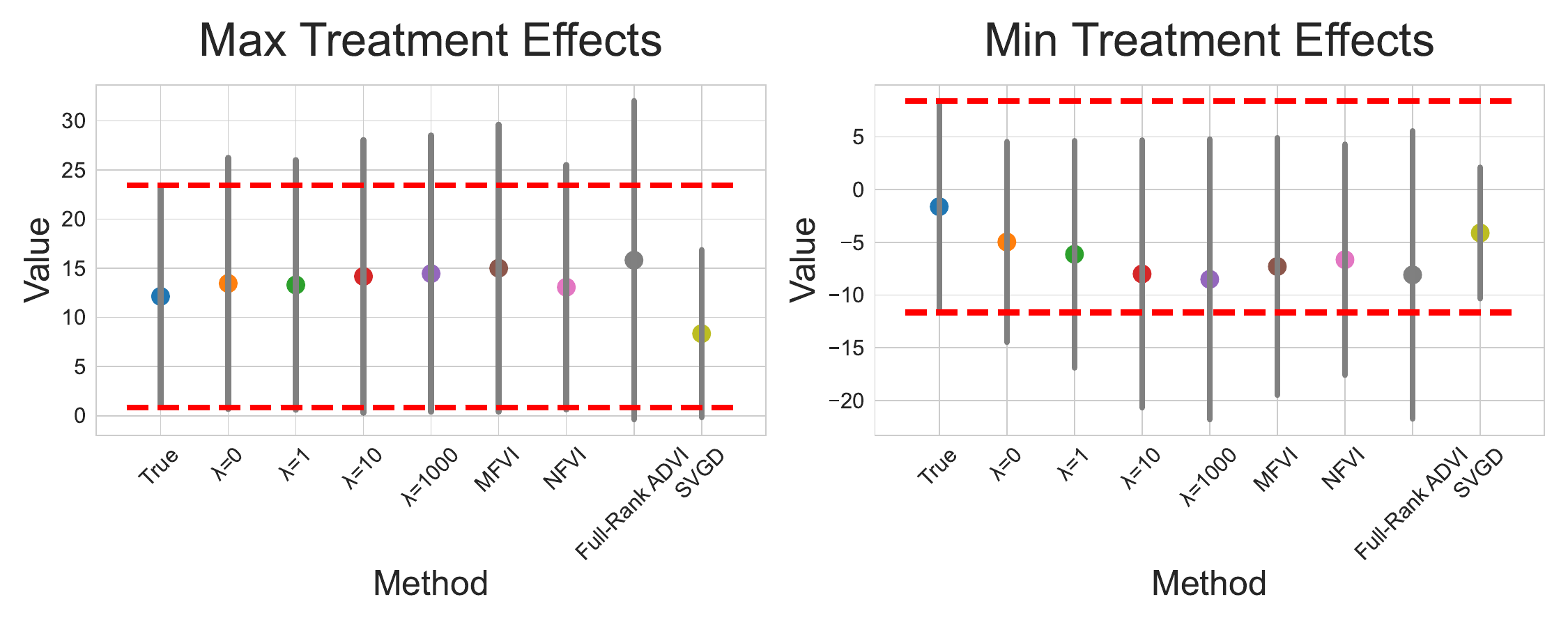}
\caption{Comparison of the 95\% posterior credible intervals for the maximum and minimum treatment effects across schools in the Eight School model. The sequence from left to right includes the exact posterior, $\Xi$-VI with $\lambda \in \{0,1,10,1000\}$, MFVI, normalizing flow (NFVI), full-rank ADVI (Full-rank ADVI), and Stein variational gradient descent (SVGD).}
\label{fig:max_min_treatment}
\end{figure}

Now we show the computation--statistical tradeoff of $\Xi$-VI in the 8-schools model.
We evaluate our procedure on 100 $\lambda$ values logarithmically spaced from $10^{-3}$ to $10^5$.
\Cref{fig:eight-school-comparison}(a) illustrates the approximation errors of the $\Xi$-variational posterior relative to the exact posterior, measured using KL divergence and $W_2$ distance.
These errors are benchmarked against those of MFVI at $\lambda = \infty$ and a theoretical lower bound at $\lambda = 0$.
A vertical line at $\lambda = D = 10$ marks a critical transition: errors remain relatively stable for $\lambda < 1$ and approach MFVI for $\lambda \geq 100$.
Notably, the normalizing-flow VI also performs reasonably well for this model and matches the performance of $\Xi$-VI at $\lambda = 1$ in $W_2$ distance and at $\lambda = D$ in KL distance.
\Cref{fig:eight-school-comparison}(b) shows the runtime of
\Cref{alg-approx} for the 8-schools model, measured in seconds to reduce the Sinkhorn error (\Cref{alg-Sinkhorn1}) below $10^{-5}$. The regularization strength $\lambda$ is plotted on a logarithmic scale. The plot shows a sharp decline right before and right after $\lambda = D$. The phase transition in both plots confirms that a choice of \(\lambda = D\) offers a balance in the tradeoff between computational efficiency and approximation accuracy.  However, a computational-statistical gap exists in this model: while \(\lambda < 1\) yields a closer approximation to the exact posterior, optimal runtime is only achieved for \(\lambda > 10\).

\begin{figure}[t]
\centering
\begin{subfigure}{0.6\textwidth}
\includegraphics[width=\linewidth, height=5cm]{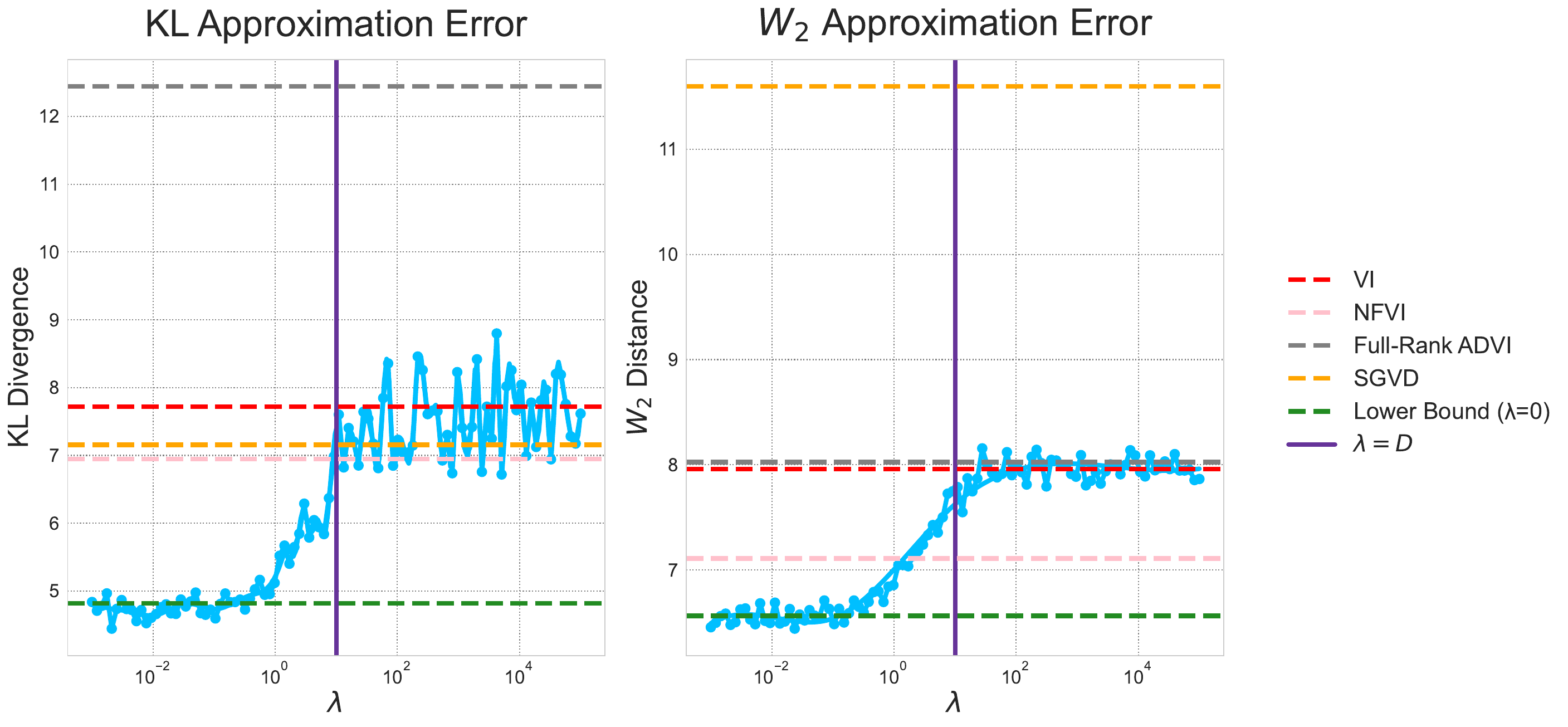}
\caption{}
\label{fig:school_klw2}
\end{subfigure}
\hfill
\begin{subfigure}{0.39\textwidth}
\centering
\includegraphics[width=\linewidth, height=5cm]{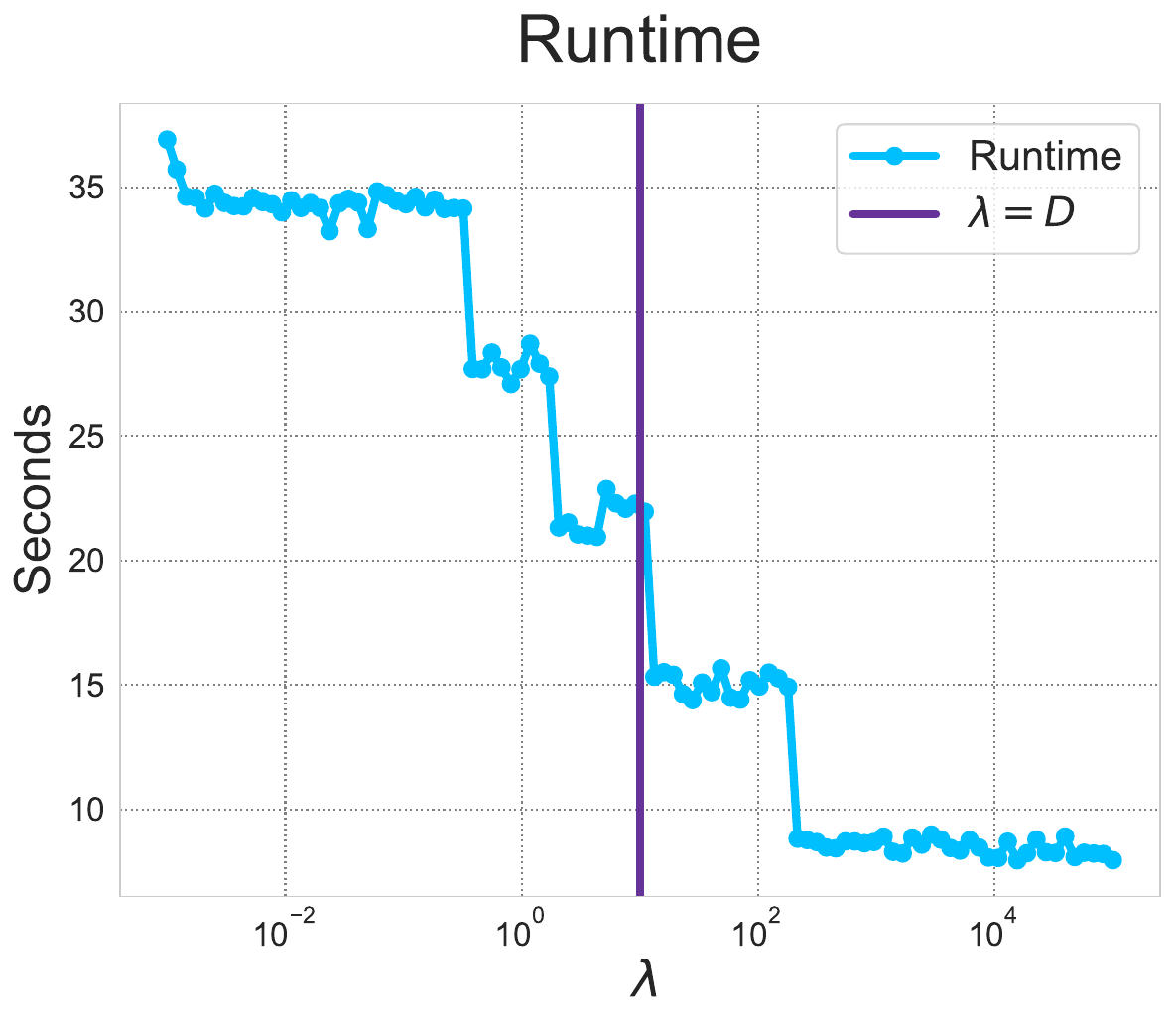}
\caption{}
\label{fig:school_runtime}
\end{subfigure}
\caption{\textbf{Left.} approximation accuracy for the Eight School model of $\Xi$-VI across varying $\lambda$ compared with other VI methods, measured in KL divergence and $W_2$ distance. \textbf{Right.} runtime for the Eight School model as a function of varying $\lambda$  with a Sinkhorn error of $10^{-5}$, measured in seconds.}
\label{fig:eight-school-comparison}
\end{figure}

Finally, note that in the 8-schools model, the MFVI produces overdispersed results after we apply the reparametrization. Generally, we recommend using overdispersed pseudomarginals in \Cref{alg-approx}. The advantage comes from an intuitive understanding of the one-step EOT correction: it seeks overlaps between the pseudomarginals and the exact posterior to effectively capture the dependency information present in the exact posterior. When the pseudomarginals are underdispersed, the one-step EOT correction still leads to underdispersed samples. With overdispersed pseudomarginals, the one-step EOT coupling compensates for the overdispersion by subsampling points from the marginals that reflect the dependency structure of the exact posterior distributions, as seen in \Cref{fig:school-joint}.

\section{Asymptotic Theory}
\label{sect-theory}
In this section, we study the asymptotic theory of $\Xi$-variational posterior $\rmq^*_{\lambda_n}$ in two regimes.
\begin{itemize}
\item In the regime where the parameter dimension $D$ grows with $n$
(\Cref{sect-theory-hd}), we provide asymptotic approximation guarantees for
$\rmq^*_{\lambda_n}$ and identify the range of $\lambda_n$ for which $\Xi$-VI matches either exact posterior inference or a product distribution, under the assumption of a \emph{compact parameter space}.
\item In the finite-dimensional regime (\Cref{sect-theory-finite-D}), where $D$ is held fixed, we establish the full set of frequentist guarantees for $\Xi$-VI, including posterior consistency and a Bernstein--von Mises (BvM) theorem. Under standard regularity conditions and a potentially unbounded parameter space, we show that
$\rmq^*_{\lambda_n}$ converges to a Gaussian distribution as $n \to \infty$.
\end{itemize}
We set some notations for the theory.
For two positive sequences $a_n$ and $b_n$, we write
$a_n \lesssim b_n$ or $a_n = O(b_n)$ or $b_n \gtrsim a_n$ if there
exists a constant \(C > 0\) such that \(a_n \leq C b_n\) for all
\(n\). The constant $C$ does not depend on \(n\). The relation
$a_n \asymp b_n$ holds if both \(a_n \lesssim b_n\) and
\(b_n \lesssim a_n\) are true. We write \(a_n \prec b_n\) or
\(a_n = o(b_n)\) if \(a_n \leq c_n b_n\) for all \(n\), for some
sequence \(c_n\) that converges to zero, \(c_n \to 0\). We write
$a_n \succ b_n$ if $b_n = o(a_n)$.

Let $\P(\Theta)$ denote the set of probability measures on the Euclidean metric space
$(\Theta,\|\cdot\|)$ that admit Lebesgue densities and $\P_p(\Theta) := \{\rmq \in \P(\Theta): \E_\rmq[\| \theta \|^p]<
\infty \}$.  For $p \geq 1$, the ($p^{\text{th}}$)-Wasserstein distance is defined as $W_p(\rmq_0, \rmq_1) :=  (\inf_{\rmq \in \C(\rmq_0, \rmq_1)} \E_\rmq[\|X-Y\|^p])^{1/p}$. The space $(\P_2(\Theta), W_2)$ forms a metric space
\citep{Villani2009}. We use $\BW(\R^D)$ to denote the subspace of
$\P_2(\R^D)$ consisting of Gaussian distributions, known as the
Bures-Wasserstein space \citep{Bhatia2019}. We use $\cC^2(\Theta)$ to denote the space of twice continuously differentiable functions on $\Theta$.
\begin{assumption}[Standing Assumptions]\label{assumption:regularity}
Let $(\Theta,\|\cdot\|)$ be a metric space where $\Theta \subseteq \R^D$ is equipped with the Euclidean norm.
The prior $\pi$ admits a Lebesgue density of the form $\pi(\theta) = \exp(\nu_0(\theta))$ with
$\nu_0 \in \cC^2(\Theta)$, and the exact posterior $\rmq_0^*$ lies in the Wasserstein space
$(\P_2(\Theta), W_2)$.
\end{assumption}

We make explicit the dependence on $n$ of the regularizer $\lambda_n$
and the data $\bx^{(n)}$. Under this setup, the $\Xi$-variational
posterior is given by
\begin{equation}
  \label{def:theory-setup}
  \rmq^*_{\lambda_n} = \argmin_{\rmq \in \P_2(\Theta)} \E_\rmq\left[-\ell(\bx^{(n)}; \theta) \right] + \KL (\rmq \parallel \pi) + \lambda_n \Xi(\rmq).
\end{equation}

\subsection{Asymptotic Approximation Guarantees with Growing Dimension}\label{sect-theory-hd}
To explain the changes in approximation accuracy observed in
\Cref{fig:laplace_comparison}(a) and \Cref{fig:eight-school-comparison}(a), we develop
asymptotic approximation guarantees for the $\Xi$–variational posterior
$\rmq^*_{\lambda_n}$ in the regime where $D_n \to \infty$ as $n \to \infty$. Our primary goal is to
characterize the scaling behavior of the regularization parameter $\lambda_n$ under which
$\rmq^*_{\lambda_n}$ either behaves like a product distribution or like the exact posterior. We begin with a theorem for general models with compact parameter space and then focus on
the high-dimensional linear model.
\paragraph{General Case.}
For any function $f:\Theta \to \R$, let
$\omega_\Theta(f) := \sup_{\theta\in\Theta} f(\theta) - \inf_{\theta\in\Theta} f(\theta)$
denote its oscillation. We now state the main result.

\begin{theorem}\label{thm:theory-hd-1}
Suppose that \Cref{assumption:regularity} holds, and let $\Theta = [-1,1]^D$. Assume further that $\ell(\bx^{(n)};\cdot) \in \cC^2(\Theta)$. Define
\begin{equation}
a := \omega_\Theta(\ell(\bx^{(n)}; \theta)), \quad b_i := \omega_\Theta([\nabla \ell(\bx^{(n)}; \theta)]_i), \quad c_{ij} := \begin{cases}
\omega_\Theta\left([\nabla^2  \ell(\bx^{(n)}; \theta)]_{ij} \right) & \text{for } i = j, \\
\sup_{\theta \in \Theta} \left|[\nabla^2  \ell(\bx^{(n)}; \theta)]_{ij} \right| & \text{for } i \neq j.
\end{cases}
\end{equation}
When $\lambda_n \succ D^{-1/2}\max\left(\sqrt{a \sum_{i = 1}^D c_{ii}}, \sqrt{\sum_{i =1}^D b_i^2}, \sqrt{\sum_{i = 1}^D \sum_{j = 1}^D c_{ij}^2 },  D^{1/2}\right)$, there exists a sequence of product distributions $m_{\lambda_n}^*$ such that, for any $1$-Lipschitz function $\psi: \R \mapsto \R$, as $n \to \infty$, the $\Xi$-VI optimizer $\rmq_{\lambda_n}^*$ satisfies
\begin{equation}\label{eq-thm:theory-hd-1}
\sup_{\bx^{(n)} \in \mathbb{X}^n} \left|\frac{1}{D} \sum_{i = 1}^D \left( \E_{\rmq^*_{\lambda_n}}[\psi(\theta_i)]-\E_{m_{\lambda_n}^*}[\psi(\theta_i)] \right)\right|\overset{P_{\theta_0}}{\to} 0.
\end{equation}

When $\lambda_n \prec D \Xi^{-1}(\rmq^*_0)$, for any $1$-Lipschitz function $\psi: \R \mapsto \R$, as $n \to \infty$,
\begin{equation}\label{eq-thm:theory-hd-2}
\sup_{\bx^{(n)} \in \mathbb{X}^n}\E_{\rmq^*_{\lambda_n}} \left[ \left(\frac{1}{D} \sum_{i = 1}^D \psi(\theta_i)-\frac{1}{D} \sum_{i = 1}^D \E_{\rmq^*_0}[\psi(\theta_i)] \right)^2\right] \overset{P_{\theta_0}}{\to} 0.
\end{equation}
\end{theorem}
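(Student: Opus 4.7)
The approach is to compare $\rmq^*_{\lambda_n}$ against carefully chosen competitors in the penalized objective and translate the resulting divergence bounds into averaged Lipschitz control. Using the identity $F(\rmq) - F(\rmq^*_0) = \KL(\rmq \parallel \rmq^*_0)$ with $F(\rmq) := -\E_\rmq[\ell(\bx^{(n)};\theta)] + \KL(\rmq\parallel\pi)$, the $\Xi$-VI objective rewrites, up to an additive constant independent of $\rmq$, as $G(\rmq) = \KL(\rmq \parallel \rmq^*_0) + \lambda_n\,\Xi(\rmq)$.

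\emph{Regime $\lambda_n \succ D^{-1/2}\max(\cdots)$.} Let $m^*_{\lambda_n}$ denote the naive mean-field minimizer, $\argmin_{m \in \M(\Theta)} \KL(m \parallel \rmq^*_0)$. Optimality of $\rmq^*_{\lambda_n}$ together with $\Xi(m^*_{\lambda_n})=0$ gives
\[
\KL(\rmq^*_{\lambda_n} \parallel \rmq^*_0) + \lambda_n\,\Xi(\rmq^*_{\lambda_n}) \leq \KL(m^*_{\lambda_n} \parallel \rmq^*_0).
\]
The critical technical step is to bound the naive mean-field gap on the right by the quantities $a,b_i,c_{ij}$. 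A second-order Taylor expansion of $\ell$ around the mean-field argmin, justified by \Cref{assumption-hd:model}, combined with a cluster-expansion / interaction-peeling argument in the spirit of the nonasymptotic mean-field bounds of Lacker and of Mukherjee--Sen, yields a bound of the form $a\sum_i c_{ii} + \sum_i b_i^2 + \sum_{i,j} c_{ij}^2 + D$. Dividing by $\lambda_n$ and invoking the hypothesis gives $\Xi(\rmq^*_{\lambda_n})/D \to 0$. Since $\Xi(\rmq^*_{\lambda_n}) = \KL(\rmq^*_{\lambda_n} \parallel \prod_i (\rmq^*_{\lambda_n})_i)$, this already controls the proximity of $\rmq^*_{\lambda_n}$ to its own product of marginals; a separate comparison, combining the ELBO near-optimality of $\prod_i (\rmq^*_{\lambda_n})_i$ as a product competitor with the strong convexity of $F$ in KL geometry, then shows that the marginals $(\rmq^*_{\lambda_n})_i$ are close to those of $m^*_{\lambda_n}$ in KL. Pinsker and Cauchy--Schwarz then give
\[
\frac{1}{D}\sum_i \bigl|\E_{(\rmq^*_{\lambda_n})_i}[\psi] - \E_{(m^*_{\lambda_n})_i}[\psi]\bigr| \leq \sqrt{\tfrac{2}{D}\sum_i \KL\bigl((m^*_{\lambda_n})_i \parallel (\rmq^*_{\lambda_n})_i\bigr)} = o(1),
\]
which is \Cref{eq-thm:theory-hd-1}. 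Uniformity in $\bx^{(n)}$ is automatic because $a,b_i,c_{ij}$ are deterministic functionals of $\bx^{(n)}$ and $\ell$.

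\emph{Regime $\lambda_n \prec D\,\Xi^{-1}(\rmq^*_0)$.} Take the competitor to be $\rmq^*_0$ itself; optimality yields $\KL(\rmq^*_{\lambda_n} \parallel \rmq^*_0) \leq \lambda_n\,\Xi(\rmq^*_0) = o(D)$. Let $\bar g(\theta) := D^{-1}\sum_i \psi(\theta_i)$, which is $D^{-1/2}$-Lipschitz on $\Theta$ in the Euclidean norm. Under \Cref{assumption-hd:model}, the log-densities of $\rmq^*_0$ and $\rmq^*_{\lambda_n}$ are bounded-Hessian perturbations of a strictly log-concave reference on the cube $[-1,1]^D$, so a Holley--Stroock patch of the Bakry--Émery criterion supplies a dimension-free log-Sobolev inequality, and hence Talagrand's $T_2$, for $\rmq^*_0$. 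This yields $\var_{\rmq^*_0}(\bar g) \lesssim 1/D$ and, via Kantorovich--Rubinstein applied to the $D^{-1/2}$-Lipschitz $\bar g$, $|\E_{\rmq^*_{\lambda_n}}[\bar g]-\E_{\rmq^*_0}[\bar g]|^2 \lesssim W_2^2(\rmq^*_{\lambda_n},\rmq^*_0)/D \lesssim \KL/D = o(1)$. Decomposing $\E_{\rmq^*_{\lambda_n}}[(\bar g - c)^2] = \var_{\rmq^*_{\lambda_n}}(\bar g) + (\E_{\rmq^*_{\lambda_n}}[\bar g] - c)^2$ with $c = D^{-1}\sum_i \E_{\rmq^*_0}[\psi(\theta_i)]$ and transferring the variance term from $\rmq^*_0$ via the same Lipschitz transport argument applied to $(\bar g - c)^2$ (which is $O(D^{-1/2})$-Lipschitz on the bounded range of $\bar g$) closes \Cref{eq-thm:theory-hd-2}.

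\emph{Main obstacle.} The hardest step is the naive mean-field gap estimate in Part 1: a crude Taylor-based bound overestimates the interaction contribution by dimensional factors, so one must carefully separate the diagonal curvature (captured by $c_{ii}$ and the prior) from the off-diagonal curvature (captured by $c_{ij}$, $i\neq j$) via an interaction-peeling argument tuned to match the hypothesized rate on $\lambda_n$. A secondary subtlety is the dimension-free log-Sobolev inequality in Part 2: since $\ell$ is only bounded, not globally log-concave, one needs a Holley--Stroock patch over a log-concave reference to secure the $T_2$ transportation inequality.
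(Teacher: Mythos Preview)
Your Regime 2 argument is close to the paper's: both compare against $\rmq^*_0$ to get $\KL(\rmq^*_{\lambda_n}\parallel\rmq^*_0)\le\lambda_n\Xi(\rmq^*_0)=o(D)$, then pass to Lipschitz statistics by a transportation inequality. The paper, however, does not use a log-Sobolev inequality: it uses only that $\Theta=[-1,1]^D$ makes every measure $O(D)$-sub-Gaussian, hence the dimension-dependent $T_1$ inequality $W_1\lesssim\sqrt{D\cdot\KL}$ suffices. Your Holley--Stroock route is fragile here because the perturbation size is $\omega_\Theta(\ell)=a$, which is allowed to grow with $D$ under \Cref{assumption-hd:model}; the resulting LSI constant would be $e^{O(a)}$ and need not be dimension-free.

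The real gap is in Regime 1. Your variational comparison $\lambda_n\,\Xi(\rmq^*_{\lambda_n})\le\inf_{m\in\M}\KL(m\parallel\rmq^*_0)$ can at best yield $\Xi(\rmq^*_{\lambda_n})\le M/\lambda_n$ with $M$ the \emph{untempered} mean-field gap; to make $\Xi=o(D)$ you would need $\lambda_n\succ M/D$. But the hypothesis of the theorem is $\lambda_n\succ D^{-1/2}\sqrt{\,\cdot\,}$, i.e.\ $\lambda_n^2 D\succ(\cdot)$, a strictly weaker requirement in the regimes of interest (under the scalings of \Cref{cor:theory-hd-1}, your route would need $\lambda_n\succ D^2$ rather than $\lambda_n\succ D$). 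The square roots in the hypothesis are not an artifact: they arise because the paper does \emph{not} bound the mean-field gap of $\rmq^*_0$ at all. Instead it uses the EOT structure (\Cref{thm:MEOT-struct} and \Cref{prop:fixed-point}) to rewrite $\rmq^*_{\lambda_n}$ itself as a Gibbs measure with the $(\lambda_n{+}1)$-tempered potential $\tilde\ell(\bx^{(n)};\theta)/(\lambda_n{+}1)$ against a product reference $\tilde m_{\lambda_n}$, and then applies Yan's nonlinear large-deviation bound to $\rmq^*_{\lambda_n}$ directly. Because the oscillation and derivative bounds of the tempered potential are $a/(\lambda_n{+}1),\,b_i/(\lambda_n{+}1),\,c_{ij}/(\lambda_n{+}1)$, the leading terms in Yan's estimate carry $(\lambda_n{+}1)^{-2}$ factors, which is exactly what produces the square-root threshold. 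Your ``interaction-peeling on $\rmq^*_0$'' step cannot see this tempering and therefore cannot reproduce the stated rate; you would need to work with the tempered representation of $\rmq^*_{\lambda_n}$ rather than with $\rmq^*_0$.
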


The compactness of the parameter space is essential for establishing approximation guarantees for variational
inference in high dimensions. This assumption and the proofs in
\Cref{proof-theory-hd} build on the theory of nonlinear large deviations
\citep{Chatterjee2014,Yan2020}. The choice $\Theta = [-1,1]^D$ and corresponding tools have been
adopted in the theoretical analysis of mean-field
variational inference for Potts models \citep{Basak2017}, linear models
\citep{Mukherjee2022,Mukherjee2023}, and, in a related form, for latent variable
models \citep{zhong2025variational}. Our results also extend to any compact subset $\Theta$ of $\R^D$.

As $\theta \in [-1,1]^D$, all first- and second-order derivatives of $\ell(\bx^{(n)}; \cdot)$ are
uniformly bounded, which controls the oscillation quantities $a$,
$b_i$, and $c_{ij}$. For unbounded parameter spaces, alternative results
are available under stronger shape constraints—when both the likelihood and
the prior are log-concave \citep{Lacker2022}. We do not pursue this direction here to keep the presentation focused.

When $D$ is fixed, the convergence in \Cref{eq-thm:theory-hd-1} and \Cref{eq-thm:theory-hd-2} imply that the
expectations of the averages $\frac{1}{D}\sum_{i=1}^D \psi(\theta_i)$ under the pairs of
posteriors $(\rmq^*_{\lambda_n}, m_{\lambda_n}^*)$ and $(\rmq^*_{\lambda_n}, \rmq^*_0)$
are the same up to an asymptotically negligible error, respectively. When $D$ is fixed and $\Theta$ is compact, the results follow directly whenever the variational posteriors
$\rmq^*_{\lambda_n}$, $m_{\lambda_n}^*$, and $\rmq^*_0$ are \emph{uniformly (weakly)
consistent} at the true parameter $\theta_0$, since weak consistency is equivalent to
convergence in the bounded–Lipschitz metric (see Remark 6.3 of \cite{Ghosal2017}).

\Cref{eq-thm:theory-hd-1} defines a \textit{mean-field regime}, where a product measure matches $\rmq^*_{\lambda_n}$ in any $1$-Lipschitz statistic (first-order statistics). This regime characterizes when $\Xi$-VI can be replaced by MFVI. The critical scaling term in the threshold is $\sqrt{\sum_{i, j} c_{ij}^2}$, as the other terms are typically well controlled. Roughly, the equivalence between $\Xi$-VI and MFVI is determined by comparing $\lambda_n$ to the $(D^{-1/2})$-scaled Frobenius norm of the Fisher information.

\Cref{eq-thm:theory-hd-2} defines a \textit{Bayes optimal regime}, where $\Xi$-VI asymptotically recovers  $1$-Lipschitz statistic of the exact posterior. If the dimension $D = O(1)$ as $n$ increases and the exact posterior achieves consistency, then $\Xi(\rmq^*_0)$ converges to zero, and \Cref{eq-thm:theory-hd-2} holds for any bounded sequence of $\lambda_n$. When $D$ grows with $n$ but at a slow rate (e.g. $D \lesssim n^{-1/3}$), we may still expect a form of posterior consistency to hold the Bayesian optimal regime to contain non-trivial choices of $\lambda_n$.

To match the computational complexity in \Cref{sect-implementation}, we provide sufficient conditions for $ \lambda_n \succ D$ to be in the mean-field regime.
\begin{corollary} \label{cor:theory-hd-1}
In the setting of \Cref{thm:theory-hd-1}, if $a \lesssim D, b_i \lesssim D, c_{ii} \lesssim D$ for $i \in [D]$ and $c_{ij} \lesssim 1$ for $i \neq j$,
then for $\lambda_n \succ D$, there exists a product distribution $m_{\lambda_n}^*$ such that, for any $1$-Lipschitz function $\psi: \R \mapsto \R$, as $n \to \infty$,
\begin{equation}\label{eq-cor:theory-hd-3}
    \sup_{\bx^{(n)} \in \mathbb{X}^n} \left|\frac{1}{D} \sum_{i = 1}^D \left( \E_{\rmq^*_{\lambda_n}}[\psi(\theta_i)]-\E_{m_{\lambda_n}^*}[\psi(\theta_i)] \right)\right| \overset{P_{\theta_0}}{\to} 0.
\end{equation}
\end{corollary}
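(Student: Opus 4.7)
The plan is to derive this corollary as a direct specialization of \Cref{thm:theory-hd-1}. The conclusion \Cref{eq-cor:theory-hd-3} is word-for-word the conclusion \Cref{eq-thm:theory-hd-1}, so the whole task reduces to checking that the four-term threshold
\begin{equation*}
D^{-1/2}\max\!\left(\sqrt{a \sum_{i=1}^D c_{ii}},\ \sqrt{\sum_{i=1}^D b_i^2},\ \sqrt{\sum_{i=1}^D \sum_{j=1}^D c_{ij}^2},\ D^{1/2}\right)
\end{equation*}
appearing in the hypothesis of \Cref{thm:theory-hd-1} is itself $O(D)$ under the scaling $a \lesssim D$, $b_i \lesssim D$, $c_{ii} \lesssim D$, and $c_{ij} \lesssim 1$ for $i \neq j$; the assumption $\lambda_n \succ D$ will then be strong enough to trigger the mean-field conclusion.

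First I would bound each of the four quantities inside the maximum. For the first, $a \lesssim D$ and $c_{ii} \lesssim D$ give $a\sum_i c_{ii} \lesssim D\cdot D\cdot D = D^3$, so $\sqrt{a\sum_i c_{ii}} \lesssim D^{3/2}$. For the second, $\sum_i b_i^2 \lesssim D\cdot D^2 = D^3$, hence $\sqrt{\sum_i b_i^2} \lesssim D^{3/2}$. For the third I would split the double sum into diagonal and off-diagonal parts,
\begin{equation*}
\sum_{i=1}^D \sum_{j=1}^D c_{ij}^2 = \sum_{i=1}^D c_{ii}^2 + \sum_{i \neq j} c_{ij}^2 \lesssim D\cdot D^2 + D(D-1)\cdot 1 \lesssim D^3,
\end{equation*}
so $\sqrt{\sum_{i,j} c_{ij}^2} \lesssim D^{3/2}$. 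The fourth term is trivially $D^{1/2} \leq D^{3/2}$. Taking the maximum and multiplying by $D^{-1/2}$ gives an overall threshold of order $D$.

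Since by hypothesis $\lambda_n \succ D$, the regularizer asymptotically dominates this threshold (the constants absorbed into the $\succ$ relation), so the hypotheses of \Cref{thm:theory-hd-1} are met and \Cref{eq-thm:theory-hd-1} yields the existence of the product distributions $m^*_{\lambda_n}$ satisfying \Cref{eq-cor:theory-hd-3}. There is no substantive obstacle: the content of the corollary is simply to record the combined scaling that collapses the critical term $D^{-1/2}\sqrt{\sum_{i,j} c_{ij}^2}$ to $O(D)$. Under the stated assumptions the diagonal contribution $\sum_i c_{ii}^2 \lesssim D^3$ and the off-diagonal contribution $\sum_{i\neq j} c_{ij}^2 \lesssim D^2$ are balanced so that the diagonals (the coordinate-wise curvatures of $\ell$) dictate the bound; if one instead allowed $c_{ij}$ to scale faster than a constant for $i\neq j$, or $c_{ii}$ to scale faster than $D$, the threshold would exceed $D$ and a correspondingly larger $\lambda_n$ would be required.
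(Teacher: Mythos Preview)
Your proposal is correct and follows the same approach as the paper: you verify that each of the four terms inside the maximum is $\lesssim D^{3/2}$ under the stated scalings, so that the full threshold $D^{-1/2}\max(\cdots)$ is $\lesssim D$, and then invoke \Cref{thm:theory-hd-1} directly. The paper's own proof is essentially a one-line version of your computation (and in fact your write-up is slightly more careful, as the paper's displayed bound on the third term omits the $D^{-1/2}$ factor).
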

The result establishes the asymptotic equivalence between $\rmq_{\lambda_n}^*$ and a product measure for $\lambda \succ D$, which provides the following computational insight: when $\lambda_n$ is large, the $\Xi$-variational posterior can be replaced by the mean-field approximation to the posterior. To meet the assumptions of \Cref{cor:theory-hd-1}, it suffices that 1) the gradient and diagonal Hessian of the log-likelihood scale slower than \(D\) entry-wise and 2) the off-diagonal Hessian is uniformly bounded.

Now we consider the example of high-dimensional linear regression models.

\paragraph{High-Dimensional Linear Model.} We observe $\{(x_i, y_i): 1 \leq i \leq n\}$, $y_i \in \R$, $x_i \in \R^D$. Let $\by = [y_1, \cdots, y_n]\in \mathbb{R}^n$ and $\bX = [x_1, \ldots, x_n]^T \in \R^{n \times D}$. We consider a high-dimensional Bayesian linear regression model where both $n, D$ are tending to infinity:
\begin{equation}\label{eqn-linear-1}
    \by = \bX \theta + \epsilon, \quad \epsilon \sim N(0, \sigma^2 I_n), \quad \theta \sim \pi.
\end{equation}
The $\Xi$-VI problem for Bayesian linear model is given by
\begin{equation}\label{eq:theory-gauss-1}
\rmq^*_{\lambda_n} = \arg\min_{\rmq \in \P_2(\Theta)} \E_\rmq\left[\frac{\|\by-\bX \theta\|^2}{2 \sigma^2}\right] + \KL(\rmq \parallel \pi) + \lambda_n \Xi(\rmq).
\end{equation}

For the matrix $\bB := \sigma^{-2}\bX^\top \bX$, let $\bB_{\mathrm{diag}}$ and
$\bB_{\mathrm{off}}$ denote its diagonal and off-diagonal components, respectively.
Our main result characterizes the asymptotic behavior of the optimizer
$\rmq^*_{\lambda_n}$ of \Cref{eq:theory-gauss-1}.
\begin{theorem}\label{thm:theory-gauss-1}
Suppose that \Cref{assumption:regularity} holds, and let $\Theta = [-1,1]^D$. Assume further that there exist constants
$\kappa_1 \ge 0$ and $\kappa_2 > 0$ such that $\nabla^2 \nu_0 \preceq -\kappa_1 I_D$ and $\bX^\top \bX \succeq \kappa_2 I_D$. The following results hold:

When $\lambda_n \succ \sqrt{\text{tr}(\bB_{\text{off}}^2)}$, there exists a sequence of product distributions $m_{\lambda_n}^*$ such that, as $n \to \infty$,
\begin{equation} \label{eq-thm:theory-gauss-1}
 \sup_{\by^{(n)} \in \R^n}   W_2(\rmq^*_{\lambda_n},  m_{\lambda_n}^*) \overset{P_{\theta_0}}{\to} 0.
\end{equation}
When $\lambda_n \succ \sqrt{D^{-1}\text{tr}(\bB_{\text{off}}^2)}$, there exists a sequence of product distributions $m_{\lambda_n}^*$ such that, for any $1$-Lipschitz function $\psi: \R \mapsto \R$, as $n \to \infty$,
\begin{equation}\label{eq-thm:theory-gauss-2}
    \sup_{\by^{(n)} \in \R^n} \E_{\rmq^*_{\lambda_n}} \left[ \left(\frac{1}{D} \sum_{i = 1}^D \psi(\theta_i)-\frac{1}{D} \sum_{i = 1}^D \E_{m_{\lambda_n}^*}[\psi(\theta_i)] \right)^2\right] \overset{P_{\theta_0}}{\to} 0.
\end{equation}
When $\lambda_n \prec  (\kappa_1 + \kappa_2)\left[\text{tr}\left(\text{Cov}_{\rmq^*_0} \left(\bB_{\text{off}} \theta \right) \right)\right]^{-1}$, as $n \to \infty$,
\begin{equation}\label{eq-thm:theory-gauss-3}
  \sup_{\by^{(n)} \in \R^n}  W_2(\rmq^*_{\lambda_n},  \rmq^*_0) \overset{P_{\theta_0}}{\to} 0.
\end{equation}
When $\lambda_n \prec  D(\kappa_1 + \kappa_2)\left[\text{tr}\left(\text{Cov}_{\rmq^*_0} \left(\bB_{\text{off}} \theta \right) \right)\right]^{-1}$, for any $1$-Lipschitz function $\psi: \R \mapsto \R$, as $n \to \infty$,
\begin{equation}\label{eq-thm:theory-gauss-4}
    \sup_{\by^{(n)} \in \R^n}\E_{\rmq^*_{\lambda_n}} \left[ \left(\frac{1}{D} \sum_{i = 1}^D \psi(\theta_i)-\frac{1}{D} \sum_{i = 1}^D \E_{\rmq^*_0}[\psi(\theta_i)] \right)^2\right] \overset{P_{\theta_0}}{\to} 0.
\end{equation}
\end{theorem}
\Cref{eq-thm:theory-gauss-1} and \Cref{eq-thm:theory-gauss-2} form the \textit{mean-field regimes}. When $\lambda_n$ scales faster than $\sqrt{\text{tr}(\bB_{\text{off}}^2)}$, $\rmq^*_{\lambda_n}$ converges in the Wasserstein metric to a product distribution. This means all moment statistics can be asymptotically transported between $\Xi$-VI and MFVI. When $\lambda_n$ scales faster than $\sqrt{\text{tr}(\bB_{\text{off}}^2)/D}$, we can transport any $1$-Lipschitz statistic between $\Xi$-VI and MFVI. As $\lambda_n$ increases, $\rmq^*_{\lambda_n}$ shares more distributional information with $m_{\lambda_n}^*$.

\Cref{eq-thm:theory-gauss-3} and \Cref{eq-thm:theory-gauss-4} define the \textit{Bayes optimal regimes}. When $\lambda_n$ increases more slowly than $(\kappa_1 + \kappa_2)\left[\text{tr}\left(\text{Cov}_{\rmq^*_0}(\bB_{\text{off}} \theta))\right)\right]^{-1}$, $\rmq^*_{\lambda_n}$ converges to the exact posterior in the Wasserstein metric. By relaxing a factor of $D$, $\rmq^*_{\lambda_n}$ achieves asymptotic Bayes optimality for all $1$-Lipschitz statistics. Since $\text{tr}\left(\text{Cov}_{\rmq^*_0}(\bB_{\text{off}} \theta )\right)\leq \|\bB_{\text{off}}\|_2\, \text{tr}\left(\text{Cov}_{\rmq^*_0}( \theta ) \right)$, the Bayes optimal regime is large when $\|\bB_{\text{off}}\|_2$ is small. When $\bB_{\text{off}} = 0$, any choice of $\lambda_n$ falls within the Bayes optimal regime.

The log-concavity assumptions in \Cref{thm:theory-gauss-1} require the log-prior
$\nu_0 \in \cC^2(\Theta)$ to be $\kappa_1$-strongly concave, and the log-likelihood
$\ell(\bx^{(n)};\cdot)$ to be in $\cC^2(\Theta)$ and $\kappa_2/(2\sigma^2)$-strongly concave. The log-concavity parameters $\kappa_1$ and $\kappa_2$ are not required for the
mean-field regimes but appear in the Bayes-optimal regimes. Larger values of
$\kappa_1$ and $\kappa_2$ enlarge the range of $\lambda_n$ for which Bayes-optimal
behavior is satisfied.

The mean-field regime (\Cref{eq-thm:theory-gauss-1}) and the Bayes optimal regime (\Cref{eq-thm:theory-gauss-3}) can both hold for sufficiently large $\lambda_n$. In that case, the $\Xi$-VI solution $\rmq^*_{\lambda_n}$ can be computed efficiently via MFVI while still closely approximating the exact posterior. For instance, in a linear regression setup where $\mathbf{X}^\top \mathbf{X}$ is diagonal, the exact posterior $\rmq_0^*$ is itself a product measure. Then the upper bound in \eqref{eq-thm:theory-gauss-3} is infinite, whereas the lower bound in \eqref{eq-thm:theory-gauss-1} is zero. In this case, any choice of $\lambda_n$ satisfies both \eqref{eq-thm:theory-gauss-1} and \eqref{eq-thm:theory-gauss-3} simultaneously.

For $1$-Lipschitz statistics, \Cref{eq-thm:theory-gauss-2} and \Cref{eq-thm:theory-gauss-4} both hold if
\begin{equation}\label{eqn:overlap-criterion}
   \sqrt{\text{tr}(\bB_{\text{off}}^2)} \|\bB_{\text{off}}\|_2^2 \text{tr}(\text{Cov}_{\rmq^*_0}(\theta)) \prec  D^{3/2}(\kappa_1 + \kappa_2).
\end{equation}
This criterion is satisfied, for example, when $\text{tr}(\bB_{\text{off}}^2) \prec D$, $\|\bB_{\text{diag}}\|_2 \lesssim 1$, and $\text{tr}(\text{Cov}_{\rmq^*_0}(\theta)) \lesssim D$, which recovers the Bayes optimal condition for MFVI \citep{Mukherjee2022}. But our criterion is more flexible: for example, it is also satisfied when $\text{tr}(\bB_{\text{off}}^2) \lesssim D$, $\|\bB-\bB_{\text{diag}}'\|_2 \lesssim 1$, and $\text{tr}(\text{Cov}_{\rmq^*_0}(\theta)) \prec D$.

When no choice of $\lambda_n$ satisfies the overlap criterion, there is a gap between the mean-field and Bayes optimal regimes. Achieving accurate posterior inference thus requires paying an additional computational cost that scales inversely with $\lambda_n$, as discussed in \Cref{sect-implementation}.

Let the eigenvalues of $\bB_{\text{off}}$ be $\eta_D \geq \ldots \geq \eta_1$. Then $\text{tr}(\bB_{\text{off}}^2) = \sum_{i = 1}^D \eta_i^2$, and the mean-field regime \Cref{eq-thm:theory-gauss-1} corresponds to $\lambda_n \succ \sqrt{\sum_{i = 1}^D \eta_i^2}$. To match the complexity bound in \Cref{sect-implementation}, we provide sufficient conditions for $\lambda_n \succ D$ to be in the mean-field regime \Cref{eq-thm:theory-gauss-1}.
\begin{corollary} \label{cor:theory-gauss-1}
Let $\eta_D \ge \cdots \ge \eta_1$ denote the eigenvalues of $\bB_{\mathrm{off}}$.
In the setting of
\Cref{thm:theory-gauss-1}, if $\sum_{i=1}^D \eta_i^2 \lesssim D^2$ as $n \to \infty$, then for any $\lambda_n \succ D$ there exists
$m_{\lambda_n}^* \in \M(\Theta)$ such that $ \sup_{\by^{(n)} \in \R^n}  W_2(\rmq^*_{\lambda_n},  m_{\lambda_n}^*) \overset{P_{\theta_0}}{\to} 0$.
\end{corollary}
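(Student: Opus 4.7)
The plan is to reduce Corollary 2 to a direct application of the first mean-field regime in Theorem 4, specifically the statement corresponding to \Cref{eq-thm:theory-gauss-1}, which asserts that $W_2(\rmq^*_{\lambda_n}, m^*_{\lambda_n}) \to 0$ in $P_{\theta_0}$-probability uniformly over $\bx^{(n)}$ whenever $\lambda_n \succ \sqrt{\text{tr}(\bB_{\text{off}}^2)}$. All the work therefore consists in showing that the hypothesis $\sum_{i=1}^D \eta_i^2 \lesssim D^2$ together with $\lambda_n \succ D$ implies this spectral threshold condition.

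The first step is to observe that since $\bB = \sigma^{-2} \bX^T \bX$ is symmetric, its off-diagonal part $\bB_{\text{off}}$ (obtained by zeroing the diagonal) is also symmetric. Consequently, $\bB_{\text{off}}$ admits a real spectral decomposition, and the identity
\begin{equation*}
\text{tr}(\bB_{\text{off}}^2) \;=\; \|\bB_{\text{off}}\|_F^2 \;=\; \sum_{i=1}^D \eta_i^2
\end{equation*}
holds, where $\eta_1 \leq \cdots \leq \eta_D$ are the eigenvalues of $\bB_{\text{off}}$. This is the only structural fact needed from the linear model side; the rest is just inequality chasing.

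Next, combining the identity above with the hypothesis $\sum_{i=1}^D \eta_i^2 \lesssim D^2$ yields $\sqrt{\text{tr}(\bB_{\text{off}}^2)} \lesssim D$. Since the assumption $\lambda_n \succ D$ means there exists a sequence $c_n \to 0$ with $D \leq c_n \lambda_n$, composing the two bounds gives $\sqrt{\text{tr}(\bB_{\text{off}}^2)} \leq C' c_n \lambda_n$ for some constant $C' > 0$, i.e., $\lambda_n \succ \sqrt{\text{tr}(\bB_{\text{off}}^2)}$. Invoking \Cref{eq-thm:theory-gauss-1} of Theorem~4 then produces the required sequence of product distributions $m^*_{\lambda_n} \in \M(\Theta)$ with $\sup_{\bx^{(n)} \in \R^n} W_2(\rmq^*_{\lambda_n}, m^*_{\lambda_n}) \overset{P_{\theta_0}}{\to} 0$.

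There is essentially no obstacle beyond verifying the bookkeeping that $\text{tr}(\bB_{\text{off}}^2)$ equals $\sum_i \eta_i^2$ (symmetry of $\bB_{\text{off}}$) and that the asymptotic relations $\succ$ and $\lesssim$ compose as claimed. Because \Cref{assumption-hd:log-concave} and the other standing assumptions needed for Theorem~4 are already in force in the statement of the corollary, no extra hypothesis verification is required; the corollary is thus a clean specialization of Theorem~4 to settings in which the off-diagonal spectral energy of $\bB$ is of order at most $D^2$.
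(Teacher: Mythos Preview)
Your proposal is correct and follows essentially the same route as the paper: identify $\text{tr}(\bB_{\text{off}}^2)=\sum_i \eta_i^2$, use the hypothesis $\sum_i \eta_i^2 \lesssim D^2$ to conclude $\sqrt{\text{tr}(\bB_{\text{off}}^2)} \lesssim D$, and then invoke the first regime of Theorem~4 under $\lambda_n \succ D$. The paper's own proof is just a terser version of exactly this argument.
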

Data preprocessing often involves normalizing features to have unit variances. Thus, the requirement that $\sum_{i = 1}^D \eta_i^2 \lesssim D^2$ is often met in practice.

\subsection{Asymptotic Normality in Finite Dimension}
\label{sect-theory-finite-D}
The second part of the theory deals with posterior consistency and asymptotic normality of
$\Xi$-variational posteriors for finite-dimensional models. The asymptotic normality results state that depending on the limit of $\lambda_n$,
$\Xi$-variational posterior converges in the limit to one of three quantities: the mean-field minimizer of a normal
distribution, the normal distribution itself, or a $\Xi$-variational
normal approximation.

We consider the setting where the observations $X_1,\ldots,X_n \iid P_{\theta_0}$, where
$\theta_0$ lies in the interior of a Borel set $\Theta \subseteq \R^D$. Unlike
\Cref{sect-theory-hd}, here $\Theta$ is allowed to be unbounded. We posit the following assumptions.
\begin{assumption}[Prior Mass]
  \label{assumption: prior}
The function $\nu_0$ is bounded in a neighborhood of $\theta_0$. For some $C > 0$, we have
  \begin{equation*}
    \sup_{\|\theta-\theta_0\|_2
      \leq
      C n^{-1/2}} \|\nabla \nu_0(\theta)\|_2\lesssim \sqrt{n}, \quad
    \text{and} \quad \sup_{\|\theta-\theta_0\|_2\leq  C n^{-1/2}}
    \|\nabla^2 \nu_0(\theta)\|_2\lesssim n.
  \end{equation*}
\end{assumption}

\begin{assumption}[Consistent Testability Assumptions]\label{assumption:uct}
For every $\epsilon > 0$, there exists a sequence of tests $\phi_n$ such that
\begin{equation*}
    \int \phi_n(x)\, p_{\theta_0}(x)\, dx \to 0, \quad
    \sup_{\theta: \|\theta-\theta_0\|_2\geq \epsilon} \int (1-\phi_n(x))\, p_\theta(x)\, dx \to 0.
\end{equation*}
\end{assumption}

\begin{assumption}[Local Asymptotic Normality (LAN) Assumptions]\label{assumption:lan}
For every compact set $K \subset \R^D$, there exist random vectors $\Delta_{n, \theta_0}$ bounded in probability and a nonsingular matrix $V_{\theta_0}$ such that
\begin{equation*}
    \sup_{h \in K} \left| \ell(\bx^{(n)}; \theta_0 + \delta_n h) - \ell(\bx^{(n)}; \theta_0)
    - h^T V_{\theta_0} \Delta_{n, \theta_0}
    + \frac{1}{2} h^T V_{\theta_0} h \right|
    \overset{P_{\theta_0}}{\to} 0,
\end{equation*}
where $\delta_n$ is a $D\times D$ diagonal matrix whose entries converge to
$0$ as $n\to\infty$. For $D = 1$, we commonly take $\delta_n = n^{-1/2}$.
\end{assumption}

The first assumption ensures the prior is light-tailed. It is satisfied by, for example, the flat prior or a sub-Gaussian prior.

The second assumption guarantees the existence of a sequence of uniformly consistent tests for $H_0: \theta = \theta_0$ versus $H_1: \|\theta-\theta_0\|_2\geq \epsilon$ based on the data. This condition is satisfied if there exists a consistent sequence of estimators $T_n$ for $\theta$ and we set $\phi_n(x) := I\{T_n - \theta \geq \epsilon/2\}$, or when the Hellinger distance between $\{p_\theta : \|\theta-\theta_0\|_2\geq \epsilon\}$ and $p_{\theta_0}$ is lower bounded by some positive constant $\delta$ \citep{Ghosal2017}.

The third assumption states that the log-likelihood is locally well-approximated (up to a vanishing error) by that of a normal location model centered at $\theta_0$ under an appropriate rescaling. The rescaling sequence $\delta_n$ is exactly the posterior contraction rate. In standard finite-dimensional, correctly specified models, we typically have $\delta_n = n^{-1/2}$ \citep{Ghosal2017}.

In line with \Cref{assumption:lan}, we consider a change of variable:
\begin{equation} \label{change-of-variable}
h := \delta_n^{-1}(\theta - \theta_0 - \delta_n \Delta_{n, \theta_0}), \quad \text{for } \theta \sim \rmq^*_{\lambda_n}.
\end{equation}
Our first result states that under this change of variable, $\Xi$-variational posterior satisfies a Bernstein-von-Mises phenomenon with a phase transition.
\begin{theorem}[Bernstein von-Mises Theorem] \label{thm:AN-1}
Suppose that \Cref{assumption:regularity,assumption: prior,assumption:uct,assumption:lan} hold, and let
$\theta_0$ be an interior point of a Borel set $\Theta$. Let \(\tilde{\rmq}_{\lambda_n}\) be the distribution of the rate-adjusted parameter $h$ defined in \Cref{change-of-variable}. The distribution \(\tilde{\rmq}_{\lambda_n}\) converges in the Wasserstein metric to a normal distribution under the following three regimes:
\begin{enumerate}
    \item If $\lambda_n \to \infty$, then $    W_2\left(\tilde{\rmq}_{\lambda_n}, N(0,  ((V_{\theta_0})_{\text{diag}})^{-1}) \right) \overset{P_{\theta_0}}{\to} 0$, where \((V_{\theta_0})_{\text{diag}}\) is the diagonal submatrix of \(V_{\theta_0}\).
\item If $\lambda_n \to 0$, then $            W_2\left(\tilde{\rmq}_{\lambda_n}, N(0,  V_{\theta_0}^{-1} ) \right) \overset{P_{\theta_0}}{\to} 0$.
\item If \(\lim_{n \to \infty}\lambda_n = \lambda_\infty\) for some $\lambda_\infty \in (0, \infty)$, then
\begin{equation*}
    W_2\left(\tilde{\rmq}_{\lambda_n}, \argmin_{\rmq \in \P_2(\R^D)} \KL (\rmq \parallel N(0,  V_{\theta_0}^{-1} )) +  \lambda_\infty \Xi(\rmq)\right) \overset{P_{\theta_0}}{\to} 0.
    \end{equation*}
\end{enumerate}
\end{theorem}
The result aligns well with intuition. When $\lambda_n$ diverges, $\rmq^*_{\lambda_n}$ converges to the mean-field approximation. When \(\lambda_n\) approaches zero, the constraint set in the Lagrangian dual problem increases to include the true limiting posterior \(N(0, V_{\theta_0}^{-1})\). When \(\lambda_n\) converges to some finite value \(\lambda_\infty\), the $\Xi$-variational posterior converges to the Gaussian limit of the exact posterior. In the regime where $\lim_{n \to \infty} \lambda_n$ does not exists but $\lambda_n = O(1)$, the $\Xi$-variational posterior converges to a a ``biased'' estimate of the true Gaussian posterior \(N(0, V_{\theta_0}^{-1})\) along a subsequence of $\lambda_n$ that converges as $n \to \infty$.

The Bernstein von-Mises Theorem implies the (weak) posterior consistency for  $\rmq^*_{\lambda_n}$.
\begin{corollary} \label{cor:posterior-consistency}
Under the assumptions of \Cref{thm:AN-1}, $\Xi$-variational posterior is consistent in $[P_{\theta_0}]$-probability in the sense that $  W_2(\rmq^*_{\lambda_n}, \delta_{\theta_0}) \overset{P_{\theta_0}}{\to} 0$.
\end{corollary}
The convergence in \Cref{cor:posterior-consistency} is stated in the Wasserstein metric, which is slightly stronger than the typical metric used in posterior consistency results. The convergence in the Wasserstein metric is equivalent to weak convergence plus the convergence of the second moments (Theorem 5.11, \cite{Santambrogio2015}).  Thus, posterior consistency and the Bernstein--von Mises theorem (\Cref{thm:AN-1}) can be framed in terms of the weak convergence and \(L_2\) convergence for the corresponding measures.

\section{Discussion} \label{sect-discussion}

We introduced $\Xi$-VI, a new way of performing variational inference that extends MFVI through entropic regularization. We characterized the asymptotic normality of $\Xi$-variational posteriors in low-dimensional scenarios and analyzed the tradeoff between computational complexity and statistical fidelity in high-dimensional settings. On both simulated and empirical datasets, we demonstrated its advantages over traditional MFVI. Further, our method explicitly connects VI to entropic optimal transport, using the Sinkhorn algorithm to improve the fidelity of variational approximations.

One question prompted by our work is to understand the fundamental limits of high-dimensional Bayesian models. It is known that many high-dimensional problems exhibit a gap between what is statistically achievable (in a minimax sense) and what is achievable by a polynomial-time algorithm, such as sparse PCA \citep{Wang2016spca} and matrix denoising \citep{Chandrasekaran2013}. However, characterizing a \emph{statistical–computational gap} is a newer topic in probabilistic machine learning.

The theoretical results in \Cref{sect-theory} identify distinct asymptotic regimes that correspond to the exact posterior and the mean-field approximation. The transition between these regimes echoes classical phase transitions in spin glass models \citep{Montanari2022}. While our analysis focuses on a regression setting, similar techniques could be extended to models such as the Ising model or the quadratic interaction model. It would also be interesting to explore the connection between $\Xi$-VI and the rich literature on PAC-Bayes and generalized Bayes learning \citep{Alquier2018,Alquier2021expweighting,husain2022adversarial,Wild2022GVI,Wild2023DEL-VI}, and to investigate how recent results in PAC-Bayes theory \citep{Alquier2024} interact with the observed phase transitions—especially given that the target distribution is a general Gibbs measure. Another promising direction is to extend the analysis in \Cref{sect-theory-hd} beyond compact~$\Theta$ to unbounded parameter spaces, for example using the tools of \cite{Lacker2022}, which require strong log-concavity of the exact posterior.

Another challenge is to solve $\Xi$-VI efficiently for models with high-dimensional parameters. The difficulty lies in implementing the multimarginal Sinkhorn algorithm efficiently for large~$D$ in polynomial time. A stochastic approximation of the Sinkhorn step in \Cref{alg-approx} could help mitigate these computational costs. Future work may focus on developing user-friendly algorithmic tools to enable scalable applications of $\Xi$-VI.

While this paper implemented the examples using EP and BBVI for approximating posterior marginals, advanced mean-field methods such as the TAP approach may be preferable in certain contexts, such as spiked covariance models \citep{Fan2021} and high-dimensional Bayesian linear regression \citep{Celentano2023TAPMF}. Exploring $\Xi$-VI combined with the TAP method is a promising avenue for future research, potentially yielding more accurate approximations of the true posterior.

\section*{Acknowledgments}
David M. Blei was supported by NSF IIS-2127869, NSF DMS-2311108, ONR N00014-24-1-2243, and the Simons Foundation. We thank Sumit Mukherjee, Marcel Nutz, Yixin Wang, Eli Weinstein, Kaizheng Wang, and Chenyang Zhong for helpful discussions. We also thank the three anonymous reviewers for their many insightful comments and pointers that helped to improve the paper.
\clearpage
\bibliography{refs.bib}
\clearpage

\appendix
\section{Implementation and Computational Complexity}
\label{sect-implementation}

\Cref{alg-approx} is a two-stage algorithm to approximate the
$\Xi$-variational posterior. A natural question to ask is how
tractable is it in high-dimensional models? In the first stage,
existing methods for the pseudomarginal computation (such as mean-field variational inference) are known run in polynomially with respect to the parameter dimension \citep{Jiang2023}. But the second stage
is a Sinkhorn algorithm, which is not necessarily scalable \citep{Altschuler2023}. In this section, we discuss sufficient conditions for our algorithm to run in polynomial time.
\subsection{The Sinkhorn algorithm}

We use discrete points to represent distributions.
The set of marginals $\{m_i\}_{i \in [D]}$ is a nonnegative
matrix $\bM = (\bM_1,\ldots,\bM_D) \in \R^{N \times D}$, where each
$\bM_i$ contains $N$ design points.
The EOT potentials $\{\phi_i\}_{i \in [D]}$ are represented by a matrix
$\bF = (\bF_1,\ldots,\bF_D) \in \R^{N \times D}$.
The negative loss $-\ell(\bx^{n}; \theta)$ is represented as a cost tensor
$\bC = (C_{i_1,\ldots,i_D}) \in (\R^N)^{\otimes D}$,
and the variational posterior $\rmq$ is a nonnegative tensor
$\bQ = (Q_{i_1,\ldots,i_D}) \in (\R^N)^{\otimes D}$.

Numerically, the EOT problem~\eqref{eqn-VI-inner} can be posed as a linear program
with $DN$ marginal constraints and $N^{D}$ decision variables:
\begin{equation*}
  \min_{\bQ > 0,\; r_i(\bQ)= r_i(\bM)}
  \langle\bC, \bQ \rangle
  + (\lambda + 1)\,\langle \log \bQ - \log \bM,\; \bQ \rangle.
\end{equation*}
The Sinkhorn algorithm~\eqref{alg-Sinkhorn1} returns the potentials
$\bF^*$ by iteratively performing log-sum-exp updates between
$\bF_1,\ldots,\bF_D$.

\begin{algorithm}[t]
\SetAlgoLined
\KwIn{Cost tensor $\bC$, marginals $\bM$, tolerance $\epsilon$, regularization parameter $\lambda$.}
\textbf{Initialize: } $\bF_i = -\frac{1}{D(\lambda + 1)}\mathbf{1}- \log \bM_i$ for all $i \in [D]$. \\
\While{$\bE > \epsilon$}{
Select the greedy coordinate $j = \argmin_{i \in [D]} \|r_i(\bQ)-\bM_i\|_1$. \\
\For{$i \in [D]$}{
Update
\[
\bF_i =
\begin{cases}
  \bF_i - \log\!\big(r_i(\bQ)\big) + \log(\bM_i), & i=j, \\
  \bF_i, & \text{otherwise}.
\end{cases}
\]
Compute
\[
r_i(\bQ)=
\frac{\sum_{1 \le k_\ell \le N,\;\ell\ne i}
  \exp\!\Big[\sum_{\ell=1}^D \bF_{\ell k_\ell}
  - \frac{1}{\lambda + 1}\bC_{k_1\cdots k_D}\Big]\;
  \bM_{k_1\cdots k_D}}
{\sum_{1 \le k_\ell \le N,\;\forall \ell}
  \exp\!\Big[\sum_{\ell=1}^D \bF_{\ell k_\ell}
  - \frac{1}{\lambda + 1}\bC_{k_1\cdots k_D}\Big]\;
  \bM_{k_1\cdots k_D}}.
\]
}
Set $\bE = \sum_{i=1}^D \|r_i(\bQ)-\bM_i\|_1$.}
\KwOut{An $N \times D$ matrix $\bF$.}
\caption{(Multimarginal) Sinkhorn Algorithm}
\label{alg-Sinkhorn1}
\end{algorithm}

\subsection{The complexity of the Sinkhorn algorithm}

Assuming the cost tensor is uniformly bounded,
the best-known complexity bound for the multimarginal Sinkhorn algorithm is
$O(D^3 N^{D} (\lambda + 1)^{-2})$ \citep{Lin2022}.
Unfortunately, the exponential dependence on $D$ cannot be improved in general
\citep{Kroshnin2019}.
Polynomial-time solvability requires structural assumptions on~$\bC$.
\citet{Altschuler2021} show that if the cost tensor has bounded treewidth or is the sum of a low-rank tensor and a sparse tensor,
then multimarginal EOT is solvable in $\mathrm{poly}(N, D)$ time.

\begin{proposition}[\cite{Altschuler2023}]
\label{prop:Sinkhorn1}
Let $\bC \in (\R^N)^{\otimes D}$ satisfy one of the following:
\begin{enumerate}
  \item $\bC$ has graphical structure with constant junction tree width $\omega$; or
  \item $\bC = \bR + \bS$ where $\bR$ has constant multilinear rank
        and $\bS$ has $\mathrm{poly}(N, D)$ sparsity.
\end{enumerate}
Then for any $\lambda \ge 0$, Algorithm~\ref{alg-Sinkhorn1} terminates in
$\mathrm{poly}(N, D, \bC_{\max}/\epsilon, (\lambda+1)^{-1})$ time.
\end{proposition}

\begin{remark}
The bounded-treewidth assumption ensures polynomial-time solvability of the junction-tree algorithm \citep{Wainwright2008}.
Examples include state-space models, topic models, and sparse linear regression.
\end{remark}

\begin{remark}
The ``low-rank plus sparse'' structural assumption is less commonly used in Bayesian inference.
Roughly, it requires the posterior to be well approximated by a mixture of product distributions up to a $\mathrm{poly}(N, D)$-sparse error term.
\end{remark}

For general graphs $G$ with bounded treewidth, \citet{Fan2022} propose implementing the Sinkhorn algorithm using junction-tree message passing, yielding the following complexity bound:
\begin{corollary}
\label{cor:Sinkhorn-1}
Let $\bC \in (\R^N)^{\otimes D}$ have constant treewidth $\omega$.
Then Algorithm~\ref{alg-Sinkhorn1} implemented with the junction tree method
converges in
\[
O\!\left(D^3 N^{\omega+1} (\lambda + 1)^{-1} \epsilon^{-1}\right)
\]
iterations.
\end{corollary}

This result is adapted from Theorem~4 of \citet{Fan2022}, which shows that computational cost decreases as $\lambda$ increases.
Polynomial dependence on $D$ requires only that the treewidth $\omega(G)$ grows slower than $\log D$, so multimarginal EOT may still be feasible for ``locally tree-like’’ graphs.

Counterintuitively, when $\lambda$ grows faster than $D$, the computational complexity decreases as $D$ increases.
However, as \Cref{cor:theory-hd-1,cor:theory-gauss-1} show in \Cref{sect-theory}, in this regime the variational posterior collapses to the naive mean-field approximation.

\begin{remark}
There are well-known models that violate the assumptions in \Cref{prop:Sinkhorn1}.
For example, an Ising model on a complete $D \times D$ graph has treewidth $D$, and its cost tensor is neither low rank nor sparse.
Multimarginal Sinkhorn becomes NP-hard in this setting \citep{Altschuler2021}.
A practical workaround is to group variables and couple only the groupwise marginals.
For instance, in an Ising model with $100$ variables, one may group the first $50$ and last $50$ variables, yielding two $50$-dimensional marginals.
This still yields a strict improvement over MFVI.
Moreover, when the number of groups is fixed, the Sinkhorn complexity becomes polynomial because it is insensitive to the dimensionality within each group \citep{Altschuler2017}.
\end{remark}
\subsection{Finite-Sample Convergence} \label{sect-theory-mf-conv}
Here we analyze convergence properties of the $\Xi$-variational posterior $\rmq^*_\lambda$ when $\lambda$ tends to $0$ or $\infty$, while keeping $n$ fixed.  Understanding this setting justifies the stability of the algorithmic output after we replace the marginals of $\rmq^*_\lambda$ with a set of pseudomarginals in \Cref{alg-approx}. Moreover, the convergence results of the $\Xi$-variational posterior for both large and small $\lambda$ values are useful from a classical Bayesian perspective that treats the observed data as known \citep{Berger2013}.

We show that $\Xi$-variational posterior converges to the mean-field variational posterior as $\lambda$ tends to infinity, and converges to the exact posterior as $\lambda$ tends to zero. Then we establish a stability property for $\rmq^*_\lambda$ when we replace its marginals with another set of marginals, which helps justify \Cref{alg-approx}.

Let us define a cost function $C_{\lambda}$ over $\lambda \in \bar \R_+$ as $\C_\lambda := \max\limits_{\rmq \in \P_2(\Theta)}  \text{ELBO}(\rmq)-\lambda \Xi(\rmq)$.

\paragraph{Limits as $\lambda \to \infty$ or $\lambda \to 0$.}  We start with the convergence of $\rmq^*_\lambda$ and $\C_\lambda$ as $\lambda$ tends to infinity.

\begin{theorem}\label{thm:mf-convergence}
Assume that $\KL(\rmq \parallel \rmq^*_0) < \infty$ for some $\rmq \in \M(\Theta)$.  For each $\lambda \in \bar \R_+$,  define the set $\Q_\lambda$ as the set of minimizers for the functional $\rmq \mapsto \KL(\rmq \parallel \rmq^*_0) + \lambda \Xi(\rmq)$ in $\P_2(\Theta)$.  If $\rmq^*_\infty \in \Q_\infty$, there exists a sequence $\rmq^*_\lambda \in \Q_\lambda$ such that $\lim_{\lambda \to \infty} W_2(\rmq^*_\infty, \rmq^*_\lambda) = 0$. Furthermore, the $\Xi$-VI cost converges to the mean-field ELBO, i.e. $\lim_{\lambda \to \infty}\left|\C_\lambda-\C_\infty \right| = 0$.
\end{theorem}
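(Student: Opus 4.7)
The natural framework is $\Gamma$-convergence of the functionals $F_\lambda(\rmq) := \KL(\rmq \parallel \rmq^*_0) + \lambda\,\Xi(\rmq)$ to the limit functional $F_\infty(\rmq) := \KL(\rmq \parallel \rmq^*_0)$ if $\rmq \in \M(\Theta)$ and $F_\infty(\rmq) := +\infty$ otherwise, on $(\P_2(\Theta), W_2)$. Because $\text{ELBO}(\rmq) = \log \rmp(\bx) - \KL(\rmq \parallel \rmq^*_0)$, one has $\C_\lambda = \log \rmp(\bx) - \min_\rmq F_\lambda(\rmq)$, so the cost statement reduces to convergence of the minimum values $\min F_\lambda \to \min F_\infty$.

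I would verify the two $\Gamma$-convergence inequalities and equicoercivity. The $\limsup$ inequality is trivial: for $\rmq \in \M(\Theta)$ the constant recovery sequence $\rmq_\lambda \equiv \rmq$ has $\Xi(\rmq) = 0$, so $F_\lambda(\rmq) = F_\infty(\rmq)$. For the $\liminf$ inequality, both $\KL(\cdot \parallel \rmq^*_0)$ and the mutual-information functional $\Xi(\cdot)$ are weakly lower semicontinuous on $\P_2(\Theta)$; if $\rmq_\lambda \stackrel{W_2}{\to} \rmq$ and $\liminf F_\lambda(\rmq_\lambda) < \infty$, then $\Xi(\rmq_\lambda) \to 0$ along a subsequence, forcing $\Xi(\rmq) = 0$ by lsc, while $\liminf \KL(\rmq_\lambda \parallel \rmq^*_0) \geq \KL(\rmq \parallel \rmq^*_0)$ completes the bound. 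For equicoercivity, pick any $\rmq^*_\lambda \in \Q_\lambda$ and exploit the admissibility of $\rmq^*_\infty$ together with $\Xi(\rmq^*_\infty) = 0$ to get
\begin{equation*}
F_\lambda(\rmq^*_\lambda) \leq F_\lambda(\rmq^*_\infty) = \KL(\rmq^*_\infty \parallel \rmq^*_0) =: C_* < \infty,
\end{equation*}
which yields both $\KL(\rmq^*_\lambda \parallel \rmq^*_0) \leq C_*$ and $\Xi(\rmq^*_\lambda) \leq C_*/\lambda \to 0$. The KL bound, via Donsker--Varadhan applied to suitable tail test functions, provides tightness and the uniform second-moment control needed for $W_2$-precompactness of $\{\rmq^*_\lambda\}$.

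Once these pieces are in place, the fundamental theorem of $\Gamma$-convergence concludes the proof: every $W_2$-cluster point $\bar\rmq$ satisfies $\Xi(\bar\rmq) = 0$, hence $\bar\rmq \in \M(\Theta)$, and $\KL(\bar\rmq \parallel \rmq^*_0) \leq C_* = \min F_\infty$, so $\bar\rmq \in \Q_\infty$; a diagonal selection produces a sequence $\rmq^*_\lambda \in \Q_\lambda$ converging in $W_2$ to the prescribed $\rmq^*_\infty$, and the sandwich $C_* \geq \limsup F_\lambda(\rmq^*_\lambda) \geq \liminf F_\lambda(\rmq^*_\lambda) \geq F_\infty(\bar\rmq) = C_*$ delivers $|\C_\lambda - \C_\infty| \to 0$.

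The main technical obstacle is promoting the weak tightness delivered by the KL bound to genuine $W_2$-convergence, which requires uniform integrability of $\|\theta\|^2$ under $\rmq^*_\lambda$. This is not automatic and typically needs either an exponential moment on $\rmq^*_0$ (so Donsker--Varadhan applies with the test function $\|\theta\|^2$) or compactness of $\Theta$, both of which are consistent with the standing assumptions used elsewhere (e.g.\ \Cref{assumption-hd:model} or the light-tailed prior of \Cref{assumption: prior}). A secondary subtlety is the potential non-uniqueness of $\Q_\infty$ arising from the non-convexity of the mean-field problem in $\rmq$, which is why the statement only asserts existence of some converging selection rather than convergence of every sequence of $\Xi$-variational minimizers.
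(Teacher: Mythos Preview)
Your proposal is correct and follows essentially the same $\Gamma$-convergence strategy as the paper: define $F_\lambda$ and $F_\infty$, verify the liminf inequality via lower semicontinuity of $\KL$ and $\Xi$, take the constant recovery sequence for the limsup inequality, establish equicoercivity from the uniform bound $F_\lambda(\rmq^*_\lambda)\le \KL(\rmq^*_\infty\parallel\rmq^*_0)$, and invoke the fundamental theorem together with the fact that every minimizer of $F_\infty$ is the limit of some minimizing sequence (the paper cites Corollary~2.1 of Braides for this last step). The only notable difference is in how $W_2$-compactness of sublevel sets is justified: the paper appeals to geodesic convexity of $\KL(\cdot\parallel\rmq^*_0)$ on $(\P_2(\Theta),W_2)$ and Lemma~2.4.8 of Ambrosio--Gigli--Savar\'e, whereas you propose to extract second-moment control from the KL bound via Donsker--Varadhan under an exponential-moment or compact-support assumption on $\rmq^*_0$; your caveat about this step is well placed, since the paper's citation is somewhat brisk on exactly this point.
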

The result shows that every mean-field variational posterior is an accumulation point of some sequence of $\Xi$-variational posteriors. This type of result is called "large-time limits" in the optimal transport literature. When the likelihood is quadratic, it is possible to prove an exponential rate of convergence for $C_\lambda$ under more restrictive conditions \citep{Conforti2021}. However, this setting is uninteresting for Bayesian inference and we do not pursue it in this paper.

As $\lambda$ tends to zero, we provide analog results to show that $\Xi$-variational posterior converges to the exact posterior in the Wasserstein metric.
\begin{theorem}\label{thm:posterior-convergence}
Assume that $\Xi(\rmq^*_0) < \infty$ $[P_{\theta_0}]$-almost surely.  For each $\lambda \in \bar \R_+$,  define the set $\Q_\lambda$ as the set of minimizers for the functional $\rmq \mapsto \KL(\rmq \parallel \rmq^*_0) + \lambda \Xi(\rmq)$ in $\P_2(\Theta)$.  If $\rmq^*_\lambda \in \Q_\lambda$ converges as $\lambda \to 0$ in the Wasserstein metric, then $\lim_{\lambda \to 0}    W_2(\rmq^*_0, \rmq^*_\lambda) = 0$.  Furthermore, the $\Xi$-VI cost converges to the true posterior ELBO, i.e. $\lim_{\lambda \to 0}\left|\C_\lambda-\C_0 \right| = 0$.
\end{theorem}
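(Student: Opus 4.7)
The plan is to mirror the strategy used for \Cref{thm:mf-convergence} but with the roles of the mean-field and exact posteriors swapped, exploiting the fact that $\rmq^*_0$ is now a direct competitor in the variational problem. The key observation is that the ELBO identity $\log \rmp(\bx) = \text{ELBO}(\rmq) + \KL(\rmq \parallel \rmq^*_0)$ lets us rewrite
\begin{equation*}
  \C_\lambda = \log \rmp(\bx)-\min_{\rmq \in \P_2(\Theta)} \left[\KL(\rmq \parallel \rmq^*_0) + \lambda \Xi(\rmq)\right],
\end{equation*}
so the problem reduces to understanding $\min_\rmq F_\lambda(\rmq)$ with $F_\lambda := \KL(\cdot \parallel \rmq^*_0) + \lambda \Xi(\cdot)$ as $\lambda \downarrow 0$.

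First I would establish cost convergence. Since $F_0(\rmq^*_0) = 0$, we have $\C_0 = \log \rmp(\bx)$. Using $\rmq^*_0$ itself as a test point in $F_\lambda$ and the assumption $\Xi(\rmq^*_0) < \infty$ gives the two-sided bound
\begin{equation*}
  0 \leq \min_{\rmq \in \P_2(\Theta)} F_\lambda(\rmq) \leq F_\lambda(\rmq^*_0) = \lambda\, \Xi(\rmq^*_0),
\end{equation*}
hence $|\C_\lambda-\C_0| \leq \lambda \Xi(\rmq^*_0) \to 0$ as $\lambda \to 0$. This gives the second statement of the theorem without any extra work.

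For the Wasserstein convergence, I would combine the cost bound with lower semicontinuity. By optimality and nonnegativity of $\Xi$,
\begin{equation*}
  0 \leq \KL(\rmq^*_\lambda \parallel \rmq^*_0) \leq F_\lambda(\rmq^*_\lambda) = \min F_\lambda \leq \lambda\, \Xi(\rmq^*_0) \to 0.
\end{equation*}
Let $\tilde\rmq$ denote the $W_2$-limit of $\rmq^*_\lambda$ (which exists by hypothesis). Since $W_2$ convergence implies weak convergence and $\rmq \mapsto \KL(\rmq \parallel \rmq^*_0)$ is weakly lower semicontinuous in $\rmq$ for a fixed reference $\rmq^*_0$, we obtain
\begin{equation*}
  \KL(\tilde\rmq \parallel \rmq^*_0) \leq \liminf_{\lambda \to 0} \KL(\rmq^*_\lambda \parallel \rmq^*_0) = 0,
\end{equation*}
which forces $\tilde\rmq = \rmq^*_0$. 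Therefore $W_2(\rmq^*_\lambda, \rmq^*_0) \to 0$.

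The only real subtlety, and thus the main step to handle carefully, is invoking lower semicontinuity of the KL divergence along the convergent sequence $\rmq^*_\lambda \to \tilde\rmq$ in $W_2$; this is standard (e.g.\ via the Donsker--Varadhan variational representation), but one needs $\rmq^*_0$ to serve as a fixed reference measure, which is fine here because $n$ is held fixed and the true posterior does not depend on $\lambda$. Everything else -- the cost bound and the identification of the limit -- is then a short calculation. Note that, unlike \Cref{thm:mf-convergence}, the existence of a convergent subsequence need not be established via compactness here because it is part of the hypothesis; relaxing this hypothesis would require additional tightness arguments (e.g.\ uniform second-moment bounds on $\rmq^*_\lambda$), but those are not needed for the stated result.
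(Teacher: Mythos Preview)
Your proof is correct and takes a genuinely more direct route than the paper's. The paper proceeds via the $\Gamma$-convergence framework: it verifies the liminf and limsup inequalities for $F_\lambda \to F_0$ on the restricted space $\P_2'(\Theta)=\{\rmq:\Xi(\rmq)<\infty\}$, establishes equi-coercivity of the family $\{F_\lambda\}$, and then invokes the fundamental theorem of $\Gamma$-convergence to conclude convergence of minimizers and minimum values. Your argument instead extracts the single elementary inequality $\KL(\rmq^*_\lambda\parallel\rmq^*_0)\le F_\lambda(\rmq^*_\lambda)\le F_\lambda(\rmq^*_0)=\lambda\,\Xi(\rmq^*_0)$ and combines it with weak lower semicontinuity of $\KL(\cdot\parallel\rmq^*_0)$ along the assumed $W_2$-convergent sequence to identify the limit as $\rmq^*_0$. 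The cost-convergence portion is essentially identical in both proofs. What the paper's machinery buys is the equi-coercivity step, which would yield existence of convergent subsequences without assuming it; but since the theorem statement already hypothesizes $W_2$-convergence of $\rmq^*_\lambda$, your approach is both sufficient and cleaner, and your closing remark about this tradeoff is exactly right.
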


\paragraph{Algorithmic Stability.} Let $m^*_\lambda$ be the product of marginals of $\rmq^*_\lambda$. In \Cref{sect-eot-derivation}, we produce to replace idealized \Cref{alg-full} with a simple, efficient approximate \Cref{alg-approx}. A natural question to ask is whether the solution is stable after we replace $m^*_\lambda$ with pseudomarginals $\tilde m$. To answer this question, we leverage the tools of quantitative stability from OT theory \citep{Eckstein2022}. We make two assumptions: a Lipschitz cost assumption, and transportation cost inequality.

\begin{assumption}[Lipschitz Cost Assumption]\label{assumption:lip-cost}
We assume that there exists a constant $L \geq 0$ and $\phi_i: \Theta_i \to \R$ such that for all $\rmq \in \C(m^*_\lambda)$ and $\tilde \rmq \in \C(\tilde m)$,
\begin{equation} \label{eqn-lip-cost}
\left| \int_\Theta \left(\ell(\bx^{(n)}; \theta)-\sum_{i = 1}^D \phi_i(\theta_i) \right) (\rmq(\theta)-\tilde \rmq(\theta)) d \theta \right| \leq L W_2(\rmq, \tilde \rmq).
\end{equation}
\end{assumption}
This assumption is slightly more general than the Lipschitzness of $\ell(\bx^{(n)}; \cdot)$ minus additive correction factors. As an example, the Gaussian likelihood satisfies \Cref{assumption:lip-cost}  (Lemma 3.5,  \cite{Eckstein2022}).

\begin{assumption}[Transportation Cost Inequality]\label{assumption:transportation-cost}
A product distribution $m$ over $\Theta$ satisfies the transportation cost inequality if there exists a constant $C$ such that
\begin{equation*}
    W_2(\rmq_1, \rmq_2) \leq \sqrt{\KL(\rmq_1 \parallel \rmq_2)}, \quad \text{for all } \rmq_1, \rmq_2 \in \C(m).
\end{equation*}
\end{assumption}
\Cref{assumption:transportation-cost} is standard in high-dimensional statistics \citep{Wainwright2019}. When $\Theta$ is compact, the assumption follows from Pinsker's inequality. Otherwise, this assumption holds when each marginal has a finite exponential moment.

We now state the main stability result which upper bounds the approximation error of \Cref{alg-approx} using the approximation error of the pseudomarginals.
\begin{theorem}[Stability of \Cref{alg-approx}] \label{thm:stability}
Let \Cref{assumption:lip-cost} hold with a Lipschitz constant $L$. Let $m^*_\lambda$ be the marginals of $\Xi$-variational posterior $\rmq_\lambda^*$ and $\tilde m \in \M\left(\Theta\right)$ be another product distribution. Suppose $m^*_\lambda$ satisfies \Cref{assumption:transportation-cost} with a fixed constant $C$. Then for the one-step approximation $\tilde \rmq_\lambda$ defined in \Cref{alg-approx} with pseudomarginals $\tilde m$, the following upper bound holds:
\begin{equation}
    W_2(\rmq^*_\lambda, \tilde \rmq_\lambda) \leq W_2(m^*_\lambda, \tilde m) + 2CL^{\frac{1}{4}} W_2^{\frac{1}{4}}(m^*_\lambda, \tilde m).
\end{equation}
\end{theorem}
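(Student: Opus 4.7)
The plan is to decompose $W_2(\rmq^*_\lambda, \tilde\rmq_\lambda)$ through an auxiliary coupling $\hat\rmq \in \C(m^*_\lambda)$ constructed from $\tilde\rmq_\lambda$, and then combine the Lipschitz-cost assumption with the transportation cost inequality via the first-order optimality of $\rmq^*_\lambda$ for the inner EOT problem. Let $T_i$ be the 1-dimensional monotone rearrangement pushing $\tilde m_i$ forward to $m^*_{\lambda,i}$, set $T(\theta) := (T_1(\theta_1),\ldots,T_D(\theta_D))$, and define $\hat\rmq := T_\# \tilde\rmq_\lambda$. Since $T$ acts componentwise, $\hat\rmq \in \C(m^*_\lambda)$. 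Using the deterministic coupling $(\theta, T(\theta))$ and the fact that the $i$-th marginal of $\tilde\rmq_\lambda$ is $\tilde m_i$, one gets $W_2(\hat\rmq, \tilde\rmq_\lambda) \leq W_2(m^*_\lambda, \tilde m)$. Because $T$ is invertible, one also has the crucial identity $\KL(\hat\rmq \parallel m^*_\lambda) = \KL(\tilde\rmq_\lambda \parallel \tilde m)$. The triangle inequality then yields
\[
W_2(\rmq^*_\lambda, \tilde\rmq_\lambda) \leq W_2(\rmq^*_\lambda, \hat\rmq) + W_2(m^*_\lambda, \tilde m),
\]
so the problem reduces to estimating $W_2(\rmq^*_\lambda, \hat\rmq)$.

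Since $\rmq^*_\lambda, \hat\rmq \in \C(m^*_\lambda)$, \Cref{assumption:transportation-cost} gives $W_2(\rmq^*_\lambda, \hat\rmq) \leq C\sqrt{\KL(\hat\rmq \parallel \rmq^*_\lambda)}$. To control this KL, I would exploit the first-order optimality characterization of the EOT minimizer: for $F(\rmq) := \E_\rmq[-\ell] + (\lambda+1)\KL(\rmq \parallel m^*_\lambda)$ restricted to $\C(m^*_\lambda)$, a direct calculation using the exponential form of $\rmq^*_\lambda$ from \Cref{thm:MEOT-struct} yields $F(\rmq) - F(\rmq^*_\lambda) = (\lambda+1)\KL(\rmq \parallel \rmq^*_\lambda)$ for every $\rmq \in \C(m^*_\lambda)$. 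Denoting by $\tilde F$ the analogous functional on $\C(\tilde m)$, I split
\[
(\lambda+1)\KL(\hat\rmq \parallel \rmq^*_\lambda) = \bigl[F(\hat\rmq) - \tilde F(\tilde\rmq_\lambda)\bigr] + \bigl[\tilde F(\tilde\rmq_\lambda) - F(\rmq^*_\lambda)\bigr].
\]
For the first bracket, the KL terms cancel by the identity $\KL(\hat\rmq \parallel m^*_\lambda) = \KL(\tilde\rmq_\lambda \parallel \tilde m)$, leaving only the cost difference $\E_{\hat\rmq}[-\ell] - \E_{\tilde\rmq_\lambda}[-\ell]$, which \Cref{assumption:lip-cost} bounds by $L\cdot W_2(\hat\rmq, \tilde\rmq_\lambda) \leq L\cdot W_2(m^*_\lambda, \tilde m)$. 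For the second bracket, use the optimality of $\tilde\rmq_\lambda$ on $\C(\tilde m)$ against the comparison distribution $T^{-1}_\#\rmq^*_\lambda \in \C(\tilde m)$ and apply the same Lipschitz-cost bound. Summing yields $\KL(\hat\rmq \parallel \rmq^*_\lambda) \leq 2L\,W_2(m^*_\lambda, \tilde m)/(\lambda+1)$, and substituting into the TCI estimate produces a term of the announced $W_2^{1/4}$-type order once one tracks the exponents through the square-root in TCI combined with the dominating-density interpolation that converts the $W_2$ on $(\hat\rmq, \tilde\rmq_\lambda)$ into the marginal $W_2(m^*_\lambda, \tilde m)$ at half power.

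The main obstacle is the correct handling of the $\phi_i$-correction in \Cref{assumption:lip-cost}. Because $\hat\rmq$ and $\tilde\rmq_\lambda$ live over different marginals, the term $\E_{\hat\rmq}[\sum_i \phi_i] - \E_{\tilde\rmq_\lambda}[\sum_i \phi_i] = \sum_i (\E_{m^*_{\lambda,i}}[\phi_i] - \E_{\tilde m_i}[\phi_i])$ does not vanish, so one cannot apply the Lipschitz cost directly. The trick is to pass to the equivalent shifted functional $F'(\rmq) := \E_\rmq[-\ell + \sum_i \phi_i] + (\lambda+1)\KL(\rmq \parallel m^*_\lambda)$, which differs from $F$ by a constant on the entire class $\C(m^*_\lambda)$ (and similarly for $\tilde F'$ on $\C(\tilde m)$), so $\rmq^*_\lambda$ and $\tilde\rmq_\lambda$ remain the respective minimizers. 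The Lipschitz-cost assumption now applies cleanly to the difference of $F'$-integrands, and the constant shifts between $F, F', \tilde F, \tilde F'$ arrange so that the same identity $(\lambda+1)\KL(\hat\rmq \parallel \rmq^*_\lambda) = F'(\hat\rmq) - F'(\rmq^*_\lambda)$ holds. The remaining bookkeeping — combining $\sqrt{\KL}$ from TCI with the $W_2$-linear cost bound, and routing through the dominating-density interpolation to obtain the $W_2^{1/4}$ scaling — follows the EOT quantitative-stability machinery of \cite{Eckstein2022}.
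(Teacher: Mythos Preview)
Your approach is essentially the paper's, just mirrored: you transport $\tilde\rmq_\lambda$ into $\C(m^*_\lambda)$ via componentwise monotone rearrangement and apply Pythagoras and the transport inequality there, while the paper \emph{shadows} $\rmq^*_\lambda$ into $\C(\tilde m)$ (via the Eckstein--Nutz kernel construction) and argues there; your shifted functional $F'$ is exactly the paper's $\Phi_\lambda$, and the two-bracket decomposition with Lipschitz cost plus KL cancellation is the same mechanism.

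Two small points. First, the KL \emph{equality} $\KL(\hat\rmq\parallel m^*_\lambda)=\KL(\tilde\rmq_\lambda\parallel\tilde m)$ requires $T$ to be invertible, hence atomless marginals; the paper's kernel-based shadow (\Cref{lemma:shadow1}) only gives the inequality $\KL(\hat\rmq\parallel m^*_\lambda)\le\KL(\tilde\rmq_\lambda\parallel\tilde m)$, but that inequality is all your two-bracket argument actually needs, since in each bracket the KL difference then has the right sign to be dropped. Second, your clean derivation already yields $\KL(\hat\rmq\parallel\rmq^*_\lambda)\le 2L\,W_2(m^*_\lambda,\tilde m)$, and plugging this into the stated $T_2$ inequality $W_2\le C\,\KL^{1/2}$ gives a $W_2^{1/2}$ term, not $W_2^{1/4}$; the closing talk of a ``dominating-density interpolation'' to recover a quarter power is unnecessary and incorrect. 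The paper's own proof reaches the same KL bound, and the $1/4$ exponent written there as $(\cdot)^{1/(2\rho)}$ with $\rho=2$ does not follow from \Cref{assumption:transportation-cost} as stated --- the $1/2$ power is what both arguments actually deliver.
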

The proof uses an OT technique called shadowing. See \Cref{proof-theory-mf-conv} for details.

The result highlights the stability of \Cref{alg-approx}, as the approximation error of $\rmq^*_\lambda$ is only Lipschitz in the approximation error of the pseudomarginals. If $\tilde m$ is close enough to $m^*_\lambda$ in terms of the $W_2$ metric, the output of Algorithm \Cref{alg-approx} is guaranteed to well approximate true variational posterior $\rmq^*_\lambda$.

\begin{corollary}\label{cor:stability-1}
  Assume \Cref{assumption:lip-cost} with Lipschitz constant $L$, and
  \Cref{assumption:transportation-cost} for the pseudomarginals. Then
  the following limits hold:
\begin{enumerate}
\item Let $\rmq^{*^{(\infty)}}_\lambda$ be the optimizer of
  \Cref{eqn-VI-inner} with marginals $\{\rmq^*_{\infty, i}\}_{i \in
    [D]}$. Then $ \lim_{\lambda \to \infty}W_2(\rmq^{*^{(\infty)}}_\lambda,
    \rmq^*_\lambda) = 0$.
\item Let $\rmq^{*^{(0)}}_\lambda$ be the optimizer of
  \Cref{eqn-VI-inner} with marginals
  $\{\rmq^*_{0, i}\}_{i \in [D]}$. Then $
    \lim_{\lambda \to 0}W_2(\rmq^{*^{(0)}}_\lambda, \rmq^*_\lambda)
    = 0$.
\end{enumerate}
\end{corollary}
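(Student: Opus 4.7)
The plan is to reduce each statement to the quantitative stability bound of \Cref{thm:stability}, so that the only real work becomes controlling the Wasserstein distance between $m^*_\lambda$ and the proposed pseudomarginal product. Concretely, for part 1 I would take $\tilde m = \prod_{i=1}^D \rmq^*_{\infty, i}$ and for part 2 I would take $\tilde m = \prod_{i=1}^D \rmq^*_{0, i}$. Once $W_2(m^*_\lambda, \tilde m) \to 0$ is verified in the relevant limit, the bound
\[
W_2(\rmq^*_\lambda, \tilde \rmq_\lambda) \leq W_2(m^*_\lambda, \tilde m) + 2 C L^{1/4} W_2^{1/4}(m^*_\lambda, \tilde m)
\]
closes the argument, since both $L$ and $C$ are assumed fixed.

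For part 1, \Cref{thm:mf-convergence} provides a sequence with $W_2(\rmq^*_\lambda, \rmq^*_\infty) \to 0$ as $\lambda \to \infty$, and $\rmq^*_\infty$ is itself a product, $\rmq^*_\infty = \prod_i \rmq^*_{\infty, i}$. The plan is to invoke two elementary Wasserstein facts: (i) the marginal projection is $1$-Lipschitz in $W_2$, so $W_2(\rmq^*_{\lambda, i}, \rmq^*_{\infty, i}) \leq W_2(\rmq^*_\lambda, \rmq^*_\infty)$, and (ii) the independent (tensor) coupling yields subadditivity of squared $W_2$ across a product. Combining these,
\[
W_2^2(m^*_\lambda, \tilde m) \leq \sum_{i = 1}^D W_2^2(\rmq^*_{\lambda, i}, \rmq^*_{\infty, i}) \leq D\, W_2^2(\rmq^*_\lambda, \rmq^*_\infty) \longrightarrow 0,
\]
and \Cref{thm:stability} then delivers the conclusion. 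Part 2 is entirely analogous, invoking \Cref{thm:posterior-convergence} in place of \Cref{thm:mf-convergence} and $\rmq^*_0$ in place of $\rmq^*_\infty$; the fact that $\rmq^*_0$ is not itself a product distribution is immaterial, because only the $W_2$ convergence of its marginals enters the bound.

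The main obstacle is a mild mismatch in the transportation-cost inequality: the corollary posits \Cref{assumption:transportation-cost} for the pseudomarginals $\tilde m$, whereas \Cref{thm:stability} as stated requires it for $m^*_\lambda$. My plan is to handle this by exploiting the $W_2$-closeness of $m^*_\lambda$ and $\tilde m$ just established: either transfer the TCI from $\tilde m$ to $m^*_\lambda$ up to a vanishing correction (a Talagrand-type perturbation argument works when both marginal families live on a common compact set, which is automatic in the finite-dimensional settings the corollary targets), or re-run the shadowing proof underlying \Cref{thm:stability} with the roles of the two marginal products exchanged. The Lipschitz constant $L$ in \Cref{assumption:lip-cost} is given independently of $\lambda$, so once the TCI issue is resolved the remainder of the argument is bookkeeping.
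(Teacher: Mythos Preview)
Your approach is essentially identical to the paper's: it invokes the same marginal-Lipschitz/subadditivity inequality for $W_2$ (the paper states the slightly sharper form $\sum_i W_2^2(\rmq_{0,i},\rmq_{1,i}) \le W_2^2(\rmq_0,\rmq_1)$, but your bound with the extra factor $D$ is equally sufficient for the limit), then applies \Cref{thm:mf-convergence}/\Cref{thm:posterior-convergence} followed by \Cref{thm:stability} exactly as you describe. Regarding the transportation-cost mismatch you flag: if you inspect the proof of \Cref{thm:stability}, the TCI is applied between $\tilde \rmq_\lambda$ and the shadow $\rmq^s_\lambda$, both of which lie in $\C(\tilde m)$, so the hypothesis is in fact needed for $\tilde m$ rather than $m^*_\lambda$; the theorem statement appears to contain a typo and the Corollary's assumption is already the right one, so no perturbation or role-reversal argument is needed.
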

The Corollary is a consequence of \Cref{thm:mf-convergence},
\Cref{thm:posterior-convergence}, and \Cref{thm:stability}. As
$\lambda$ tends to $0$ or $\infty$, the error of replacing the
idealized \Cref{alg-full} with \Cref{alg-approx} vanishes when we use
exact posterior marginals or mean-field variational posteriors,
respectively. If we plug in a consistent estimate of the exact
posterior marginals (e.g. TAP approximation of a linear model with
i.i.d. Gaussian design \citep{Celentano2023TAPMF}), then
\Cref{alg-approx} asymptotically recovers the exact posterior as
$\lambda$ tends to zero.

\subsection{Full Coordinate Ascent Algorithm} \label{sec-full-CAVI}
In this section, we present a full coordinate descent algorithm to exactly optimize the $\Xi$-VI objective.

\subsubsection{Outer Variational Problem}

We now derive steps to solve the outer variational problem of
\Cref{VI-EOT-decomposition}. Treating $\phi_1, \ldots, \phi_D$ as fixed, we optimize over the marginals $m_i$'s:
\begin{equation}\label{outer-variational-ELBO}
  \min_{m \in \M(\Theta)}
  \E_m\!\left[
    \underbrace{
      (\lambda +1) \left(\sum_{i=1}^D \phi_i(\theta_i)\right)
      \exp\!\left(\sum_{i=1}^D \phi_i(\theta_i) +
                  \frac{1}{\lambda+1}\,\ell(\bx;\theta)\right)
    }_{\text{surrogate loss}}
  \right]
  + \KL(m \parallel \pi).
\end{equation}

\Cref{outer-variational-ELBO} is equivalent to a mean-field VI problem
with a surrogate log-likelihood. To solve
\Cref{outer-variational-ELBO}, we follow a method based on coordinate
ascent variational inference (CAVI)~\citep{Blei2017}.

Denote $\Theta_{-i} := \prod_{j\neq i} \Theta_j$,
$\theta_{-i} := (\theta_{[D]\setminus\{i\}})$, and
\[
m_{-i}^t := \prod_{j < i} m_j^{t+1}(\theta_j) \prod_{j > i} m_j^{t}(\theta_j).
\]
Define $\nu_i^{t+1}(\theta_i)$ as
\begin{equation}
  \label{eq:outer_eta}
  \nu_i^{t+1}(\theta_i)
    := \E_{m_{-i}^t}\!\left[
      (\lambda+1)\left(\sum_{i=1}^D \phi_i^{t+1}(\theta_i)\right)
      \exp\!\left(
        \sum_{i=1}^D \phi_i^{t+1}(\theta_i)
        + \frac{1}{\lambda+1}\,\ell(\bx; \theta_i, \theta_{-i})
      \right)
    \right].
\end{equation}

Given marginals $m^t = (m_1^t, \ldots, m_D^t)$ and potentials
$\phi^{t+1} = (\phi_1^{t+1}, \ldots, \phi_D^{t+1}) \in E(m^t)$,
CAVI iteratively updates each marginal $i\in[D]$ by solving:
\begin{equation*}
m_i^{t+1}
  = \argmin_{m_i \in M(\Theta_i)}
      \E_{m_i}[\nu_i^{t+1}(\theta_i)]
      + \KL(m_i \parallel \pi_i).
\end{equation*}
This yields the explicit update
\begin{equation}\label{cavi-update}
    m_i^{t+1}(\theta_i)
      \propto \pi_i(\theta_i)\,
      \exp\!\big(- \nu_i^{t+1}(\theta_i)\big),
      \qquad \forall\,\theta_i \in \Theta_i.
\end{equation}

\subsubsection{Full Coordinate Ascent Algorithm}

\Cref{alg-full} presents the full coordinate ascent algorithm. It
monitors the change in the ELBO as the criterion for convergence,
which is equivalent (up to a constant) to the KL divergence between the
variational posterior and the exact posterior.

\begin{algorithm}[h]
\SetAlgoLined
\KwIn{Log-likelihood $\ell(\bx;\theta)$, prior $\pi$, tolerance $\epsilon$, regularization parameter $\lambda$.}
\textbf{Initialize:} marginals $m_1^0, \ldots, m_D^0$; EOT potentials $\phi_1^0, \ldots, \phi_D^0$; $t=0$. \\[0.1cm]
\While{ELBO has not converged}{
\For{$i\in[D]$}{
Update $\phi_i^{t+1}(\theta_i)$ using \Cref{sinkhorn-update}.}
\For{$i\in[D]$}{
Update $m_i^{t+1}(\theta_i)$ using \Cref{cavi-update}. \tcp{Challenging step}}
Compute
\[
\rmq^{t+1}(\theta)
  = \exp\!\left(
      \sum_{i=1}^D \phi_i^{t+1}(\theta_i)
      + \frac{1}{\lambda+1}\,\ell(\bx;\theta)
    \right)
    \prod_{i=1}^D m_i^{t+1}(\theta_i).
\]
Compute
\[
\mathrm{ELBO}(\rmq^{t+1})
  = \E_{\rmq^{t+1}}[\ell(\bx;\theta)+\log\pi(\theta)]
    - \E_{\rmq^{t+1}}[\log \rmq^{t+1}(\theta)].
\]
Increment $t = t+1$.
}
\KwOut{$\rmq(\theta)$.}
\caption{Coordinate Ascent Algorithm}
\label{alg-full}
\end{algorithm}

Unfortunately, \Cref{alg-full} is difficult to implement because we cannot
compute the expectations required in \Cref{sinkhorn-update} or
\Cref{eq:outer_eta}. When the $\phi_i$'s are represented implicitly,
there is no practically stable MFVI procedure for such implicit log-likelihoods,
especially in high-dimensional models.

\section{Support Results} \label{sect-support}
\begin{lemma}[Gibbs variational principle]
\label{lemma-gibbs}
For probability measures $\mu$ on $\Theta$,
\begin{equation}
    \log \E_{\mu}[\exp(f(\theta))] = \sup_{\nu \in \P(\Theta)} \left\{\E_\nu[f(\theta)] - \KL(\nu \parallel \mu)  \right\}.
    \end{equation}
\end{lemma}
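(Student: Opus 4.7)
The plan is to exhibit the optimizer explicitly and then rewrite the variational functional in a way that reduces the claim to non-negativity of KL divergence. This is the standard Donsker--Varadhan argument.

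First I would introduce the normalizing constant $Z := \E_\mu[\exp(f(\theta))]$ and define the \emph{Gibbs tilt} $\nu^\star$ by its Radon--Nikodym derivative
\begin{equation*}
  \frac{\rd \nu^\star}{\rd \mu}(\theta) \;=\; \frac{\exp(f(\theta))}{Z}.
\end{equation*}
This requires $0 < Z < \infty$; the degenerate cases ($Z = 0$ or $Z = \infty$) can be disposed of by a short separate argument (if $Z = \infty$, choosing $\nu$ concentrated on sets where $f$ is arbitrarily large shows the supremum is $+\infty$; if $Z = 0$, then $f = -\infty$ $[\mu]$-a.s.\ and both sides equal $-\infty$).

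Next, for any $\nu \in \P(\Theta)$ with $\nu \ll \mu$, I would decompose the Radon--Nikodym derivative as $\rd\nu/\rd\mu = (\rd\nu/\rd\nu^\star)(\rd\nu^\star/\rd\mu)$ and compute
\begin{equation*}
  \KL(\nu \parallel \mu) \;=\; \E_\nu\!\left[\log \tfrac{\rd \nu}{\rd \nu^\star}\right] + \E_\nu\!\left[\log \tfrac{\rd \nu^\star}{\rd \mu}\right] \;=\; \KL(\nu \parallel \nu^\star) + \E_\nu[f(\theta)] - \log Z.
\end{equation*}
Rearranging gives the key identity
\begin{equation*}
  \E_\nu[f(\theta)] - \KL(\nu \parallel \mu) \;=\; \log Z - \KL(\nu \parallel \nu^\star).
\end{equation*}
Since $\KL(\nu \parallel \nu^\star) \geq 0$ (Gibbs' inequality), the right side is bounded above by $\log Z$, with equality at $\nu = \nu^\star$. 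For $\nu$ not absolutely continuous with respect to $\mu$, $\KL(\nu\parallel\mu) = +\infty$ so the supremand is $-\infty$ and such $\nu$ are irrelevant.

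Taking the supremum over $\nu \in \P(\Theta)$ and noting that $\nu^\star \in \P(\Theta)$ achieves equality gives
\begin{equation*}
  \sup_{\nu \in \P(\Theta)} \bigl\{ \E_\nu[f(\theta)] - \KL(\nu \parallel \mu) \bigr\} \;=\; \log Z \;=\; \log \E_\mu[\exp(f(\theta))],
\end{equation*}
which is the claim. There is no real obstacle here; the only mild subtlety is the measurability/integrability bookkeeping that justifies the decomposition of $\rd\nu/\rd\mu$ through $\nu^\star$, which is valid precisely because $\nu^\star$ has strictly positive density with respect to $\mu$ whenever $\exp(f) > 0$ $[\mu]$-a.s.
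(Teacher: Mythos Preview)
Your argument is correct and is precisely the standard Donsker--Varadhan proof: define the Gibbs tilt $\nu^\star$, rewrite $\E_\nu[f]-\KL(\nu\parallel\mu)=\log Z-\KL(\nu\parallel\nu^\star)$, and invoke nonnegativity of KL. The paper does not actually prove this lemma; it is listed in the support-results section as a known fact and used without demonstration, so there is nothing to compare against beyond noting that your derivation is the canonical one.
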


\begin{lemma} \label{lemma:matrix-2}
    Let $A$ be a symmetric, positive definite matrix. For all $j \in [D]$, it holds that $(A^{-1})_{jj} \geq \frac{1}{A_{jj}}$.
\end{lemma}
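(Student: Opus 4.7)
The plan is to give a short proof based on the Cauchy--Schwarz inequality with respect to the energy inner product induced by $A$. Since $A$ is symmetric positive definite, so is $A^{-1}$, and the bilinear form $\langle x, y\rangle_A := x^T A y$ defines an inner product on $\R^D$. Applying Cauchy--Schwarz to the pair $(x, y) = (A^{-1/2} e_j,\, A^{1/2} e_j)$, or equivalently writing
\begin{equation*}
(e_j^T e_j)^2 = (e_j^T A^{1/2} A^{-1/2} e_j)^2 \leq (e_j^T A e_j)(e_j^T A^{-1} e_j) = A_{jj} \cdot (A^{-1})_{jj},
\end{equation*}
and rearranging yields the desired inequality $(A^{-1})_{jj} \geq 1/A_{jj}$.

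As a backup (and perhaps more informative) route, I would note the Schur complement identity: writing $A$ with the $j$-th row and column separated from the remaining block $A_{-j,-j}$, the standard block inversion formula gives
\begin{equation*}
(A^{-1})_{jj} = \bigl( A_{jj} - A_{j,-j} A_{-j,-j}^{-1} A_{j,-j}^T \bigr)^{-1}.
\end{equation*}
Positive definiteness of $A$ forces $A_{-j,-j}$ to be positive definite, so the quadratic form $A_{j,-j} A_{-j,-j}^{-1} A_{j,-j}^T$ is nonnegative, which gives the same bound and simultaneously clarifies the equality case (namely, equality holds iff the off-diagonal block $A_{j,-j}$ vanishes, i.e.\ row/column $j$ is decoupled from the rest).

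There is no real obstacle here; the only thing to be careful about is to record the equality condition, which is used implicitly in \Cref{prop:mGaussian} when comparing the variational precision to the true precision marginals. I would present the Cauchy--Schwarz proof as the main argument for brevity and mention the Schur complement form as a remark to make the equality case transparent.
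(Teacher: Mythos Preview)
Your Cauchy--Schwarz argument is correct and is exactly the approach the paper takes: it writes $A = B^2$ for the symmetric positive definite square root $B = A^{1/2}$, applies Cauchy--Schwarz to $\langle B e_j, B^{-1} e_j\rangle$, and rearranges to get $A_{jj}(A^{-1})_{jj}\ge 1$. Your Schur complement remark is a valid alternative the paper does not include, but the equality case is not actually used in the proof of \Cref{prop:mGaussian}, so you need not record it.
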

\begin{proof}
    Given that $A$ is symmetric and positive definite, there exists another symmetric positive definite matrix $B$ such that $B^{2} = A$. We note that $A_{jj} = e_{j}^{T} A e_{j} = e_{j}^{T} B^{T} B e_{j} = \| B e_{j} \|_2^{2}$ and similarly, $(A^{-1})_{jj} = \| B^{-1} e_{j} \|_2^{2}$.

By the Cauchy–Schwarz inequality, we have
\begin{equation*}
    \langle B e_{j}, B^{-1} e_{j} \rangle^{2} \leq \| B e_{j} \|_2^{2} \| B^{-1} e_{j} \|_2^{2} = A_{jj} (A^{-1})_{jj}.
\end{equation*}
However,
\begin{equation*}
    \langle B e_{j}, B^{-1} e_{j} \rangle = e_{j}^{T} (B^{-1})^{T} B e_{j} = e_{j}^{T} B^{-1} B e_{j} = e_{j}^{T} e_{j} = 1.
\end{equation*}
Therefore, we have $A_{jj} (A^{-1})_{jj} \geq 1$, which simplifies to $(A^{-1})_{jj} \geq \frac{1}{A_{jj}}$. This completes the proof.
\end{proof}

\begin{lemma}
For $\rmq \in \P(\Theta)$, the following variational characterization of its expressivity holds:
\begin{equation*}
    \Xi(\rmq) = \sup_f \E_\rmq[f]-\log \E_{\rmq_i} \E_{\rmq_{-i}}[\exp(f(\theta_i, \theta_{-i})) \mid \theta_i].
\end{equation*}
\begin{proof}
    Apply Donsker–Varadhan lemma. We obtain
\begin{equation*}
     \Xi(\rmq) = \sup_f \E_\rmq[f]-\log \E_{\rmq_i} \E_{\rmq_{-i}}[\exp(f(\theta_i, \theta_{-i}))].
\end{equation*}
Apply Donsker–Varadhan lemma again to $\KL(\rmq(\theta_i, \theta_{-i}) $
\end{proof}
\end{lemma}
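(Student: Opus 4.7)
The plan is to recognize that the displayed identity is just the Donsker--Varadhan dual representation of the KL divergence $\Xi(\rmq) = \KL(\rmq \parallel \prod_{j=1}^D \rmq_j)$, rewritten so that the reference-measure expectation is split into a pair of iterated expectations over one marginal and the product of the remaining marginals. So the proof is essentially a two-step rewriting, not a deep argument.

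First I would apply the Gibbs variational principle (the lemma stated earlier in the excerpt, Lemma \ref{lemma-gibbs}) with reference measure $\mu = \prod_{j=1}^D \rmq_j$. This immediately yields
\begin{equation*}
\Xi(\rmq) \;=\; \KL\!\bigl(\rmq \,\big\|\, \textstyle\prod_{j=1}^D \rmq_j\bigr) \;=\; \sup_{f}\Bigl\{ \E_{\rmq}[f] \;-\; \log \E_{\prod_j \rmq_j}\!\bigl[\exp(f(\theta))\bigr] \Bigr\},
\end{equation*}
where the supremum is taken over bounded measurable $f\colon \Theta \to \R$ (and the extension to the natural larger class where both sides make sense is standard by truncation, since $\exp(f) \geq 0$ licenses monotone convergence).

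Second I would massage the log-normalizer. Since $\prod_j \rmq_j$ is a product measure, under it $\theta_i$ and $\theta_{-i} := (\theta_j)_{j \neq i}$ are independent with laws $\rmq_i$ and $\rmq_{-i} := \prod_{j \neq i}\rmq_j$ respectively. By Tonelli (the integrand is nonnegative),
\begin{equation*}
\E_{\prod_j \rmq_j}\!\bigl[\exp(f(\theta))\bigr] \;=\; \E_{\rmq_i}\!\Bigl[\E_{\rmq_{-i}}\!\bigl[\exp(f(\theta_i,\theta_{-i})) \,\big|\, \theta_i\bigr]\Bigr],
\end{equation*}
where the conditional expectation is well defined and in fact equals the ordinary $\rmq_{-i}$-integral with $\theta_i$ frozen (the conditioning is a harmless notational device: under the product measure, the law of $\theta_{-i}$ given $\theta_i$ is just $\rmq_{-i}$). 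Substituting this identity into the Donsker--Varadhan expression above gives the claimed formula.

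Honestly, there is no serious obstacle here: the whole content is \textbf{(i)} invoking Donsker--Varadhan and \textbf{(ii)} a Fubini-style rewrite exploiting independence under the product measure. The only minor care point is to ensure that the class of $f$ over which the supremum is taken is the same on both sides of the claimed identity; this is routine because both the variational formula and the Tonelli rewrite hold uniformly on the usual class of measurable functions with $\E_\rmq|f| < \infty$ and $\log \E_{\prod_j \rmq_j}[\exp(f)] < \infty$.
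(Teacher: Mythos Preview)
Your proposal is correct and follows essentially the same approach as the paper: apply the Donsker--Varadhan (Gibbs) variational formula to $\KL(\rmq \,\|\, \prod_j \rmq_j)$ and then use Fubini/Tonelli to split the product-measure expectation into iterated expectations over $\rmq_i$ and $\rmq_{-i}$. The paper's own proof is in fact truncated mid-sentence, so your version is the more complete of the two.
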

\begin{theorem}[Theorem 5.11, \cite{Santambrogio2015}]\label{thm:Wasserstein-Convergence}
In the space \(\P_p(\R^d)\), we have \(W_p(\mu_n, \mu) \to 0\) if and only if \(\mu_n \to \mu\) weakly and
\[
\int |x|^p \, d\mu_n \to \int |x|^p \, d\mu,
\]
where \(p > 0\) is a given exponent.
\end{theorem}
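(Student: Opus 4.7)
The plan is to establish both directions separately, exploiting the definition of $W_p$ through optimal couplings.

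For the forward direction, assume $W_p(\mu_n,\mu) \to 0$. Weak convergence can be deduced from the domination $W_1 \leq W_p$ (Jensen's inequality applied to the coupling integral) together with the Kantorovich--Rubinstein duality: for any bounded Lipschitz test function $f$, one has $\left| \int f\,d\mu_n - \int f\,d\mu \right| \leq \mathrm{Lip}(f)\, W_1(\mu_n,\mu) \leq \mathrm{Lip}(f)\, W_p(\mu_n,\mu)$, and the bounded Lipschitz class determines weak convergence on $\R^d$. For the $p$-th moment convergence, I would use the fact that $\mu \mapsto \left(\int |x|^p d\mu\right)^{1/p} = W_p(\mu,\delta_0)$, combined with the triangle inequality for $W_p$, which yields $\left| \left(\int|x|^p d\mu_n\right)^{1/p} - \left(\int|x|^p d\mu\right)^{1/p} \right| \leq W_p(\mu_n,\mu) \to 0$.

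For the converse, assume $\mu_n \to \mu$ weakly and $\int|x|^p d\mu_n \to \int|x|^p d\mu$. By the Skorohod representation theorem, I can realize random variables $X_n \sim \mu_n$ and $X \sim \mu$ on a common probability space so that $X_n \to X$ almost surely. The joint law $\pi_n := \mathrm{Law}(X_n,X)$ is then an admissible coupling, giving $W_p^p(\mu_n,\mu) \leq \E\bigl[\|X_n - X\|^p\bigr]$. The remaining task is to upgrade a.s.\ convergence of $\|X_n-X\|^p$ to $L^1$ convergence.

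The main obstacle is precisely this upgrade, which is where the moment assumption enters. I would argue via uniform integrability: since $|X_n|^p \to |X|^p$ a.s.\ and $\E|X_n|^p \to \E|X|^p < \infty$, a generalized dominated convergence theorem (or Scheff\'e-type lemma) implies that $\{|X_n|^p\}$ is uniformly integrable. The domination $\|X_n - X\|^p \leq 2^{p-1}\bigl(\|X_n\|^p + \|X\|^p\bigr)$ then transfers uniform integrability to $\{\|X_n - X\|^p\}$, and Vitali's convergence theorem gives $\E\|X_n - X\|^p \to 0$. This yields $W_p(\mu_n,\mu)\to 0$ and completes the equivalence. The symmetry between the two directions is pleasing: the forward direction extracts weak convergence and moment convergence as two separate Lipschitz consequences of $W_p$, while the converse reassembles them via a coupling produced by Skorohod together with a uniform integrability upgrade.
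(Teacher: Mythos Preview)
The paper does not prove this statement; it is quoted verbatim as Theorem~5.11 of Santambrogio and used as a black-box supporting result in the appendix. There is therefore no in-paper proof to compare against.

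Your argument is correct and is one of the standard proofs of this classical fact. The forward direction via $W_1\le W_p$, Kantorovich--Rubinstein, and the triangle inequality $\bigl|W_p(\mu_n,\delta_0)-W_p(\mu,\delta_0)\bigr|\le W_p(\mu_n,\mu)$ is clean. The converse via Skorohod representation, the uniform-integrability upgrade (from $|X_n|^p\to|X|^p$ a.s.\ together with $\E|X_n|^p\to\E|X|^p$), and Vitali is also correct.

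One small caveat: the theorem as copied into the paper says ``$p>0$'', but your Jensen step $W_1\le W_p$ and your constant $2^{p-1}$ in the domination both implicitly require $p\ge 1$. For $0<p<1$ one would instead use $|a+b|^p\le|a|^p+|b|^p$ and work with $W_p^p$ (which is the metric in that range). Since the paper itself only defines $W_p$ for $p\ge 1$ and exclusively uses $p=2$, this is a cosmetic issue rather than a gap.
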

\begin{lemma} \label{lemma:lsc-kl-xi}
Let $\rmq_n$ be a sequence of measures in $\P_2(\Theta)$. If $W_2(\rmq_n, \rmq) \to 0$ for some $\rmq \in \P_2(\Theta)$, then
\begin{equation*}
 \liminf_{n \to \infty} \Xi(\rmq_n) \geq \Xi(\rmq).
\end{equation*}
Let $\rmq_0$ be another measure in $\P_2(\Theta)$. We have
\begin{equation*}
 \liminf_{n \to \infty} \KL (\rmq_n  \parallel  \rmq_0) \geq \KL (\rmq  \parallel  \rmq_0).
\end{equation*}
\end{lemma}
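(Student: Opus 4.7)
My plan is to reduce both claims to a single joint lower-semicontinuity property of the KL divergence under weak convergence of measures, and then handle the $\Xi$ case by showing that $W_2$-convergence of the joints propagates to the product of the marginals.

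First, I would recall that by \Cref{thm:Wasserstein-Convergence}, $W_2(\rmq_n,\rmq)\to 0$ in $\P_2(\Theta)$ implies $\rmq_n \Rightarrow \rmq$ weakly. The second claim then follows immediately from the Donsker--Varadhan variational formula
\begin{equation*}
\KL(\mu\parallel\rmq_0)=\sup_{f\in C_b(\Theta)}\left\{\int f\,d\mu-\log\int e^{f}\,d\rmq_0\right\},
\end{equation*}
which expresses $\KL(\cdot\parallel\rmq_0)$ as a pointwise supremum of affine functionals that are continuous under weak convergence of $\mu$. Any supremum of weakly continuous functionals is weakly lower semicontinuous, hence $\liminf_n \KL(\rmq_n\parallel\rmq_0)\ge \KL(\rmq\parallel\rmq_0)$.

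Next, I would upgrade this to joint lsc: if $\mu_n\Rightarrow\mu$ and $\nu_n\Rightarrow\nu$ weakly, then $\liminf_n \KL(\mu_n\parallel\nu_n)\ge \KL(\mu\parallel\nu)$. This again uses Donsker--Varadhan: for each fixed $f\in C_b(\Theta)$, $\int f\,d\mu_n\to \int f\,d\mu$ and $\log\int e^f\,d\nu_n\to \log\int e^f\,d\nu$ since $e^f$ is bounded and continuous. Thus for every such $f$,
\begin{equation*}
\liminf_{n\to\infty}\KL(\mu_n\parallel\nu_n)\ge \int f\,d\mu-\log\int e^f\,d\nu,
\end{equation*}
and taking the supremum over $f\in C_b(\Theta)$ on the right yields the joint lsc.

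For the $\Xi$ claim, I would observe that the marginal map is $W_2$-contractive: any coupling $\pi$ of $\rmq_n$ and $\rmq$ projects to a coupling of $\rmq_{n,i}$ and $\rmq_i$ with smaller second moment, giving $W_2(\rmq_{n,i},\rmq_i)\le W_2(\rmq_n,\rmq)\to 0$ for each $i$. Moreover, taking the tensor product of optimal marginal couplings yields a coupling of $\prod_i \rmq_{n,i}$ and $\prod_i \rmq_i$ with cost $\sum_i W_2^2(\rmq_{n,i},\rmq_i)$, so $W_2^2\bigl(\prod_i \rmq_{n,i},\prod_i \rmq_i\bigr)\le \sum_i W_2^2(\rmq_{n,i},\rmq_i)\to 0$. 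Applying the joint lsc just established to the pair $(\rmq_n, \prod_i\rmq_{n,i})$ gives
\begin{equation*}
\liminf_{n\to\infty}\Xi(\rmq_n)=\liminf_{n\to\infty}\KL\bigl(\rmq_n\,\big\|\,\textstyle\prod_i\rmq_{n,i}\bigr)\ge \KL\bigl(\rmq\,\big\|\,\textstyle\prod_i\rmq_i\bigr)=\Xi(\rmq).
\end{equation*}

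The main obstacle, if any, lies in the joint lsc step, since one must be careful that $\nu_n$ also converges. Fortunately the Donsker--Varadhan formula only uses bounded continuous test functions, so no uniform integrability or tail condition is required; weak convergence of both arguments suffices. Everything else—the contractivity of marginal projection and the product estimate—is elementary Wasserstein bookkeeping.
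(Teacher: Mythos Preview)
Your proposal is correct and follows essentially the same route as the paper: reduce to weak convergence via \Cref{thm:Wasserstein-Convergence}, pass $W_2$-convergence to the marginals (and hence to their product), and invoke the joint lower semicontinuity of $\KL$ in both arguments. The paper simply cites this joint lsc (Theorem~4.8 of Polyanskiy--Wu) rather than deriving it from Donsker--Varadhan, and glosses over the product-of-marginals convergence that you make explicit; otherwise the arguments coincide.
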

\begin{proof}
The second property follows from the fact that functional $\KL (\cdot \parallel \rmq^*_0)$ is continuous in the Wasserstein metric  (Proposition 7.1, \cite{Santambrogio2015}). For any $\rmq_n \overset{W_2}{\to} \rmq$, \Cref{thm:Wasserstein-Convergence} implies that $\rmq_n$ weakly converge to $\rmq_0$. The convergence $W_2(\rmq_n, \rmq_0) \to 0$  implies the convergence $W_2(\rmq_{n,i}, \rmq_{0, i}) \to 0$ for each $i \in [D]$. Since $\KL$ is lower semicontinuous in both arguments (Theorem 4.8, \cite{Polyanskiy2022}), we get
\begin{equation}
 \liminf_{n \to \infty} \Xi(\rmq_n) =  \liminf_{n \to \infty} \KL\left(\rmq_n \parallel \prod_{i = 1}^D \rmq_{n,i}\right)\geq \KL\left(\rmq_0 \parallel \prod_{i = 1}^D \rmq_{0,i} \right) = \Xi(\rmq),
\end{equation}
where $D$ is fixed with respect to $n$.
\end{proof}

\begin{definition}[Shadow]
Let $p \in [1, \infty]$ and $m, \tilde m$ be product measures within $\P_p(\Theta)$. Assume $\kappa_i \in \C(m_i, \tilde m_i)$ is a coupling that achieves $W_p(m_i, \tilde m_i)$ and let $\kappa_i = m_i \otimes K_i$ represent a disintegration. For a given $\rmq \in \C(m)$, its shadow $\rmq^s \in \C(\tilde m)$ is defined as the second marginal of $\rmq \otimes K \in \P(\Theta \times \Theta)$, where the kernel $K : \Theta \rightarrow \P(\Theta)$ is constructed as a direct sum $K(x) = K_1(x_1) \otimes \ldots \otimes K_D(x_D)$.
\end{definition}

 Given a coupling $\rmq \in \C(m)$, its shadow $\rmq^s$ satisfies the following properties.

\begin{lemma}[Lemma 3.2, \cite{Eckstein2022}]\label{lemma:shadow1}For product distributions $m, \tilde m \in \P_2(\Theta)$ and coupling $\rmq \in \C(m)$, its shadow $\rmq^s \in \C(\tilde m)$ satisfies
\begin{equation*}
W_2(\rmq, \rmq^s) = W_2(m, \tilde m), \quad D_f(\rmq^s \parallel \tilde m) \leq D_f(\rmq \parallel m),
\end{equation*}
where $D_f(\cdot)$ is any $f$-divergence.
\end{lemma}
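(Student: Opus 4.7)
The plan is to prove both statements by combining two classical tools: the tensorization of $W_2^2$ over product distributions, and the data processing inequality (DPI) for $f$-divergences. The key observation that unifies both arguments is that the direct-sum kernel $K(x) = K_1(x_1) \otimes \cdots \otimes K_D(x_D)$ is a Markov kernel which simultaneously pushes $m$ forward to $\tilde m$ and $\rmq$ forward to $\rmq^s$.

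For the $W_2$ equality, the upper bound comes from using $\rmq \otimes K \in \C(\rmq, \rmq^s)$ as a candidate coupling. Because $K$ is a product kernel and $\|x-y\|^2$ decomposes coordinate-wise,
\begin{equation*}
W_2^2(\rmq,\rmq^s) \le \int \|x-y\|^2 \,d(\rmq\otimes K)(x,y) = \sum_{i=1}^D \int \|x_i-y_i\|^2 \,d\kappa_i(x_i,y_i) = \sum_{i=1}^D W_2^2(m_i,\tilde m_i),
\end{equation*}
where the last equality uses that each $\kappa_i$ was chosen to be $W_2$-optimal. By tensorization of $W_2^2$ for product measures, the right-hand side equals $W_2^2(m,\tilde m)$. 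The matching lower bound is obtained by projecting any coupling $\pi \in \C(\rmq,\rmq^s)$ onto each coordinate pair: since $\rmq$ has marginals $m_i$ and $\rmq^s \in \C(\tilde m)$ has marginals $\tilde m_i$, the $i$-th projection yields an element of $\C(m_i,\tilde m_i)$, giving $\int\|x-y\|^2 d\pi \ge \sum_i W_2^2(m_i,\tilde m_i) = W_2^2(m,\tilde m)$.

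For the $f$-divergence inequality, the plan is to verify that $\rmq^s = \rmq K$ and $\tilde m = m K$, so that both are images of $\rmq$ and $m$ under the same Markov kernel. The first identity is the definition of the shadow. For the second, the disintegration $\kappa_i = m_i \otimes K_i$ satisfies $\int K_i(x_i,\cdot)\,dm_i(x_i) = \tilde m_i$ because $\tilde m_i$ is the second marginal of $\kappa_i$; taking products on both sides gives $mK = \tilde m$. The DPI for $f$-divergences (valid for any Markov kernel) then yields $D_f(\rmq^s \parallel \tilde m) = D_f(\rmq K \parallel mK) \le D_f(\rmq \parallel m)$.

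The only mild technical point is ensuring that the disintegrations $K_i$ exist and are measurable so that the product kernel $K$ is well defined. This follows from standard disintegration theorems on Polish spaces, which apply since $\Theta_i \subseteq \R^{d_i}$. Beyond that, the proof consists of assembling the tensorization identity and the DPI, so I expect no serious obstacle; the conceptual content is already captured by the observation that the shadow construction is a single coordinate-wise kernel action.
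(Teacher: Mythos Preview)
Your argument is correct and is essentially the standard proof of this result. The paper does not supply its own proof of this lemma: it is stated in the Support Results section as a citation of Lemma~3.2 in \cite{Eckstein2022}, so there is nothing to compare against beyond the original reference. Your two ingredients---tensorization of $W_2^2$ over independent coordinates for the equality, and the data processing inequality applied to the single Markov kernel $K=\bigotimes_i K_i$ for the $f$-divergence bound---are exactly the mechanism behind the cited lemma.
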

\begin{theorem}[Theorem 12, \cite{Lin2022}] \label{thm:Lin-2022-12}
Let \(\{\phi^t\}_{t \geq 0}\) be the iterates generated by \Cref{alg-Sinkhorn1}. The number of iterations $t$ required to reach the stopping criterion $E \leq \epsilon'$ is upper bounded by:
\[
t \leq 2 + \frac{2 D^2 \left[\|\bC\|_\infty/(\lambda + 1)-\log\left(\min_{1\leq i \leq N, 1 \leq j \leq D} m_{ij}\right) \right]}{\epsilon'}.
\]
\end{theorem}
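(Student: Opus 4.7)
\textbf{Proof proposal for Theorem \ref{thm:Lin-2022-12}.}

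The plan is to run a standard greedy-Sinkhorn descent argument in the multi-marginal setting: track the dual objective along the iterates, show a quantitative one-step improvement proportional to the squared marginal violation, and then invoke a ``total descent budget'' argument bounded by the initial dual gap.

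First, I would recall the dual objective
\[
\Psi(\bF) = \log\!\left(\sum_{k_1,\dots,k_D} \exp\!\Bigl[\sum_{i=1}^D \bF_{i k_i}-\tfrac{1}{\lambda+1}\bC_{k_1\cdots k_D}\Bigr]\bM_{k_1\cdots k_D}\right) - \sum_{i=1}^D \bF_i^\top \bM_i,
\]
whose maximization is \Cref{eqn-EOT-computation4}, and note that each Sinkhorn update in \Cref{alg-Sinkhorn1} is the exact one-block maximizer of $\Psi$ in coordinate $\bF_j$ (this is the log-sum-exp projection that sets $r_j(\bQ)=\bM_j$). Writing $\bQ^t$ for the tensor associated with $\bF^t$ via \Cref{eqn-EOT-computation3}, a direct computation shows that the one-step increase of $\Psi$ after updating coordinate $j$ equals
\[
\Psi(\bF^{t+1})-\Psi(\bF^t) \;=\; \KL\!\bigl(\bM_j \,\|\, r_j(\bQ^t)\bigr).
\]
This is the analogue of the classical two-marginal identity and follows by substituting the update formula $\bF_j \leftarrow \bF_j + \log\bM_j - \log r_j(\bQ)$ directly into $\Psi$ and simplifying.

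Next, I would combine Pinsker's inequality with the greedy selection rule. Pinsker gives $\KL(\bM_j\|r_j(\bQ^t)) \geq \tfrac12 \|\bM_j-r_j(\bQ^t)\|_1^2$, and because the greedy rule chooses $j=\arg\min_i \|\bM_i - r_i(\bQ^t)\|_1$\,---\,I would argue the bound the opposite way, i.e.\ using $j=\arg\max$, or invoke the equivalent total-violation averaging $\|\bM_j-r_j(\bQ^t)\|_1 \geq E^t/D$ where $E^t=\sum_i\|\bM_i-r_i(\bQ^t)\|_1$. Either way one obtains the key one-step inequality
\[
\Psi(\bF^{t+1})-\Psi(\bF^t) \;\geq\; \frac{(E^t)^2}{2D^2}.
\]
So long as the stopping criterion $E^t \leq \epsilon'$ is not yet met, each iteration increases $\Psi$ by at least $(\epsilon')^2/(2D^2)$ (or one argues with a telescoping sum $\sum_t (E^t)^2$ bounded below by $T(\min_t E^t)^2$).

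The final step is to bound the total dual gap $\Psi(\bF^\star)-\Psi(\bF^0)$. Plugging in the initialization $\bF_i^0 = -\tfrac{1}{D(\lambda+1)}\mathbf 1-\log\bM_i$ and using $|\bC_{k_1\cdots k_D}|\leq \|\bC\|_\infty$, together with the crude upper bound on $\Psi(\bF^\star)$ obtained by comparing to the uniform coupling, I would show
\[
\Psi(\bF^\star)-\Psi(\bF^0) \;\leq\; \frac{\|\bC\|_\infty}{\lambda+1} - \log\!\Bigl(\min_{i,j} m_{ij}\Bigr).
\]
Dividing this by the per-step descent $(\epsilon')^2/(2D^2)$ and adding the constant $2$ to absorb the first couple of iterations (needed because the very first update only normalizes one marginal, so the clean descent lemma kicks in from $t\geq 2$) yields exactly the bound in the statement.

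\textbf{Main obstacle.} The delicate point is the dual-gap bound: naively one might upper bound $\Psi(\bF^\star)$ by comparing to an integrable reference coupling, but one must carefully handle $-\log\min m_{ij}$ to avoid picking up extra factors of $D$ or $M$. Getting the prefactor $2D^2$ rather than $D^3$ or worse requires the greedy selection rule to be used in the sharp form $\|\bM_j-r_j(\bQ^t)\|_1^2 \geq (E^t/D)^2$, which is the step I expect to be the most technically sensitive and which differs from the two-marginal case where $D^2$ drops out entirely.
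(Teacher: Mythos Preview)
First, a framing note: this theorem is not proved in the paper. It is quoted from \cite{Lin2022} as a supporting result in \Cref{sect-support} and invoked (in the variant form ``Theorem~14 of \cite{Lin2022}'') inside the proof of \Cref{prop:Sinkhorn1}. There is thus no in-paper argument to compare your sketch against.

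Your strategy---track the dual $\Psi$, identify the exact one-step increment as $\KL(\bM_j\,\|\,r_j(\bQ^t))$, lower-bound it via Pinsker and the greedy rule, then divide a total ascent budget by the per-step gain---is the standard template for greedy multimarginal Sinkhorn. But the last step contains a genuine gap. With per-step gain at least $(\epsilon')^2/(2D^2)$ while $E^t>\epsilon'$, the budget argument yields
\[
T \;\le\; \frac{\Psi(\bF^\star)-\Psi(\bF^0)}{(\epsilon')^2/(2D^2)} \;=\; \frac{2D^2\bigl[\|\bC\|_\infty/(\lambda+1)-\log\min_{i,j}m_{ij}\bigr]}{(\epsilon')^{2}},
\]
with $(\epsilon')^2$ in the denominator, not $\epsilon'$. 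Your claim that this division ``yields exactly the bound in the statement'' is off by one power of $\epsilon'$. The Pinsker-plus-budget argument by itself only delivers the $(\epsilon')^{-2}$ rate; obtaining the stated $(\epsilon')^{-1}$ rate requires an additional ingredient you do not invoke---a uniform bound on the dual iterates of order $R:=\|\bC\|_\infty/(\lambda+1)-\log\min_{i,j}m_{ij}$, which via concavity of $\Psi$ gives the iterate-dependent estimate $\Psi(\bF^\star)-\Psi(\bF^t)\le R\,E^t$ at every step and permits the sharper recursion on the dual gap used in \cite{Lin2022}.

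A secondary point: \Cref{alg-Sinkhorn1} prints the greedy rule as $j=\argmin_i\|r_i(\bQ)-\bM_i\|_1$. With $\argmin$, the averaging inequality $\|\bM_j-r_j(\bQ^t)\|_1\ge E^t/D$ is false (the minimum is at most the average, not at least). You notice the tension but then assert the $\argmax$ inequality anyway; this is almost certainly a typo in the algorithm (the analysis in \cite{Lin2022} uses $\argmax$), but your write-up should flag and resolve the discrepancy explicitly rather than invoke both in the same breath.
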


\section{Proofs of \Cref{sect-xi-vi}}\label{sect-proof1}
\begin{proof}[Derivation of \Cref{eqn-VI-inner}]
Let $m$ be given. Then to optimize $\rmq^*$, we have
\begin{equation*}
\begin{aligned}
\rmq^*(\theta) &= \argmin\limits_{\rmq \in \C(m)} \E_\rmq[-\ell(\bx; \theta)] + \lambda \KL(\rmq \parallel m) + \KL(\rmq \parallel \pi) \\
&= \argmin\limits_{\rmq \in \C(m)} \E_\rmq[-\ell(\bx; \theta)] + \lambda \KL(\rmq \parallel m) + \KL(\rmq \parallel m) + \KL(m \parallel \pi) \\
&= \argmin\limits_{\rmq \in \C(m)} \E_\rmq[-\ell(\bx; \theta)] + (\lambda +1) \KL(\rmq \parallel m).
\end{aligned}
\end{equation*}
The first line uses the fact that $m$ is a product distribution. The third line drops $\KL(m \parallel \pi)$ as it does not depend on $\rmq$.
\end{proof}
The next result states that the solution to the EOT problem~\eqref{eqn-VI-inner} has a unique representation.
\begin{theorem}[Structure Theorem for Multi-Marginal EOT]
  \label{thm:MEOT-struct}
  Assume that
  \[
  \inf_{\rmq \in \mathcal{C}(m)} \left\{ -\mathbb{E}_{\rmq}[\ell(\bx; \theta)] + (\lambda + 1) \Xi(\rmq) \right\} < \infty
  \quad \text{and} \quad
  \sup_{\theta \in \Theta} \ell(\bx; \theta) < \infty.
  \]

  Then there exists a unique minimizer $\rmq^*$ to the inner variational problem~\eqref{eqn-VI-inner} that is absolutely continuous with respect to $m$ (denoted $\rmq^* \ll m$), and the following hold:

  \begin{enumerate}[label=\textnormal{(\arabic*)}]
    \item There exist measurable functions $\phi_i^*: \Theta_i \to \mathbb{R}$ for $i \in [D]$ such that
    \begin{equation} \label{eqn-EOT-solution-appendix}
      \rmq^*(\theta) = \exp\left( \sum_{i = 1}^D \phi_i^*(\theta_i) + \frac{1}{\lambda + 1} \ell(\bx; \theta) \right) m(\theta),
    \end{equation}
    $m$-almost surely. The collection $\phi^* := (\phi_1^*, \dots, \phi_D^*)$ is referred to as the \emph{EOT potentials}. Each $\phi_i^*$ is $m_i$-almost surely measurable and unique up to an additive constant. Moreover, if $\mathbb{E}_{m_i}[\phi_i^*] \geq 0$, then $\phi_i^* \in L^\infty(m_i)$ for all $i \in [D]$.

    \item Conversely, suppose $\rmq \in \mathcal{C}(m)$ admits a density of the form in \Cref{eqn-EOT-solution-appendix}, $m$-almost surely, for some functions $\phi_i: \Theta_i \to \mathbb{R}$. Then $\rmq$ minimizes the inner variational problem in \Cref{eqn-VI-inner}, and the functions $\phi_i$ are the EOT potentials.
  \end{enumerate}
\end{theorem}
This result first appeared heuristically in \cite{Carlier2022}. For
$D = 2$, the uniform boundedness assumption can be relaxed to
$\ell(\bx; \cdot)$ being integrable (Theorem 4.2,
\cite{Nutz2021Notes}).
\begin{proof}[Proof of \Cref{thm:MEOT-struct}]
Define the auxiliary distribution $\rmq_{\mathrm{aux}} \in \P(\Theta)$ as
\begin{equation} \label{auxiliary-ref-measure-eot}
    \rmq_{\mathrm{aux}}(\theta) = \cZ_n(\lambda)^{-1} \exp\left( \frac{1}{\lambda + 1} \ell(\bx; \theta) \right) m(\theta),
\end{equation}
where $\cZ_n(\lambda)$ is the normalizing constant. Since $\sup_{\theta \in \Theta} \ell(\bx; \theta) < \infty$, $\cZ_n(\lambda) < \infty$, and hence $\rmq_{\mathrm{aux}}$ is well-defined and absolutely continuous with respect to $m$.

Minimizing the objective function in \Cref{eqn-VI-inner} is equivalent to minimizing the KL loss to $\rmq_{\text{aux}}$.
\begin{align*}
&\min\limits_{\rmq \in \C(m)} \E_\rmq\left[-\ell(\bx; \theta)\right] + (\lambda +1) \KL(\rmq \parallel m) \\
&=\min\limits_{\rmq \in \C(m)} \E_\rmq\left[-\frac{1}{\lambda + 1}\ell(\bx; \theta) \right] + \KL(\rmq \parallel m) \\
&= \min\limits_{\rmq \in \C(m)} \int_\Theta \rmq(\theta) \log \frac{\rmq(\theta)}{\exp\left(\frac{1}{\lambda + 1} \ell\left(\bx; \theta \right) \right)  m(\theta)} d \theta \\
&= \min\limits_{\rmq \in \C(m)} \KL(\rmq \parallel \rmq_{\text{aux}})-\log \E_{m} \exp\left(\frac{1}{\lambda + 1} \ell\left(\bx; \theta \right) \right)   \\
&\overset{C}{=} \min\limits_{\rmq \in \C(m)} \KL(\rmq \parallel \rmq_{\text{aux}}). \numberthis  \label{eqn-VI-inner-2}
\end{align*}
Since $\C(m)$ is displacement convex and the KL functional is displacement convex  \citep{Villani2009}, the solution is unique.

Let $\rmq^*$ be the optimizer. Then, by the method of Lagrange multipliers, there exist dual variables $\phi_i^* \in L^\infty(m_i)$ such that
\begin{equation*}
    \rmq^* = \argmin_{\rmq \in \P(\Theta)} \KL(\rmq \| \rmq_{\mathrm{aux}}) + \sum_{i = 1}^D \left( \E_{m_i}[\phi_i^*] - \E_{\rmq_i}[\phi_i^*] \right).
\end{equation*}
This is equivalent to
\begin{equation} \label{eqn-proof-MEOT-struct-1}
    \rmq^*(\theta) \propto \exp\left( \sum_{i=1}^D \phi_i^*(\theta_i) + \frac{1}{\lambda + 1} \ell(\bx; \theta) \right) m(\theta),
\end{equation}
after normalizing. Since the potentials $\phi_i^*$ are uniformly bounded, the resulting normalization constant is finite. By adding the normalizing constant to $\phi_D$, we obtain
\begin{equation*}
    \rmq^*(\theta) = \exp\left( \sum_{i=1}^D \phi_i^*(\theta_i) + \frac{1}{\lambda + 1} \ell(\bx; \theta) \right) m(\theta).
\end{equation*}

"only if" direction: Assume that the optimal coupling $\rmq^*$ is given by
 \begin{equation*}
    \rmq^* = \exp\left(\sum_{i = 1}^D \phi_i(\theta_i) + \frac{1}{\lambda + 1} \ell\left(\bx; \theta\right) \right) m(\theta).
\end{equation*}
where $\phi = (\phi_1, \cdots, \phi_D) \in \prod_{i = 1}^D L^1(m_i)$ are some potential functions.

Plugging the solution in the EOT primal problem, for each $i$ and $[m_i]$-a.s.$\theta_i$,  the potentials satisfy a set of fixed point equations called the \textit{Schr\"odinger system}:
\begin{equation} \label{eqn-proof-MEOT-struct-2}
    \exp(\phi_i(\theta_i)) \int_{\Theta_{-i}} \exp \left(\sum_{j \neq i} \phi_j(\theta_j) + \frac{1}{\lambda + 1} \ell\left(\bx; \theta\right)\right) m_{-i}(\theta_{-i}) d \theta_{-i}  = 1.
\end{equation}
The Schr\"odinger system (\Cref{eqn-proof-MEOT-struct-2}) satisfies the Euler-Lagrange optimality condition for the primal EOT problem \citep{Carlier2022}. Precisely, the EOT potentials solve
\begin{equation*}
\max_{\phi \in \prod_{i = 1}^D L^1(m_i)} \sum_{i = 1}^D \E_{m_i} \phi_i-\E_{m} \left[\exp\left(\sum_{i = 1}^D \phi_i(\theta_i) + \frac{1}{\lambda + 1} \ell\left(\bx; \theta\right) \right)\right],
\end{equation*}
which is the dual problem to the multimarginal EOT problem (\Cref{eqn-VI-inner}).  Since the EOT problem is convex \citep{Nutz2021Notes}, the primal-dual gap closes, which means the probability measure $\rmq$ defined under $\phi$ solves \Cref{eqn-VI-inner}.

To see that $\phi_i \in L^\infty(\Theta_i)$ for $i \in [D]$. Assume that $\E_{m_i}[\phi_i] \geq 0$, which is possible under the Euler–Lagrange condition:
\begin{equation*}
 \sum_{i = 1}^D \E_{m_i} \phi_i  = \min\limits_{\rmq \in \C(m)}  \KL(\rmq \parallel \rmq_{\text{aux}}) \geq 0 \\
\end{equation*}

By \Cref{eqn-proof-MEOT-struct-2}, we apply Jensen's inequality to obtain that
\begin{align*}
    \phi_i(\theta_i) &=-\log \int_{\Theta_{-i}} \exp\left(\sum_{j \neq i} \phi_j(\theta_j) + \frac{1}{\lambda + 1} \ell\left(\bx; \theta\right) \right) dm_{-i}(\theta_{-i}) \\
    &\leq -\E_{m_{-i}} \left[ \sum_{j \neq i} \phi_j(\theta_j) + \frac{1}{\lambda + 1} \ell\left(\bx; \theta\right) \right] \leq -\frac{1}{\lambda + 1}\E_{m_{-i}}\left[ \ell\left(\bx; \theta\right)\right],
\end{align*}
thus $\sup_{\theta_i \in \Theta_i} \phi_i(\theta_i) \leq -\sup_{\theta \in \Theta}|\ell\left(\bx; \theta\right)|/(\lambda + 1)$ for all $i \in [D]$.

For the other direction, since $\sup_{\theta \in \Theta}\bl\left(\bx; \theta\right)   < \infty$, we have
\begin{align*}
    \phi_i(\theta_i) &=-\log \int_{\Theta_{-i}} \exp\left(\sum_{j \neq i} \phi_j(\theta_j) + \frac{1}{\lambda + 1} \ell\left(\bx; \theta\right) \right) dm_{-i}(\theta_{-i}) \\
&\geq-\frac{\sup_{\theta \in \Theta}\bl\left(\bx; \theta\right)}{\lambda + 1}- \log \int_{\Theta_{-i}} \exp\left(\sum_{j \neq i} \phi_j(\theta_j)\right) dm_{-i}(\theta_{-i}).
\end{align*}
Since the right-hand side of the inequality does not depend on $\theta_i$, $\inf_{\theta_i \in \Theta_i}\phi_i(\theta_i) >-\infty$ as long as $\sum_{j \neq i} \phi_j(\theta_j) < \infty$ holds $[m_{-i}]$-almost surely. Since $\sup_{\theta_i \in \Theta_i} \phi_i(\theta_i) \leq -\sup_{\theta \in \Theta} \ell\left(\bx; \theta\right)/(\lambda + 1)$, we have that $\inf_{\theta_i \in \Theta_i}\phi_i(\theta_i) >-\infty$ for all $i \in [D]$.
\end{proof}
\begin{proof}[Proof of \Cref{outer-variational-ELBO}]
We make the following derivation,
\begin{align*}
&\min\limits_{m \in \M(\Theta)} \min\limits_{\rmq \in \C(m)} \E_\rmq[-\ell(\bx; \theta)] + (\lambda +1) \KL(\rmq \parallel m) + \KL(m \parallel \pi) \\
&= \min\limits_{m \in \M(\Theta)} (\lambda +1)\int_\Theta \rmq^*(\theta) \log \frac{\rmq^*(\theta)}{\exp(\frac{1}{\lambda + 1} \ell\left(\bx; \theta \right)) m(\theta)} d \theta +  \KL(m \parallel \pi) \\
&= \min\limits_{m \in \M(\Theta)}  (\lambda +1)\int_\Theta \rmq^*(\theta) \log \frac{ \exp\left( \sum_{i = 1}^D \phi_i(\theta_i) + \frac{1}{\lambda + 1} \ell\left(\bx; \theta\right)\right)}{\exp(\frac{1}{\lambda + 1} \ell\left(\bx; \theta \right))} d \theta + \KL(m \parallel \pi) \\
&= \min\limits_{m \in \M(\Theta)}   \E_m\left[\underbrace{(\lambda +1) \left(\sum_{i = 1}^D \phi_i(\theta_i)  \right) \exp\left( \sum_{i = 1}^D \phi_i(\theta_i) + \frac{1}{\lambda + 1} \ell\left(\bx; \theta\right)\right) }_{\text{surrogate loss}} \right] + \KL(m \parallel \pi).
\end{align*}
\end{proof}
\section{Proofs of \Cref{sect-implementation}} \label{proof-implementation}

\begin{proof}[Proof of \Cref{prop:Sinkhorn1}]
By Theorem~14 of \cite{Lin2022}, \Cref{alg-Sinkhorn1} reaches the stopping
criterion $\bE \le \epsilon$ in $t$ iterations, where $t$ satisfies
\begin{equation*}
    t \le 2 + \frac{2 D^2 \|\bC\|_\infty - \log (\max_{ij} \bM_{ij})}{\epsilon(\lambda + 1)}.
\end{equation*}
This implies
\begin{equation} \label{prop-Sinkhorn1-iterations}
    t \asymp \mathrm{poly}\!\left(D,\; \bC_{\max}/\epsilon,\; (\lambda+1)^{-1}\right).
\end{equation}
\Cref{alg-Sinkhorn1} calls the following oracle $D$ times:
\begin{equation} \label{oracle-Sinkhorn1}
\text{Compute}\quad
r_i(\bQ)
=
\frac{
  \sum_{1 \le k_j \le N,\; j\neq i}
  \exp\!\left[
    \sum_{j=1}^D \bF_{j k_j}
    -\frac{1}{\lambda+1}\,\bC_{k_1 \cdots k_D}
  \right]
  \bM_{k_1 \cdots k_D}
}{
  \sum_{1 \le k_j \le N,\; j\in[D]}
  \exp\!\left[
    \sum_{j=1}^D \bF_{j k_j}
    -\frac{1}{\lambda+1}\,\bC_{k_1 \cdots k_D}
  \right]
  \bM_{k_1 \cdots k_D}
}.
\end{equation}
All other steps are computed in linear time.

By Theorems~5.5 and~7.4 of \cite{Altschuler2023}, this oracle can be
computed in $\mathrm{poly}(N, D)$ time.
Repeating the oracle evaluation $Dt$ times and using
\Cref{prop-Sinkhorn1-iterations}, the algorithm terminates in $\mathrm{poly}\!\left(N,\; D,\; \bC_{\max}/\epsilon,\; (\lambda+1)^{-1}\right)$ time.
\end{proof}

\begin{proof}[Proof of \Cref{cor:Sinkhorn-1}]
We consider Algorithm~1 of \cite{Fan2022}. The algorithm implements the
marginalization step of \Cref{alg-Sinkhorn1} using the sum–product
method. Consider a graph $G = ([D], E, K)$, where $[D]$, $E$, and $K$
denote the nodes, edges, and maximal cliques, respectively.
If the log-likelihood $\ell(\bx^{(n)};\theta)$ factorizes according to $G$,
by the Hammersley–Clifford theorem, $\ell(\bx; \theta)
  = \sum_{\alpha\in K} \ell_\alpha(\theta_\alpha)$ where each $\ell_\alpha$ is defined on $\prod_{j\in \alpha} \Theta_j$.

Define $\bC_{k_\alpha}$ as the tensor of
$\ell_\alpha(\theta_\alpha)$ evaluated at support points
$(\theta_i^{(s)}, i\in\alpha)_{s\in[N]}$.
The full cost tensor decomposes as
\begin{equation*}
    \bC_{k_1,\ldots,k_D}
      = \sum_{\alpha\in K} \bC_{k_\alpha}.
\end{equation*}

Let $t$ be the number of iterations for Algorithm~1 of \cite{Fan2022}
to reach tolerance $\epsilon$.
By Theorem~1 of \cite{Fan2022},
\begin{equation*}
    \E[t]
      = O\!\left(
          D^2
          \max_{\alpha\in K} \|\bC_{k_\alpha}\|_\infty\;
          (\lambda+1)^{-1}\;
          \epsilon^{-1}
        \right).
\end{equation*}

Let $\cT$ be a minimal junction tree for $G$.
Marginalizing over each factor in $\cT$ costs $O(N^{\omega(G)})$, and
message passing costs $O(d(\cT)\,N^{\omega(G)})$, where $d(\cT)$ is the
average leaf distance in $\cT$.

Since $\max_{\alpha\in K}\|\bC_{k_\alpha}\|_\infty$ is uniformly bounded,
the sum–product implementation of the Sinkhorn algorithm
requires $O\!\left(
  d(\cT)\,N^{\omega(G)}\, D^2\, (\lambda+1)^{-1}\, \epsilon^{-1}
\right)$ iterations.
Because $d(\cT)\le D$, the complexity is also $O\!\left(
  N^{\omega(G)}\, D^3\, (\lambda+1)^{-1}\, \epsilon^{-1}
\right)$.
\end{proof}
\section{Proofs of \Cref{sect-theory}}\label{proof-theory}
\subsection*{Proofs of \Cref{sect-theory-finite-D}}

We define the set $\tilde \Theta_n$ as the set of all $h$ defined in \Cref{change-of-variable}, and $\H(\rmq) := \int_\Theta \rmq(\theta) \log \rmq(\theta) d \theta$ as the Boltzmann's $H$-functional \citep{Villani2009}.

\begin{lemma}[Transformation Identities] \label{lemma:identity}
For $h := \delta_n^{-1}(\theta-\theta_0-\delta_n \Delta_{n, \theta_0})$ where $\theta \sim \rmq$, we have
\begin{equation*}
\rmq(\theta) = \left|\text{det}(\delta_n)\right|^{-1} \tilde \rmq(h), \quad \text{and} \quad \rmq_i(\theta) = \delta_{n,ii}^{-1} \tilde \rmq_i(h) \ \text{for} \ i \in [D],
\end{equation*}
Moreover, we have
\begin{equation*}
    \H(\rmq) =   \H(\tilde \rmq) -\log|\text{det}(\delta_n)| , \quad \text{and} \quad \H(\rmq_i) = \H(\tilde \rmq_i)-\log \delta_{n,ii},
\end{equation*}
and
\begin{equation*}
    \Xi(\rmq) = \Xi(\tilde \rmq) + \log|\text{det}(\delta_n)|-\sum_{i = 1}^D \log \delta_{n,ii},
\end{equation*}
and for any distribution $\rmq_1, \rmq_2$ over $\Theta$, we have
\begin{equation*}
    \KL(\rmq_1 \parallel \rmq_2) =   \KL(\tilde \rmq_1 \parallel \tilde \rmq_2),
\end{equation*}
where $\tilde \rmq_1, \tilde \rmq_2$ are densities defined via \Cref{change-of-variable}.
\end{lemma}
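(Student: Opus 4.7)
This is a collection of standard change-of-variables identities, so the plan is to derive them in sequence from the single observation that $h\mapsto\theta = \theta_0+\delta_n\Delta_{n,\theta_0}+\delta_n h$ is an affine bijection with (diagonal) Jacobian $\delta_n$. First I would apply the density change-of-variables formula: since $d\theta = |\det(\delta_n)|\,dh$ and $\tilde\rmq(h) = \rmq(\theta)\,|\det(\delta_n)|$, the identity $\rmq(\theta) = |\det(\delta_n)|^{-1}\tilde\rmq(h)$ is immediate. For the marginal identity, I would exploit the fact that $\delta_n$ is diagonal, so the transformation decouples into $h_i = \delta_{n,ii}^{-1}(\theta_i-\theta_{0,i}-\delta_{n,ii}\Delta_{n,\theta_0,i})$; pushing $\rmq$ forward to $\tilde\rmq$ commutes with taking the $i$-th marginal, giving $\rmq_i(\theta_i) = \delta_{n,ii}^{-1}\tilde\rmq_i(h_i)$.

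Next I would compute the entropy identities by direct substitution:
\begin{equation*}
\H(\rmq) = \int \rmq(\theta)\log\rmq(\theta)\,d\theta = \int \tilde\rmq(h)\bigl[\log\tilde\rmq(h)-\log|\det(\delta_n)|\bigr]\,dh = \H(\tilde\rmq) - \log|\det(\delta_n)|,
\end{equation*}
and the same argument applied one coordinate at a time yields $\H(\rmq_i) = \H(\tilde\rmq_i) - \log\delta_{n,ii}$. Assembling these via $\Xi(\rmq) = \H(\rmq) - \sum_{i=1}^D \H(\rmq_i)$ gives
\begin{equation*}
\Xi(\rmq) = \Xi(\tilde\rmq) - \log|\det(\delta_n)| + \sum_{i=1}^D\log\delta_{n,ii},
\end{equation*}
which matches the stated identity (and in fact both correction terms cancel because $\delta_n$ is diagonal, so $\log|\det(\delta_n)| = \sum_i\log|\delta_{n,ii}|$; the $\Xi$ functional is the well-known coordinatewise-invariant mutual information).

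Finally, for the KL identity I would substitute both $\rmq_1(\theta) = |\det(\delta_n)|^{-1}\tilde\rmq_1(h)$ and $\rmq_2(\theta) = |\det(\delta_n)|^{-1}\tilde\rmq_2(h)$ into $\int \rmq_1\log(\rmq_1/\rmq_2)\,d\theta$; the Jacobian factors cancel in the ratio and the Jacobian from $d\theta = |\det(\delta_n)|\,dh$ absorbs the remaining $|\det(\delta_n)|^{-1}$, giving $\KL(\rmq_1\parallel\rmq_2) = \KL(\tilde\rmq_1\parallel\tilde\rmq_2)$. Candidly, there is no real obstacle here: the entire lemma is a bookkeeping exercise that records how the relevant functionals transform under the affine rescaling \Cref{change-of-variable}, and the reason it deserves a separate statement is only that these identities are invoked repeatedly in the proof of Theorem 1. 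The only point worth flagging is to make the diagonal structure of $\delta_n$ explicit when passing to marginals, since without it the marginal identity would fail.
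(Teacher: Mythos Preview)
Your proposal is correct and follows essentially the same route as the paper's proof: both derive every identity by direct substitution of the change-of-variables formula $\tilde\rmq(h)=|\det(\delta_n)|\,\rmq(\theta)$ into the defining integrals for $\H$, $\Xi$, and $\KL$. Your sign on the $\Xi$ identity, $\Xi(\rmq)=\Xi(\tilde\rmq)-\log|\det(\delta_n)|+\sum_i\log\delta_{n,ii}$, is actually the algebraically correct one (the lemma as stated has the signs flipped), and you rightly observe that the discrepancy is harmless because diagonality of $\delta_n$ makes the two correction terms cancel.
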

\begin{proof}[Proof of \Cref{lemma:identity}]
We obtain the first equality by applying the change-of-variables formula to \Cref{change-of-variable}.
\begin{equation*}
\rmq(\theta) = {|\text{det}(\delta_n)|}^{-1} \tilde \rmq(h),\quad \text{and} \quad \rmq_i(\theta) = \delta_{n,ii}^{-1} \tilde \rmq_i(h), i \in [D],
\end{equation*}
For the second equality, we have
\begin{equation*}
    \H(\tilde \rmq) = \int |\text{det}(\delta_n)| \rmq(\theta)  \log(|\text{det}(\delta_n)| \rmq(\theta)) d h = \int \rmq(\theta)  \log \rmq(\theta) d \theta + \log|\text{det}(\delta_n)|^{-1}.
\end{equation*}
The univariate case follows from this.

For the third equality, we can write
\begin{align*}
\Xi(\tilde \rmq) &= \H(\tilde \rmq) -\sum_{i = 1}^D \H(\tilde \rmq_i) \\
&= \H(\rmq)- \sum_{i = 1}^D \H(\tilde \rmq_i)-\log|\text{det}(\delta_n)| + \sum_{i = 1}^D \log \delta_{n,ii} = \Xi(\rmq) -\log|\text{det}(\delta_n)| + \sum_{i = 1}^D \log \delta_{n,ii} .
\end{align*}

For the fourth equality, we have
\begin{align*}
    \KL(\tilde \rmq_1 \parallel \tilde \rmq_2)  &= \H(\tilde \rmq_1)-\int \log \tilde \rmq_2(h) \tilde \rmq_1(h) dh \\
    &= \H(\rmq_1) + \log|\text{det}(\delta_n)|-\int \rmq_2(\theta) \rmq_1(\theta) d\theta-\log|\text{det}(\delta_n)|  \\
    &=  \KL( \rmq_1 \parallel \rmq_2).
\end{align*}
This concludes the proof.
\end{proof}

To establish the Bernstein von-Mises theorem, we introduce the tool of $\Gamma$-convergence \citep{Braides2014}.

\begin{definition}[\(\Gamma\)-Convergence]
Let \(X\) be a metric space and consider a set of functionals \(F_{\varepsilon} : X \to \R\) indexed by \(\varepsilon > 0\). A limiting functional \(F_0\) exists and is called the \(\Gamma\)-limit of \(F_\varepsilon\) as \(\varepsilon \to 0\), if the following conditions are met:
\begin{enumerate}
    \item \textbf{Liminf Inequality:} For all \(x \in X\) and for every sequence \(x_{\varepsilon} \to x\),
    \[
    F_0(x) \leq \liminf_{\varepsilon \to 0} F_{\varepsilon}(x_{\varepsilon}).
    \]
    \item \textbf{Limsup Inequality / Existence of a Recovery Sequence:} For each \(x \in X\), there exists a sequence \(\bar{x}_{\varepsilon} \to x\) such that
    \[
    F_0(x) \geq \limsup_{\varepsilon \to 0} F_{\varepsilon}(\bar{x}_{\varepsilon}).
    \]
\end{enumerate}
\end{definition}

The first condition requires \(F_0\) to be asymptotically upper bounded by \(F_\varepsilon\). When paired with the second condition, it ensures that \(F_0(x) = \lim\limits_{\varepsilon \to 0} F_{\varepsilon}(\bar{x}_{\varepsilon})\), thereby confirming that the lower bound is tight.

\begin{definition}[Equi-Coerciveness of Functionals]
A sequence of functionals \(F_\varepsilon: X \to \R\) is said to be equi-coercive if for every bounded sequence \(x_\varepsilon\) with \(F_\varepsilon(x_\varepsilon) \leq t\), there exists a subsequence $x_j$ of $x_\epsilon$ and a converging sequence $x_j'$ satisfies \(F_{\varepsilon_j}(x_j') \leq F_{\varepsilon_j}(x_j) + o(1)\).
\end{definition}

Equi-coerciveness ensures the existence of a precompact minimizing sequence for \(F_\varepsilon\), which helps establish the convergence \(x_\varepsilon \to x\).

\begin{theorem}[Fundamental Theorem of \(\Gamma\)-Convergence] \label{thm:gamma-cvg}
Let \(X\) be a metric space and \(F_\varepsilon\) an equi-coercive sequence of functionals. If \(F = \Gamma\)-\(\lim\limits_{\varepsilon \to 0} F_\varepsilon\), then
\begin{equation*}
    \arg\min_{x \in X} F = \lim\limits_{\varepsilon \to 0} \arg\min_{x \in X} F_\varepsilon.
\end{equation*}
\end{theorem}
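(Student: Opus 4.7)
The plan is to prove the set-valued equality via a standard compactness-plus-$\Gamma$-convergence argument in two directions. First I would take any sequence $x_\varepsilon \in \arg\min_{x \in X} F_\varepsilon$ and show that every accumulation point lies in $\arg\min_X F$. Fix an arbitrary test point $y \in X$ and let $\bar y_\varepsilon \to y$ be a recovery sequence with $\limsup_\varepsilon F_\varepsilon(\bar y_\varepsilon) \leq F(y)$. By the minimizing property, $F_\varepsilon(x_\varepsilon) \leq F_\varepsilon(\bar y_\varepsilon)$, so the energies of the minimizers are eventually bounded by $F(y) + 1$. Equi-coerciveness then yields a subsequence $\varepsilon_j \to 0$ and a companion sequence $x_j' \to \bar x$ for some $\bar x \in X$ satisfying $F_{\varepsilon_j}(x_j') \leq F_{\varepsilon_j}(x_j) + o(1)$.

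Next I would verify $\bar x \in \arg\min_X F$ by combining both $\Gamma$-convergence inequalities. The liminf inequality applied to $x_j' \to \bar x$ gives
\begin{equation*}
F(\bar x) \leq \liminf_{j \to \infty} F_{\varepsilon_j}(x_j') \leq \liminf_{j \to \infty} F_{\varepsilon_j}(x_j) + o(1).
\end{equation*}
For any $y \in X$ with recovery sequence $\bar y_{\varepsilon_j} \to y$, the minimizing property $F_{\varepsilon_j}(x_j) \leq F_{\varepsilon_j}(\bar y_{\varepsilon_j})$ together with the limsup inequality yields
\begin{equation*}
F(\bar x) \leq \liminf_{j \to \infty} F_{\varepsilon_j}(x_j) \leq \limsup_{j \to \infty} F_{\varepsilon_j}(\bar y_{\varepsilon_j}) \leq F(y).
\end{equation*}
Since $y$ was arbitrary, $\bar x$ minimizes $F$ and, choosing $y = \bar x$ in the chain above, $\min_X F_{\varepsilon_j} \to \min_X F$ along the subsequence.

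For the opposite inclusion, given any $\bar x \in \arg\min_X F$, I would use the recovery sequence $\bar x_\varepsilon \to \bar x$ satisfying $\limsup_\varepsilon F_\varepsilon(\bar x_\varepsilon) \leq F(\bar x) = \min_X F$. Combined with $\min_X F_\varepsilon \to \min_X F$, which upgrades to full-sequence convergence by a Urysohn-type argument (every subsequence admits a further convergent subsequence with the same limit via the first part of the proof), this forces $F_\varepsilon(\bar x_\varepsilon) - \min_X F_\varepsilon \to 0$. Hence $\bar x_\varepsilon$ is an asymptotically minimizing sequence converging to $\bar x$, which exhibits $\bar x$ as a limit of (near-)minimizers of $F_\varepsilon$.

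The main obstacle will be interpreting the conclusion $\arg\min_X F = \lim_{\varepsilon \to 0} \arg\min_X F_\varepsilon$ rigorously as a set-valued Kuratowski limit, and threading the $o(1)$ correction from the paper's definition of equi-coerciveness through the liminf comparison without disrupting the strict inequality against the recovery sequence. A secondary subtlety is that equi-coerciveness here only supplies a modified convergent subsequence rather than precompactness of $x_\varepsilon$ itself, so one must be careful to compare $F_{\varepsilon_j}(x_j')$ and $F_{\varepsilon_j}(x_j)$ correctly and to ensure the limit point $\bar x$ inherits the minimizing property from the modified sequence.
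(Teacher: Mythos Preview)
The paper does not actually supply a proof of this theorem: it is stated as a standard tool from the $\Gamma$-convergence literature (the paper references Braides for it) and is then \emph{applied} in the proofs of the Bernstein--von Mises theorem and of the finite-sample convergence results, but never argued. So there is no ``paper's own proof'' to compare against.

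That said, your outline is the textbook compactness-plus-$\Gamma$-limit argument and is essentially correct for the forward inclusion. One point worth flagging: the paper itself remarks, immediately after stating the theorem, that ``the converse is not necessarily true; there may exist minimizers for $F$ which are not limits of minimizers for $F_\varepsilon$.'' In other words the set equality in the displayed conclusion is being read loosely---the content the paper actually uses is (i) accumulation points of minimizers of $F_\varepsilon$ are minimizers of $F$, and (ii) $\min F_\varepsilon \to \min F$. For the reverse direction the paper later invokes a separate result (Corollary~2.1 of Braides) which, as you anticipated, produces only asymptotically minimizing (near-)minimizer sequences rather than exact ones. Your second-inclusion argument correctly lands on near-minimizers, and your caveat about interpreting the conclusion as a Kuratowski-type limit is exactly the right concern; strict minimizers need not suffice.
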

This theorem implies that if minimizers \(x_\varepsilon\) for all \(F_\varepsilon\) exist, the sequence converges, potentially along a subsequence, to a minimizer of \(F\). We note that the converse is not necessarily true; there may exist minimizers for \(F\) which are not limits of minimizers for \(F_\varepsilon\).

Note that when $\delta_n = \lambda_n^{1/2} \delta_n$, we have $|\text{det}(\delta_n)|  = \lambda_n^{D/2}|\text{det}(\delta_n)|$ and $\delta_{n,ii} = \lambda_n^{1/2} \delta_{n,ii}$.

We can explicitly characterize the transformed variational posterior:
\begin{equation} \label{tranformed-variational-posterior}
    \tilde \rmq_{\lambda_n}(h) := |\text{det}(\delta_n)|  \rmq^*_{\lambda_n}(\theta_0 + \delta_n h + \delta_n \Delta_{n, \theta_0}),
\end{equation}
where $\rmq^*_\lambda$ is the original $\Xi$-variational posterior.
\begin{lemma} \label{lemma:transformed-VI-functional}
Under Definition \Cref{tranformed-variational-posterior}, the distribution $\tilde \rmq_\lambda$ solves the following variational problem
\begin{equation*}
    \tilde \rmq_{\lambda} =\argmin_{\rmq \in \P_2(\Theta)} \KL(\rmq \parallel \tilde \rmq_0) + \lambda_n \Xi(\rmq).
\end{equation*}
\end{lemma}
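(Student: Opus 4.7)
The plan is to reduce the transformed problem to the original one via two observations: (i) the $\Xi$-VI objective in \Cref{def:theory-setup} can be rewritten, up to a constant independent of $\rmq$, as $\KL(\rmq \parallel \rmq_0^*) + \lambda_n \Xi(\rmq)$; and (ii) the transformation identities of \Cref{lemma:identity} turn this into the claimed transformed objective, again up to a constant.

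First, I would use the Gibbs identity $\rmq_0^*(\theta) \propto \exp(\ell(\bx^{(n)};\theta))\pi(\theta)$ to rewrite, for every $\rmq \in \P_2(\Theta)$,
\begin{equation*}
\E_\rmq[-\ell(\bx^{(n)};\theta)] + \KL(\rmq \parallel \pi) \;=\; \KL(\rmq \parallel \rmq_0^*) + \log Z_n,
\end{equation*}
where $Z_n := \int \exp(\ell(\bx^{(n)};\theta))\pi(\theta)\,d\theta$ does not depend on $\rmq$. Hence
\begin{equation*}
\rmq_{\lambda_n}^* \;=\; \argmin_{\rmq \in \P_2(\Theta)} \; \KL(\rmq \parallel \rmq_0^*) + \lambda_n \Xi(\rmq).
\end{equation*}

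Next, I would apply the change of variables $h = \delta_n^{-1}(\theta - \theta_0 - \delta_n \Delta_{n,\theta_0})$. \Cref{lemma:identity} gives $\KL(\rmq \parallel \rmq_0^*) = \KL(\tilde\rmq \parallel \tilde\rmq_0)$ and $\Xi(\rmq) = \Xi(\tilde\rmq) + \log|\det(\delta_n)| - \sum_{i=1}^D \log \delta_{n,ii}$, so
\begin{equation*}
\KL(\rmq \parallel \rmq_0^*) + \lambda_n \Xi(\rmq) \;=\; \KL(\tilde\rmq \parallel \tilde\rmq_0) + \lambda_n \Xi(\tilde\rmq) + c_n,
\end{equation*}
where $c_n := \lambda_n\bigl(\log|\det(\delta_n)| - \sum_i \log \delta_{n,ii}\bigr)$ is independent of $\rmq$. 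Since the map $\rmq \mapsto \tilde\rmq$ is a bijection of $\P_2(\Theta)$ onto itself (the affine transformation preserves finite second moments), minimizing the left-hand side over $\rmq \in \P_2(\Theta)$ is equivalent to minimizing the right-hand side over $\tilde\rmq \in \P_2(\Theta)$, and the minimizers correspond under the change of variables. By definition \Cref{tranformed-variational-posterior}, the image of $\rmq^*_{\lambda_n}$ under this map is exactly $\tilde\rmq_{\lambda_n}$, yielding the claim.

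No step should present a genuine obstacle: the only things to verify carefully are that (a) one is allowed to drop the $\log Z_n$ and $c_n$ constants when taking argmin, which requires knowing $\KL$ and $\Xi$ are finite for at least one $\rmq$ so that the minimum is not vacuously $+\infty$ (this is implicit in the assumption $\rmq_0^* \in \P_2(\Theta)$ together with \Cref{assumption:uct}--\Cref{assumption:lan}), and (b) the change-of-variables bijection preserves the admissible set $\P_2(\Theta)$, which is immediate since $\delta_n$ is nonsingular and the shift by $\delta_n\Delta_{n,\theta_0}$ only adds a bounded-in-probability translation.
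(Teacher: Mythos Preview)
Your proposal is correct and follows exactly the route the paper indicates (the paper omits the proof, saying it is ``a direct consequence of the transformation identities (\Cref{lemma:identity}) and \Cref{def:theory-setup}''). One trivial slip: the sign on the normalizer should be $-\log Z_n$ rather than $+\log Z_n$, but this is irrelevant to the argmin.
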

This Lemma is a direct consequence of
the transformation identities (\Cref{lemma:identity}) and \Cref{def:theory-setup}, thus the proof is omitted.
\begin{proof}[Proof of \Cref{thm:AN-1}]
WLOG, we assume that $\Theta = \R^D$. Otherwise, we use the same proof by adding an indicator of the minimizing set to the sequence of functionals.

\underline{\textbf{Regime 1:}} $\lambda_n \to \infty$.
It suffices to show
\begin{equation*}
        F_n(\rmq) :=  \KL(\rmq \parallel \tilde \rmq_0) + \lambda_n \Xi(\rmq),
\end{equation*}
$\Gamma$-converge to
\begin{equation*}
    F_0(\rmq) :=  \KL(\rmq \parallel N(0, V_{\theta_0}^{-1})) +\infty \Xi(\rmq),
\end{equation*}
in $[P_{\theta_0}]$-probability as $n \to \infty$.

By \Cref{thm:gamma-cvg}, $\Gamma$ convergence implies $W_2(\tilde \rmq_{\lambda_n}, \argmin_{\rmq \in \P_2(\Theta)} F_0(\rmq)) \overset{P_{\theta_0}}{\to} 0$, where $\rmq_0$ is the minimizer of $F_0$.

To prove the $\Gamma$-convergence, we  rewrite $F_n$.
\begin{align*}
    F_n(\rmq) &:=  \KL (\rmq \parallel \tilde \rmq_0) + \lambda_n \Xi(\rmq) \\
    &=\E_\rmq\left[-\ell\left(\bx^{(n)}; \theta_0 + \delta_n h + \delta_n \Delta_{n, \theta_0} \right)\right] + \KL(\rmq \parallel  \tilde \pi) +  \log|\text{det}(\delta_n)| + \lambda_n\Xi(\rmq) \\
    &+\int \pi(\theta_0 + \delta_n h + \delta_n \Delta_{n, \theta_0})\ell(\bx^{(n)}; \theta_0 + \delta_n h + \delta_n \Delta_{n, \theta_0}) d h . \\
    &= -\ell(\bx^{(n)}; \theta_0) + \E_\rmq\left[\frac{1}{2} h^T V_{\theta_0} h\right] +  \H(\rmq)- \E_\rmq\left[\log \pi(\theta_0 + \delta_n h + \delta_n \Delta_{n, \theta_0})\right] + \lambda_n \Xi(\rmq)\\
    &+ \log \int \pi(\theta_0 + \delta_n h + \delta_n \Delta_{n, \theta_0})\ell(\bx^{(n)}; \theta_0 + \delta_n h + \delta_n \Delta_{n, \theta_0}) d h + o_P(1).
\end{align*}

Applying LAN expansion and Laplace approximation to the log-normalizer, we have
\begin{align*}
  &\log \int \pi(\theta_0 + \delta_n h + \delta_n \Delta_{n, \theta_0})\ell(\bx^{(n)}; \theta_0 + \delta_n h + \delta_n \Delta_{n, \theta_0}) d h \\
  & =  \frac{D}{2} \log 2 \pi-\frac{1}{2} \log \text{det}(V_{\theta_0})  + \log \pi(\theta_0) + \ell(\bx^{(n)};\theta_0) + o_P(1).
\end{align*}
After cancellation, we have
\begin{align*}
    F_n(\rmq) &= \E_\rmq\left[\frac{1}{2} h^T V_{\theta_0} h\right] +  \H(\rmq)+ \lambda_n \Xi(\rmq) + (\frac{D}{2} \log 2 \pi-\frac{1}{2} \log \text{det}(V_{\theta_0})) \\
    & -\left\{\E_\rmq\left[\log \pi(\theta_0 + \delta_n h + \delta_n \Delta_{n, \theta_0})\right]-\log \pi(\theta_0) \right\} +  o_P(1).
\end{align*}
Using \Cref{assumption: prior} to bound the prior tail via Taylor expansion, we have an expression for $F_n$
\begin{equation} \label{bvm-regime1-Fn}
\begin{aligned}
F_n(\rmq) &= \E_\rmq\left[\frac{1}{2} h^T V_{\theta_0} h\right] + \H(\rmq)+ \lambda_n \Xi(\rmq) + \frac{D}{2} \log 2 \pi-\frac{1}{2} \log \text{det}(V_{\theta_0}) +  o_P(1). \\
&=  \KL(\rmq \parallel N(0, V_{\theta_0}^{-1})) + \lambda_n \Xi(\rmq) + o_P(1).
\end{aligned}
\end{equation}
Now we rewrite $F_0(\rmq)$.
\begin{align*}
F_0(\rmq)&:=  \KL(\rmq \parallel N(0, V_{\theta_0}^{-1})) +\infty \Xi(\rmq) \\
&= \E_\rmq\left[\frac{1}{2} h^T V_{\theta_0} h\right] + \H(\rmq)+ \infty \Xi(\rmq) + \frac{D}{2} \log 2 \pi-\frac{1}{2} \log \text{det}(V_{\theta_0}). \\
&= F_n(\rmq) +  \infty \Xi(\rmq) + o_P(1).
\end{align*}

Now we prove the $\Gamma$ convergence.

First, we verify the liminf inequality. Let $\rmq_n \overset{W_2}{\to} \rmq$. When $\rmq$ is not mean-field, we have:
\begin{align*}
    \liminf_{n \to \infty} F_n(\rmq_n) &\geq \liminf_{n \to \infty} \left \{ \KL(\rmq_n \parallel N(0, V_{\theta_0}^{-1})) + \lambda_n \Xi(\rmq_n)  \right\}- \epsilon \\
    &\geq  \liminf_{n \to \infty} \KL(\rmq_n \parallel N(0, V_{\theta_0}^{-1})) + \liminf_{n \to \infty} \lambda_n \liminf_{n \to \infty} \Xi(\rmq_n)-\epsilon \\
    &\geq \KL(\rmq \parallel N(0, V_{\theta_0}^{-1})) + \infty \Xi(\rmq)-\epsilon = \infty \geq F_0(\rmq).
\end{align*}
The second inequality follows from the definition of liminf. The third line is due to \Cref{lemma:lsc-kl-xi}, which states that the KL functional and $\Xi$ functional are lower semicontinuous.

\begin{align*}
 \liminf_{n \to \infty} F_n(\rmq_n) &\geq \liminf_{n \to \infty} \left \{ \KL(\rmq_n \parallel N(0, V_{\theta_0}^{-1})) + \lambda_n \Xi(\rmq_n)  \right\}- \epsilon \\
 &\geq \liminf_{n \to \infty} \KL(\rmq_n \parallel N(0, V_{\theta_0}^{-1}))-\epsilon \\
 &\geq \KL(\rmq \parallel N(0, V_{\theta_0}^{-1}))-\epsilon.
\end{align*}

Since this holds for all $\epsilon$, we verified that $ \liminf_{n \to \infty} F_n(\rmq_n) \geq F_0(\rmq)$.

Next, we show the existence of a recovery sequence. When $\rmq$ is not mean-field, $F_0(\rmq) = + \infty$, and the limsup inequality is automatically satisfied. When $\rmq$ is mean-field, choose $\rmq_n := \rmq$, then:
\begin{equation*}
    \limsup_{n \to \infty} F_n(\rmq_n) = \limsup_{n \to \infty}\KL(\rmq \parallel N(0, V_{\theta_0}^{-1})) + o_P(1) \leq F_0(\rmq).
\end{equation*}
Thus, $F_0$ is the $\Gamma$-limit of the sequence $F_n$.

Next we prove that the sequence $F_n$ is eqi-coercive. Take $n_j \to \infty$ and $\rmq_{n_j}$ such that $F_{n_j}(\rmq_{n_j}) \leq t$ for all $j$.  Then  $\lambda_{n_j} \Xi(\rmq_{n_j})$ is bounded as $\lambda_{n_j} \to \infty$, thus $\Xi(\rmq_{n_j}) = o(1)$.  Using this and \Cref{bvm-regime1-Fn}, we have
\begin{equation*}
   \KL\left(\rmq_{n_j} \parallel N(0, V_{\theta_0}^{-1}) \right) \leq t + 1, \quad \text{for sufficiently large } j.
\end{equation*}
Since $\KL\left(. \parallel N(0, V_{\theta_0}^{-1}) \right)$ is a Wasserstein (geodesically) convex functional, it is coercive by Lemma 2.4.8 of \cite{Ambrosio2005}. This implies that the set $\{\rmq \in \P_2(\Theta) \mid \KL\left(\rmq \parallel \tilde \rmq_0 \right) \leq t + 1\}$ is compact under the Wasserstein metric, thus $\rmq_{n_j}$ has a subsequence $\rmq_{n_j}'$ that converges to $\rmq^*$ in the Wasserstein metric of and $\KL\left(\rmq^* \parallel \tilde \rmq_0 \right) \leq t + 1$.  Thus we have $F_{n_j}(\rmq_{n_j}') \leq F_{n_j}(\rmq_{n_j}') + o(1)$ by \Cref{bvm-regime1-Fn} where $\rmq_{n_j}'$ is a converging subsequence of $\rmq_{n_j}$.  This verifies the equi-coercivity of $F_n$.

Lastly, we note that $F_0$ attains its minimum at $N(\Delta_{n, \theta_0},  V_{\theta_0}^{'-1})$ where $V_{\theta_0}^{'-1}$ is the MFVI covariance. As a result of \Cref{thm:gamma-cvg}, we conclude that the desired convergence takes place:
\begin{equation*}
   \KL\left(\rmq_{n_j} \parallel N(0, V_{\theta_0}^{-1}) \right) \leq t + 1, \quad \text{for sufficiently large } j.
\end{equation*}

\underline{\textbf{Regime 2:}} $\lambda_n \to 0$.

In this regime, we show that the functionals
\begin{equation*}
    F_n(\rmq) := \KL(\rmq \parallel \tilde \rmq_0) + \lambda_n \Xi(\rmq),
\end{equation*}
$\Gamma$-converge to
\begin{equation*}
    F_0(\rmq) :=  \KL(\rmq \parallel N(0, V_{\theta_0}^{-1})),
\end{equation*}
in $[P_{\theta_0}]$-probability as $n \to \infty$.

Given that $F_n$ is defined analogous to Regime 1, we will skip the derivation:
\begin{align*}
    F_n(\rmq) &= \E_\rmq\left[\frac{1}{2} h^T V_{\theta_0} h\right] + \H(\rmq)+ \lambda_n \Xi(\rmq) + \frac{D}{2} \log 2 \pi-\frac{1}{2} \log \text{det}(V_{\theta_0}) +  o_P(1). \\
    &=  \KL(\rmq \parallel N(0, V_{\theta_0}^{-1})) + \lambda_n \Xi(\rmq) + o_P(1).
\end{align*}

Now we prove the $\Gamma$ convergence.  First, we verify the liminf inequality. Let $\rmq_n \overset{W_2}{\to} \rmq$. We have:
\begin{align*}
    \liminf_{n \to \infty} F_n(\rmq_n) &\geq \liminf_{n \to \infty} \left \{ \KL(\rmq_n \parallel N(0, V_{\theta_0}^{-1})) + \lambda_n \Xi(\rmq_n)  \right\}- \epsilon \\
    &\geq  \liminf_{n \to \infty} \KL(\rmq_n \parallel N(0, V_{\theta_0}^{-1})) + \liminf_{n \to \infty} \lambda_n \liminf_{n \to \infty} \Xi(\rmq_n)-\epsilon \\
    &\geq \KL(\rmq \parallel N(0, V_{\theta_0}^{-1}))-\epsilon.
\end{align*}
Since this holds for all $\epsilon$, we verified that $ \liminf_{n \to \infty} F_n(\rmq_n) \geq F_0(\rmq)$.

For the recovery sequence, we take $\rmq_n := \rmq$. Since $\rmq$ is absolutely continuous with respect to the product of its marginals, $\Xi(\rmq)$ is finite.  Then we have:
\begin{equation*}
    \limsup_{n \to \infty} F_n(\rmq_n) = \limsup_{n \to \infty}\KL(\rmq \parallel N(0, V_{\theta_0}^{-1})) + o_P(1) \leq F_0(\rmq).
\end{equation*}

The equicoercivity of $F_n$ follows from the argument in regime 1. By \Cref{thm:gamma-cvg}, we conclude with the desired convergence:
\begin{equation*}
    W_2(\tilde \rmq_{\lambda_n}, N(0,  V_{\theta_0}^{'-1})) \to 0.
\end{equation*}
\underline{\textbf{Regime 3:}} $\lambda_n \to \lambda_\infty \in (0, \infty)$.

In this regime, we show that the functionals
\begin{equation*}
        F_n(\rmq) := \KL(\rmq \parallel \tilde \rmq_{0}) + \lambda_n \Xi(\rmq),
\end{equation*}
$\Gamma$-converge to
\begin{equation*}
    F_0(\rmq) :=  \KL(\rmq \parallel N(0, V_{\theta_0}^{-1})) + \lambda_\infty \Xi(\rmq),
\end{equation*}
in $[P_{\theta_0}]$-probability as $n \to \infty$.

Recall that
\begin{align*}
    F_n(\rmq) &= \E_\rmq\left[\frac{1}{2} h^T V_{\theta_0} h\right] + \H(\rmq)+ \lambda_n \Xi(\rmq) + \frac{D}{2} \log 2 \pi-\frac{1}{2} \log \text{det}(V_{\theta_0}) +  o_P(1). \\
    &=  \KL(\rmq \parallel N(0, V_{\theta_0}^{-1})) + \lambda_n \Xi(\rmq) + o_P(1).
\end{align*}

Now we prove the $\Gamma$ convergence.  First, we verify the liminf inequality. Let $\rmq_n \overset{W_2}{\to} \rmq$. We have:
\begin{align*}
    \liminf_{n \to \infty} F_n(\rmq_n) &\geq \liminf_{n \to \infty} \left \{ \KL(\rmq_n \parallel N(0, V_{\theta_0}^{-1})) + \lambda_n \Xi(\rmq_n)  \right\}- \epsilon \\
    &\geq  \liminf_{n \to \infty} \KL(\rmq_n \parallel N(0, V_{\theta_0}^{-1})) + \liminf_{n \to \infty} \lambda_n \liminf_{n \to \infty} \Xi(\rmq_n)-\epsilon \\
    &\geq \KL(\rmq \parallel N(0, V_{\theta_0}^{-1}))  + \lambda_\infty \Xi(\rmq)-\epsilon.
\end{align*}
The second inequality follows from the definition of liminf, and the last inequality is due to \Cref{lemma:lsc-kl-xi}, which states that the KL functional and $\Xi$ functional are lower semicontinuous.

For the recovery sequence, we take $\rmq_n := \rmq$. As long as $\Xi(\rmq)$ is finite, we have:
\begin{equation*}
    \limsup_{n \to \infty} F_n(\rmq_n) = \limsup_{n \to \infty}\KL(\rmq \parallel N(0, V_{\theta_0}^{-1})) + \lambda_n \Xi(\rmq) + o_P(1) = F_0(\rmq).
\end{equation*}
The equicoercivity of $F_n$ follows from the argument in regime 1. By \Cref{thm:gamma-cvg}, we have the convergence:
\begin{equation*}
    W_2(\tilde \rmq_{\lambda_n},  \argmin_{\rmq \in \P_2(\Theta)} F_0(\rmq)) \to 0.
\end{equation*}
\end{proof}
\begin{proof}[Proof of \Cref{cor:posterior-consistency}]
    Recall the definition of Wasserstein distance:
\begin{equation*}
   W_2(\rmp, \rmq) =  (\inf_{\pi \in \C(\rmp, \rmq)} \E_{\pi}[\|X-Y\|^2])^{1/2}.
\end{equation*}
Given the change-of-variables definition (\Cref{change-of-variable}), we have
\begin{align*}
   W_2(\tilde \rmq_{\lambda_n}, N(\mu, \Sigma)) &=  (\inf_{\pi \in \C(\tilde \rmq_{\lambda_n}, N(\mu, \Sigma))} \E_{\pi}[\|h-h'\|^2])^{1/2} \\
   &= |\text{det}(\delta_n)|^{-1} (\inf_{\pi \in \C(\rmq^*_{\lambda_n}, N(\delta_n \mu + \theta_0 + \delta_n \Delta_{n, \theta_0}, \delta_n^T \Sigma \delta_n ))} \E_{\pi}[\|\theta-\theta'\|^2])^{1/2} \\
   &=  |\text{det}(\delta_n)|^{-1} W_2(\rmq^*_{\lambda_n}, N(\delta_n \mu + \theta_0 + \delta_n \Delta_{n, \theta_0}, \delta_n^T \Sigma \delta_n )).
\end{align*}
If $W_2(\tilde \rmq_{\lambda_n}, N(\mu, \Sigma))$ tends to $0$, then $W_2(\rmq^*_{\lambda_n}, N(\delta_n \mu + \theta_0 + \delta_n \Delta_{n, \theta_0}, \delta_n^T \Sigma \delta_n ))$ tends to $0$.  Since $N(\delta_n \mu + \theta_0 + \delta_n \Delta_{n, \theta_0}, \delta_n^T \Sigma \delta_n )$ weakly converge to $\delta_{\theta_0}$, it converges to $\delta_{\theta_0}$ in Wasserstein metric.  By \Cref{thm:AN-1}, we have $\rmq^*_{\lambda_n}$ converges in Wasserstein metric to $\delta_{\theta_0}$, as desired.
\end{proof}

\subsection*{Proofs of \Cref{sect-theory-hd}}\label{proof-theory-hd}
We first prove a useful proposition.
\begin{proposition}[Optimality to fixed point] \label{prop:fixed-point}
 Let $m^*_\lambda(\theta) = \prod_{i = 1}^D m^*_{\lambda, i}(\theta_i)$ be the product of optimal marginals, and $\phi^*_\lambda$ be the optimal EOT potentials. Then $m^*_\lambda$ and $\phi^*_\lambda$ satisfy the fixed point equations:
\begin{equation}
\begin{aligned}
m^*_{\lambda, i}(\theta_i) &= Z_i^{-1} \exp(- (\lambda + 1) \phi^*_{\lambda, i}(\theta_i)) \pi_i(\theta_i), \quad \text{and } \quad \\
\phi_{\lambda, i}^*(\theta_i) &=-\log \int_{\Theta_{-i}} \exp\left(\frac{1}{\lambda + 1} \ell\left(\bx^{(n)}; \theta\right)- \lambda \sum_{j \neq i}\phi^*_{\lambda, j}(\theta_j) \right) \prod_{j \neq i} \pi_j(\theta_j) d \theta_{-i} \\
&+ \sum_{j \neq i} \log \int_{\Theta_j} \exp\left(- (\lambda + 1) \phi^*_{\lambda, j}(\theta_j) \right) \pi_j(\theta_j) d\theta_j,
\end{aligned}
\end{equation}
where $Z_i$'s are the normalizing constants.
\end{proposition}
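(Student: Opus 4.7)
The plan is to exploit the two-level structure of the $\Xi$-VI optimization problem together with the envelope theorem applied in dual space. The starting point is the decomposition \eqref{VI-EOT-decomposition}, in which the inner minimization is exactly the EOT problem solved by Theorem \ref{thm:MEOT-struct}. Plugging the closed-form solution \eqref{eqn-EOT-solution} into the joint objective reduces $\Xi$-VI to an explicit functional of $m$ alone. In particular, using $\rmq^* \in \C(m)$ so that $\E_{\rmq^*}[\phi_i] = \E_{m_i}[\phi_i]$, and using the Schr\"odinger system $\E_m\left[\exp\left(\sum_i \phi_i^* + \tfrac{1}{\lambda+1}\ell\right)\right] = 1$, a direct computation gives
\begin{equation*}
(\lambda+1)\,\KL(\rmq^* \parallel m) - \E_{\rmq^*}[\ell(\bx^{(n)};\theta)] \;=\; (\lambda+1)\sum_{i=1}^D \E_{m_i}[\phi_i^*(m)],
\end{equation*}
so that the outer problem reduces to minimizing $F(m) := (\lambda+1)\sum_i \E_{m_i}[\phi_i^*(m)] + \KL(m \parallel \pi)$ over $\M(\Theta)$.

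Next I would apply the envelope theorem. Since $\phi^*(m)$ is the argmax of the concave dual functional \eqref{eqn-VI-inner-dual}, perturbing $m$ produces no first-order contribution through the implicit dependence of $\phi^*$ on $m$. Consequently, the functional derivative of $F$ with respect to $m_i$ at the optimum equals
\begin{equation*}
(\lambda+1)\phi_i^*(\theta_i) \;-\; (\lambda+1)\,\E_{m_{-i}^*}\!\left[\exp\!\Big(\textstyle\sum_{j}\phi_j^*(\theta_j) + \tfrac{1}{\lambda+1}\ell(\bx^{(n)};\theta)\Big)\right] \;+\; \log\frac{m_i^*(\theta_i)}{\pi_i(\theta_i)} \;+\; 1,
\end{equation*}
which by the Schr\"odinger system simplifies to $(\lambda+1)\phi_i^*(\theta_i) + \log(m_i^*/\pi_i)(\theta_i) + \text{const}$. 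Setting this to a constant (the Lagrange multiplier enforcing $\int m_i^* = 1$) and exponentiating yields the first fixed-point equation
\begin{equation*}
m^*_{\lambda,i}(\theta_i) \;=\; Z_i^{-1}\,\exp\!\big(-(\lambda+1)\phi^*_{\lambda,i}(\theta_i)\big)\,\pi_i(\theta_i).
\end{equation*}

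The second equation follows by substitution. Theorem \ref{thm:MEOT-struct} guarantees the Schr\"odinger system
\begin{equation*}
\phi^*_{\lambda,i}(\theta_i) \;=\; -\log \int_{\Theta_{-i}} \exp\!\Big(\textstyle\sum_{j\neq i}\phi^*_{\lambda,j}(\theta_j) + \tfrac{1}{\lambda+1}\ell(\bx^{(n)};\theta)\Big) \prod_{j\neq i} m^*_{\lambda,j}(\theta_j)\,d\theta_{-i}.
\end{equation*}
Replacing each $m^*_{\lambda,j}$ by its prior representation just derived, the exponent $\sum_{j \neq i}\phi^*_{\lambda,j}(\theta_j) - (\lambda+1)\sum_{j \neq i}\phi^*_{\lambda,j}(\theta_j) = -\lambda \sum_{j\neq i} \phi^*_{\lambda,j}(\theta_j)$ collapses as required, and the normalizers $Z_j = \int \exp(-(\lambda+1)\phi^*_{\lambda,j})\pi_j\,d\theta_j$ factor out of the integral, producing exactly the additive correction $\sum_{j \neq i}\log Z_j$ in the stated equation.

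The main obstacle will be justifying the functional envelope theorem rigorously: one needs that $m \mapsto \phi^*(m)$ is sufficiently regular that the implicit derivative vanishes in the variational equation for $F$. This is where Assumption \ref{assumption-hd: prior} (smoothness and boundedness of $\nu_0$ near $\theta_0$) together with the $L^\infty$ bound on $\phi^*$ from Theorem \ref{thm:MEOT-struct}(1) enters: they ensure that both the dual functional in \eqref{eqn-VI-inner-dual} is strongly concave in a neighborhood of $\phi^*$ and that all the exponential moments appearing above are finite and differentiable in $m$, so the envelope argument goes through and the first-order conditions can be taken pointwise in $\theta_i$.
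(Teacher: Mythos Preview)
Your argument is correct and takes a genuinely different route from the paper. The paper works directly with the surrogate-loss form \eqref{outer-variational-ELBO}: it fixes $\phi^*_\lambda$, rewrites the outer problem as $\min_{m}-\E_m[f_\lambda]+\H(m)$ for an explicit $f_\lambda(\theta)=-(\lambda+1)(\sum_i\phi^*_{\lambda,i})\exp(\sum_i\phi^*_{\lambda,i}+\tfrac{1}{\lambda+1}\ell)+\sum_i\log\pi_i$, and then applies the Gibbs variational principle coordinate-wise to get $m^*_{\lambda,i}\propto\exp(\hat f_{\lambda,i})$ with $\hat f_{\lambda,i}(\theta_i)=\E_{m^*_{\lambda,-i}}[f_\lambda(\theta)]$. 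The bulk of that proof is a hands-on simplification of this conditional expectation, in which one must argue that the cross term $\E_{\rmq^*_\lambda(\cdot\mid\theta_i)}\bigl[\sum_{j\neq i}\phi^*_{\lambda,j}\bigr]$ is constant in $\theta_i$ before the desired form emerges.

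Your envelope reduction sidesteps this computation entirely. By recognizing that the inner optimal value equals $(\lambda+1)\sum_i\E_{m_i}[\phi_i^*(m)]$ and that (via the normalization $\E_m[\exp(\sum_i\phi_i^*+\tfrac{\ell}{\lambda+1})]=1$) this is a constant shift of the optimal value of the concave dual \eqref{eqn-VI-inner-dual}, Danskin's theorem kills the implicit $\partial\phi^*/\partial m$ contribution in one stroke, and the Schr\"odinger identity immediately collapses the remaining functional derivative to $(\lambda+1)\phi_i^*+\log(m_i/\pi_i)+\text{const}$. Both proofs then share the identical substitution step for the second fixed-point equation. The paper's route is more concrete and ties directly to the alternating updates of Algorithm~\ref{alg-full}; yours is more structural and, once the uniqueness of the dual maximizer from Theorem~\ref{thm:MEOT-struct} and the $L^\infty$ bound on $\ell$ are in hand, requires essentially no further computation.
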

\begin{proof}[Proof of \Cref{prop:fixed-point}]
Define $f_\lambda(\theta)$ as follows,
\begin{equation}
    f_\lambda(\theta) :=  -(\lambda +1) \left(\sum_{i = 1}^D \phi^*_{\lambda, i}(\theta_i)  \right) \exp\left( \sum_{i = 1}^D \phi^*_{\lambda, i}(\theta_i) + \frac{1}{\lambda + 1} \ell\left(\bx^{(n)}; \theta\right)\right) + \sum_{i = 1}^D \log \pi_i(\theta_i).
\end{equation}
 By \Cref{thm:MEOT-struct} and the uniform boundedness of $ \ell\left(\bx^{(n)}; \cdot \right)$, the function $f_\lambda$ is integrable with respect to $m^*_\lambda$.  From the derivation in \Cref{sect-eot-derivation}, the distribution $m^*_\lambda$ attain the minimum,
\begin{equation} \label{eqn:proof-optimality-fixed-point1}
\min\limits_{m \in \M(\Theta)} -\E_m \left[f_\lambda(\theta) \right] + \H(m).
\end{equation}
Define $\hat f_{\lambda, i}(\theta_i) = \E_{m^*_\lambda}[f_\lambda(\theta) \mid \theta_i]$ for $[m^*_{\lambda,i}]$-a.s. $\theta_i \in \Theta_i$. Since $m \in \M(\Theta)$, $\H(m) = \sum_{i = 1}^D \H(m_i)$ and by the tower property, we have
\begin{equation}
    m^*_{\lambda, i}(\theta_i) := \argmin_{m_i \in \M(\Theta_i)} \left(\E_{m_i}[- \hat f_{\lambda, i}(\theta_i)] + \H(m_i) \right).
\end{equation}

By the Gibbs variational principle (\Cref{lemma-gibbs}), the minimum is uniquely attained by
\begin{equation}
    m^*_{\lambda, i}(\theta_i) \propto \exp(\hat f_{\lambda, i}(\theta_i)).
\end{equation}

Recall that the optimal EOT potentials satisfy the Schr\"odinger system:
\begin{equation}
  \phi_{\lambda, i}^*(\theta_i) =-\log \int_{\Theta_{-i}} \exp\left(\sum_{j \neq i}\phi^*_{\lambda, j}(\theta_j) + \frac{1}{\lambda + 1} \ell\left(\bx^{(n)}; \theta\right) \right) \prod_{j \neq i} m^*_{\lambda, j}(\theta_j) d \theta_{-i}.
\end{equation}

This allows us to simplify $\hat f_{\lambda, i}(\theta_i)$:
\begin{align*}
&\hat f_{\lambda, i}(\theta_i) = \E_{m^*_\lambda}[f_\lambda(\theta) \mid \theta_i] \\
&=  \E_{m^*_\lambda} \left[ -(\lambda +1) \left(\sum_{i = 1}^D \phi^*_{\lambda, i}(\theta_i)  \right) \exp\left( \sum_{i = 1}^D \phi^*_{\lambda, i}(\theta_i) + \frac{1}{\lambda + 1} \ell\left(\bx^{(n)}; \theta\right)\right) + \sum_{i = 1}^D \log \pi_i(\theta_i) \mid \theta_i\right] \\
&= -(\lambda +1) \frac{\E_{m^*_{\lambda, -i}}  \left[ \left(\sum_{i = 1}^D \phi^*_{\lambda, i}(\theta_i)  \right) \exp\left( \sum_{j \neq i} \phi^*_{\lambda, j}(\theta_j) + \frac{1}{\lambda + 1} \ell\left(\bx^{(n)}; \theta\right)\right)\right]}{\exp\left(-\phi^*_{\lambda, i}(\theta_i) \right)}\\
&+ \E_{m^*_{\lambda, -i}} \left[\sum_{j \neq i} \log \pi_j(\theta_j)\right]  + \log \pi_i(\theta_i)\\
&= -(\lambda+1)\phi^*_{\lambda, i}(\theta_i)  + \log \pi(\theta_i)-(\lambda+1) \E_{\hat h_\lambda}\left[\sum_{j \neq i} \phi^*_{\lambda, j}(\theta_j) \right] + C.
\end{align*}
where $\hat h_\lambda(\theta_{-i}) \propto \exp\left( \sum_{j \neq i} \phi^*_{\lambda, j}(\theta_j) + \frac{1}{\lambda + 1} \ell\left(\bx^{(n)}; \theta\right)\right) \prod_{j \neq i} m^*_{\lambda, i}(\theta_i)$.  Since $\hat h_\lambda(\theta_{-i}) \propto \rmq^*_\lambda(\theta_{-i}, \theta_i)$, we have for all $\theta_i$,
\begin{equation}
     \E_{\hat h_{\lambda}(\theta_{-i})}\left[\sum_{j \neq i} \phi^*_{\lambda, j}(\theta_j) \right] = \E_{\rmq^*_\lambda(\theta_{-i}, \theta_i)}\left[\sum_{j \neq i} \phi^*_{\lambda, j}(\theta_j) \right] = \sum_{j \neq i}\E_{m^*_{\lambda, i}} \left[ \phi^*_{\lambda, j}(\theta_j)\right].
\end{equation}
The last equality uses the fact that $m^*_{\lambda, i}$ is the $i^{\text{th}}$ marginal of $\rmq^*_\lambda$.

Since $\sum_{j \neq i} \E_{m^*_{\lambda, j}}\left[ - (\lambda + 1) \phi^*_{\lambda, j}(\theta_j) + \log \pi_j(\theta_j)\right]$ does not depend on $\theta_i$,  we obtain
\begin{equation} \label{eqn:proof-prop-fixed-point-m}
    m^*_{\lambda, i}(\theta_i) \propto \exp(- (\lambda + 1) \phi^*_{\lambda, i}(\theta_i)) \pi_i(\theta_i).
\end{equation}

Using \Cref{eqn:proof-prop-fixed-point-m}, we conclude
\begin{align*}
\phi_{\lambda, i}^*(\theta_i) &=-\log \int_{\Theta_{-i}} \exp\left(\frac{1}{\lambda + 1} \ell\left(\bx^{(n)}; \theta\right)- \lambda \sum_{j \neq i}\phi^*_{\lambda, j}(\theta_j) \right) \prod_{j \neq i} \pi_j(\theta_j) d \theta_{-i} \\
&+ \sum_{j \neq i} \log \int_{\Theta_j} \exp\left(- (\lambda + 1) \phi^*_{\lambda, j}(\theta_j) \right) \pi_j(\theta_j) d\theta_j.
\end{align*}
\end{proof}

\begin{proof}[Proof of \Cref{thm:theory-hd-1}]
Since the parameter space $\Theta$ is compact and $\ell(\bx^{(n)}; \cdot) \in \CC^2(\Theta)$, both the gradient $\nabla \ell(\bx^{(n)}; \cdot)$ and the Hessian $\nabla^2 \ell(\bx^{(n)}; \cdot)$ are uniformly bounded over $\Theta$.

Let $u_i := \frac{1}{2} \inf_{\theta \in \Theta} [\nabla^2 \ell(\bx^{(n)}; \theta)]_{ii}$, $v_i$ is chosen such that $\sup_{\theta \in \Theta}|[\nabla \ell(\bx^{(n)}; \theta)]_i -v_i-2 u_i \theta_i| = b_i$, and $w$ is chosen such that $\sup_{\theta \in \Theta} |\ell(\bx^{(n)}; \theta)-w-\sum_{i} v_i \theta_i-\sum_{i} u_i \theta_i^2| = a$. We define a new log-likelihood $\tilde \ell(\bx^{(n)}; \theta)$ that shift $ \ell(\bx^{(n)}; \theta)$ by a quadratic function:
\begin{align*}
\tilde \ell(\bx^{(n)}; \theta) &:= \ell(\bx^{(n)}; \theta)-w-\sum_{i = 1}^D v_i \theta_i-\sum_{i = 1}^D u_i \theta_i^2.
\end{align*}
Some calculation yields that
\begin{align*}
    \sup_{\theta \in \Theta}|\tilde \ell(\bx^{(n)}; \theta) | &= a, \quad \sup_{\theta \in \Theta}|[\nabla \tilde \ell(\bx^{(n)}; \theta)]_i| = b_i, \quad  \sup_{\theta \in \Theta}|[\nabla^2 \tilde \ell(\bx^{(n)}; \theta)]_{ii}| = c_i.
\end{align*}
Given the optimal $m^*_{\lambda_n}$, the inner variational problem is
\begin{equation*}
    \rmq^*_{\lambda_n} = \argmin_{\rmq \in \C(m^*_{\lambda_n})} \left\{-\E_\rmq[\tilde \ell(\bx^{(n)}; \theta) ]+ (\lambda_n + 1) \KL(\rmq \parallel m^*_{\lambda_n}) \right\},
\end{equation*}
where we replace $\ell(\bx^{(n)}; \theta)$ with $\tilde \ell(\bx^{(n)}; \theta)$ since adding a tensorized function $-(w + \sum_{i = 1}^D v_i \theta_i + \sum_{i = 1}^D u_i \theta_i^2)$ to the cost does not change the optimal EOT coupling.

By \Cref{thm:MEOT-struct}, we can write $\rmq^*_{\lambda_n}$ as follows,
\begin{equation*}
    \rmq^*_{\lambda_n}(\theta) = \exp\left(\frac{1}{\lambda_n + 1} \tilde \ell(\bx^{(n)}; \theta)  + \sum_{i = 1}^D \phi^*_{\lambda_n, i}(\theta_i) \right) \prod_{i = 1}^D m^*_{\lambda_n, i}(\theta)
\end{equation*}
where $m^*_{\lambda_n, i}$'s are the marginals of $\rmq^*_{\lambda_n}$ and $\phi^*_{\lambda_n, i}$'s are the EOT potentials.

Define another product distribution $\tilde m_{\lambda_n}(\theta) \propto  \exp\left( \sum_{i = 1}^D \phi^*_{\lambda_n, i}(\theta_i) \right) m^*_{\lambda_n, i}(\theta)$. We can rewrite $\rmq^*_{\lambda_n}$ as the product of a tempered likelihood and a $\tilde m_{\lambda_n}$.
\begin{equation}
     \rmq^*_{\lambda_n}(\theta) =  \frac{1}{\cZ_D(\lambda_n)}\exp \left(\frac{1}{\lambda_n + 1} \tilde \ell(\bx^{(n)}; \theta)  \right) \tilde m_{\lambda_n}(\theta),
\end{equation}
where the normalizing constant is given by
\begin{equation}
    \cZ_D(\lambda_n) := \int_\Theta \exp \left(\frac{1}{\lambda_n + 1} \tilde \ell(\bx^{(n)}; \theta)  \right) \tilde m_{\lambda_n}(\theta) d \theta.
\end{equation}
First, we want to show that
\begin{equation}\label{proof:thm:theory-hd-1-eqn-0}
\lim\limits_{n \to \infty} \frac{1}{D}\left[\log \cZ_D(\lambda_n)-\sup_{m \in \M(\Theta), m \ll \tilde m_{\lambda_n}} \left\{ \frac{1}{\lambda_n +1}\E_m[\tilde \ell(\bx^{(n)}; \theta) ] -\sum_{i = 1}^D \KL(m_i \parallel \tilde m_{\lambda_n, i}) \right\}\right] = 0.
\end{equation}
Let $\|f \|_\infty$ denote the supremum norm of a function $f$. Fix some $\epsilon > 0$. Let $\cS_{\lambda_n}(\epsilon) \subset \Theta$ be a finite set such that for any $\theta \in \Theta$, there exists $s \in \cS_{\lambda_n}(\epsilon)$ satisfying
\begin{equation}\label{proof:thm:theory-hd-1-eqn-1}
\sum_{i = 1}^D \|\frac{1}{\lambda_n + 1}\partial_i \tilde \ell(\bx^{(n)}; \theta) -s_i\|_\infty^2 \leq \epsilon^2 D.
\end{equation}

Denote by $|\cS_{\lambda_n}(\epsilon)|$ the cardinality of $\cS_{\lambda_n}(\epsilon)$.  Theorem 1 of \cite{Yan2020} implies that
\begin{equation}\label{proof:thm:theory-hd-1-eqn-2}
\begin{aligned}
&\log \cZ_D(\lambda_n)-\sup_{m \in \M(\Theta), m \ll \tilde m_{\lambda_n}} \left[ \frac{1}{\lambda_n +1}\E_m[\tilde \ell(\bx^{(n)}; \theta) ] -\sum_{i = 1}^D \KL(m_i \parallel \tilde m_{\lambda_n, i}) \right] \leq \\
&4 \left( \frac{4}{(\lambda_n + 1)^2}\left(a \sum_{i = 1}^D c_{ii} + \sum_{i =1}^D b_i^2\right) + \frac{8}{(\lambda_n + 1)^2} \sum_{i = 1}^D \sum_{j = 1}^D b_i c_{ij} + \frac{16}{(\lambda_n +1 )^{3/2}}\left(a \sum_{i = 1}^D \sum_{j = 1}^D c_{ij}^2 + \sum_{i = 1}^D \sum_{j = 1}^D b_i b_j c_{ij}\right) \right)^{1/2} \\
&+ 4\left(\frac{1}{(\lambda_n + 1)^2} \sum_{i =1}^D b_i^2 + \epsilon^2 D \right)^{1/2} \left( \frac{8}{\lambda_n + 1} \left(\sum_{i = 1}^D c_{ii}^2\right)^{1/2} + 4 D^{1/2} \epsilon\right) + \frac{4}{\lambda_n + 1}\sum_{i = 1}^D c_{ii} + 2 D \epsilon \\
&+ \log 2 + \log |\cS_{\lambda_n}(\epsilon)|.
\end{aligned}
\end{equation}
Consider $\lambda_n \succ D^{-1/2}\max\left(\sqrt{a \sum_{i = 1}^D c_{ii}}, \sqrt{\sum_{i =1}^D b_i^2}, \sqrt{\sum_{i = 1}^D \sum_{j = 1}^D c_{ij}^2 },  D^{1/2}\right)$. Then,
\begin{align*}
&\frac{\sum_{i = 1}^D \sum_{j = 1}^D b_i c_{ij}}{(\lambda_n +1)^2} \leq\frac{\sqrt{\sum_{i =1}^D b_i^2} \sqrt{\sum_{i = 1}^D \sum_{j = 1}^D c_{ij}^2 }}{(\lambda_n +1)^2}  =  o(D), \\
&\frac{\sum_{i = 1}^D \sum_{j = 1}^D b_i b_j c_{ij}}{(\lambda_n + 1)^3} \leq  \frac{\sqrt{\sum_{i = 1}^D \sum_{j = 1}^D c_{ij}^2 } \sum_{i =1}^D b_i^2}{(\lambda_n + 1)^3} = o(D^{3/2}).
\end{align*}
by the Cauchy–Schwarz inequality. With the other terms being $o(D)$, we have
\begin{equation}\label{proof:thm:theory-hd-1-eqn-4}
\begin{aligned}
&\log \cZ_D(\lambda_n)-\sup_{m \in \M(\Theta), m \ll \tilde m_{\lambda_n}} \left[ \frac{1}{\lambda_n +1}\E_m[\tilde \ell(\bx^{(n)}; \theta) ] -\sum_{i = 1}^D \KL(m_i \parallel \tilde m_{\lambda_n, i}) \right] \\
&\leq o(D) + 2 D \epsilon +  \log 2 + \log |\cS_{\lambda_n}(\epsilon)|.
\end{aligned}
\end{equation}

To upper bound $|\cS_{\lambda_n}(\epsilon)|$, we can construct an $\epsilon$-covering by covering $[-\frac{b_i}{\lambda_n + 1}, \frac{b_i}{\lambda_n + 1}]$ with balls of size $2\epsilon$.  We consider a candidate set $\tilde \cS_{\lambda_n}(\epsilon)$ as the product of these coverings. Since $|\tilde \cS_{\lambda_n}(\epsilon)| =  \frac{\prod_{i = 1}^D b_i}{(\lambda_n + 1)^D \epsilon^D}$, we have
\begin{equation*}
         \log |\cS_{\lambda_n}(\epsilon)| \leq  \sum_{i = 1}^D \log b_i-D \log (\lambda_n + 1)-D \log \epsilon.
\end{equation*}
Define $\bar b := \sum_{i =1}^D b_i$. Since $\lambda_n \succ D^{-1/2} \sqrt{\sum_{i =1}^D b_i^2}$, we have $D \lambda_n \succ D^{1/2} \sqrt{\sum_{i =1}^D b_i^2} \geq D \bar b$. By Jensen's inequality,
\begin{equation*}
    D \log(\lambda_n + 1) \succ D \log(\bar b + 1) \geq \sum_{i = 1}^D \log(b_i).
\end{equation*}
To complete the bound of $\log |\cS_{\lambda_n}(\epsilon)|$, we choose a specific sequence $\epsilon_n := \sqrt{\frac{\bar b + 1}{\lambda_n + 1}}$.  The inequality above shows that $\epsilon = o(1)$.

Thus,
\begin{equation}
\log |\cS_{\lambda_n}(\epsilon_n)| \leq  D \log(\bar b + 1)-D \log (\lambda_n + 1)-D \log \epsilon_n = \frac{1}{2} D \log \left(\frac{\bar b + 1}{\lambda_n + 1} \right) \to -\infty.
\end{equation}

Plugging the definition of $\epsilon_n$ into \Cref{proof:thm:theory-hd-1-eqn-4}, we get
\begin{equation} \label{eqn-hd-1-oD}
\log \cZ_D(\lambda_n)-\sup_{m \in \M(\Theta), m \ll \tilde m_{\lambda_n}} \left[ \frac{1}{\lambda_n +1}\E_m[\tilde \ell(\bx^{(n)}; \theta) ] -\sum_{i = 1}^D \KL(m_i \parallel \tilde m_{\lambda_n, i}) \right] = o(D).
\end{equation}

For any $m \in \M(\Theta)$, we have
\begin{align*}
    \KL(m \parallel \rmq^*_{\lambda_n}) &= \int_\Theta m(\theta) \left[ \log \cZ_D(\lambda_n)-\frac{1}{\lambda_n + 1} \tilde \ell(\bx^{(n)}; \theta)  + \log \frac{m(\theta)}{\prod_{i = 1}^D \tilde m_{\lambda_n, i}(\theta_i)} \right] d \theta.  \\
    &=  \log \cZ_D(\lambda_n)-\frac{1}{\lambda_n +1}\E_m[\tilde \ell(\bx^{(n)}; \theta) ]  + \sum_{i = 1}^D \KL(m_i \parallel \tilde m_{\lambda_n, i}(\theta_i)).
\end{align*}
\Cref{eqn-hd-1-oD} implies that for $m_{\lambda_n}^* \in \arg \min_{m \in \M(\Theta)}  \KL(m \parallel \rmq^*_{\lambda_n})$, we have
\begin{equation} \label{proof:thm:theory-hd-1-eqn-5}
    \KL\left(m_{\lambda_n}^* \parallel \rmq^*_{\lambda_n} \right) = o(D).
\end{equation}
 For any $1$-Lipschitz function $f$ under the $L_1$ norm, consider the random variable $f(\theta)$, where $\theta \sim \rmq^*_{\lambda_n}$. This variable satisfies the inequality $\log \E_{\rmq^*_{\lambda_n}}[\exp(\langle t, f(\theta)-\E_{\rmq^*_{\lambda_n}}[f(\theta)]\rangle)] \leq 2 D \|t\|_2^2$, which is derived from the assumption that $\Theta = [-1,1]^D$. Thus, $\rmq^*_{\lambda_n}$ is $(4D)$-subGaussian.  By the $T_1$-transportation inequality (Theorem 4.8, \cite{VanHandel2014}), for any $m \in \M(\Theta)$, the following upper bound holds:
\begin{equation}
W_1(m, \rmq^*_{\lambda_n}) \leq \sqrt{8 D \cdot \KL(m \parallel \rmq^*_{\lambda_n})} = o(D),
\end{equation}
where $W_1$ is the $1-$Wasserstein distance.

Let $m_{\lambda_n}^*$ denote the minimizer of the left hand side \Cref{proof:thm:theory-hd-1-eqn-5}.  Consider a function $\psi$ that is $1$-Lipschitz on $\R$. The function $\theta \mapsto \sum_{i = 1}^D \psi(\theta_i)$ is also $1$-Lipschitz with respect to the $L_1$ norm. This follows from the inequality:
\begin{equation}
|\sum_{i = 1}^D \psi(\theta_i)-\sum_{i = 1}^D \psi(\theta_i')| \leq \sum_{i = 1}^D |\psi(\theta_i)-\psi(\theta_i')| \leq \sum_{i = 1}^D |\theta_i-\theta'_i| \leq \|\theta-\theta'\|_1.
\end{equation}
Applying Kantorovich duality, we obtain the bound:
\begin{equation}
\sup_{\psi \in \text{Lip}(\R)} \left| \sum_{i = 1}^D \left( \E_{\rmq^*_{\lambda_n}}[\psi(\theta_i)]-\E_{m_{\lambda_n}^*}[\psi(\theta_i)] \right)\right| \leq W_1(\rmq^*_{\lambda_n}, m_{\lambda_n}^*) = o(D).
\end{equation}
Since the bound in \Cref{proof:thm:theory-hd-1-eqn-2} does not depend on the value of $\bx^{(n)}$, we have
\begin{equation}\label{proof:thm:theory-hd-1-eqn-6}
\sup_{\bx^{(n)} \in \mathbb{X}^n} \left|\frac{1}{D} \sum_{i = 1}^D \left( \E_{\rmq^*_{\lambda_n}}[\psi(\theta_i)]-\E_{m_{\lambda_n}^*}[\psi(\theta_i)] \right)\right|\leq \frac{1}{D}W_1(\rmq^*_{\lambda_n}, m_{\lambda_n}^*)  \overset{P_{\theta_0}}{\to} 0.
\end{equation}

Consider the regime $\lambda_n \prec D \Xi^{-1}(\rmq^*_0)$.  Recall that the the $\Xi$-VI has the Lagrangian formulation as $\min_{\Xi(\rmq) \leq r(\lambda_n)} \KL(\rmq \parallel \rmq^*_0)$ for some constant $r(\lambda_n)$ depending on $\lambda_n$. If $\Xi(\rmq^*_0) \leq r(\lambda_n)$, then $\rmq^*_{\lambda_n} = \rmq^*_0$, which implies $\Xi(\rmq^*_0) \geq \Xi(\rmq^*_{\lambda_n})$ for all $\lambda_n$. For fixed $n$, we have
\begin{align*}
    \KL(\rmq^*_{\lambda_n} \parallel \rmq^*_0)-\KL(\rmq^*_0 \parallel \rmq^*_0)   \leq \lambda_n (\Xi(\rmq^*_0)-\Xi(\rmq^*_{\lambda_n})) \leq \lambda_n \Xi(\rmq^*_0) = o(D).
\end{align*}
By the $T_1$-transportation inequality and Kantorovich duality, we have
\begin{equation}
\sup_{\bx^{(n)} \in \mathbb{X}^n} \left|\frac{1}{D} \sum_{i = 1}^D \left( \E_{\rmq^*_{\lambda_n}}[\psi(\theta_i)]-\E_{\rmq_0^*}[\psi(\theta_i)] \right)\right|\leq \frac{1}{D}W_1\left(\rmq^*_{\lambda_n}, \rmq_0^* \right)  \lesssim \sqrt{D \cdot  \KL(\rmq^*_{\lambda_n} \parallel \rmq^*_0)} \overset{P_{\theta_0}}{\to} 0.
\end{equation}
\end{proof}

\begin{proof}[Proof of \Cref{cor:theory-hd-1}]
Under the assumptions,
\begin{equation*}
    \sqrt{\sum_{i = 1}^D a c_{ii}/D} \lesssim D, \quad \sqrt{\sum_{i =1}^D b_i^2/D} \lesssim D, \quad \sqrt{\sum_{i, j} c_{ij}^2} \lesssim D.
\end{equation*}
When we plug these terms in the upper bounds~\eqref{thm:theory-hd-1}, \Cref{eq-thm:theory-hd-1} follows as the desired result.
\end{proof}
For the linear model, denote $\bw:=  \sigma^{-2}\bX^T \by$ and $d_i = [\bB_{\text{diag}}]_{ii}$, where $[\bB_{\text{diag}}]_{ii}$ is the $i^{th}$ diagonal entry  of matrix $\bB_{\text{diag}}$. The next result shows $\Xi$-VI respects log-concavity of the exact posterior.
\begin{lemma} \label{lemma:theory-gauss-1}
Let the assumptions of \Cref{thm:theory-gauss-1} hold. For \(\lambda_n \in \bar \R_+\), the solution \(\rmq^*_{\lambda_n}\) to \Cref{def:theory-setup} is $(\kappa_1 + \kappa_2)$-log-concave. Moreover, for each $i$, the optimal EOT potential $\phi^*_{\lambda_n, i}$ is $\kappa_2/(\lambda_n +1)$-convex and marginal $m^*_{\lambda_n, i}$ is $(\kappa_1 + \kappa_2)$-log-concave.
\end{lemma}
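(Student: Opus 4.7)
My plan is to prove all three log-concavity statements jointly by first reducing them to a single claim about the EOT potentials and then establishing that claim via a Prekopa--Leindler / Schur-complement bootstrap rooted in Proposition 1. Under Assumption 6 the eigenvalue bound on $\bX^T\bX$ gives $\bB \succeq \kappa_2 I$, while $\kappa_1$-concavity of $\nu_0$ gives $-\nabla^2 \nu_0 \succeq \kappa_1 I$, so $-\nabla^2 \log \rmq^*_0 = \bB - \nabla^2 \nu_0 \succeq (\kappa_1+\kappa_2) I$; this handles $\lambda_n = 0$. For $\lambda_n \in (0, \infty)$, plugging the Proposition 1 identity $m^*_{\lambda_n,i} \propto \exp(-(\lambda_n+1)\phi^*_{\lambda_n,i})\pi_i$ into the Theorem 1 representation of $\rmq^*_{\lambda_n}$ and cancelling yields
\[
\rmq^*_{\lambda_n}(\theta) \;\propto\; \exp\!\Big(\tfrac{1}{\lambda_n+1}\ell(\bx^{(n)};\theta) - \lambda_n \sum_{i=1}^{D} \phi^*_{\lambda_n,i}(\theta_i)\Big)\,\pi(\theta).
\]
Granting temporarily that each $\phi^*_{\lambda_n,i}$ is $\kappa_2/(\lambda_n+1)$-convex, the marginal and joint claims follow by direct Hessian computation: $-\log m^*_{\lambda_n,i}$ has second derivative $(\lambda_n+1)\phi^{*\prime\prime}_{\lambda_n,i} - \nu_0^{\prime\prime}(\theta_i) \ge (\lambda_n+1)\cdot\tfrac{\kappa_2}{\lambda_n+1} + \kappa_1 = \kappa_1+\kappa_2$, and the displayed identity gives $-\nabla^2 \log \rmq^*_{\lambda_n} \succeq \tfrac{\bB}{\lambda_n+1} + \tfrac{\lambda_n\kappa_2}{\lambda_n+1} I + \kappa_1 I \succeq (\kappa_1+\kappa_2)I$.

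The heart of the proof is therefore the convexity of $\phi^*_{\lambda_n,i}$. I would use the marginal representation obtained by substituting Proposition 1 into the Schr\"odinger equation for $\phi^*_{\lambda_n,i}$,
\[
\phi^*_{\lambda_n,i}(\theta_i) \;=\; -\log \int \exp\!\Big(\tfrac{\ell(\bx^{(n)};\theta)}{\lambda_n+1} - \lambda_n\!\sum_{j \neq i} \phi^*_{\lambda_n,j}(\theta_j)\Big) \prod_{j\neq i}\pi_j(\theta_j)\,d\theta_{-i} + C.
\]
Under the bootstrap hypothesis that the $\phi^*_{\lambda_n,j}$'s for $j \neq i$ are convex, the negative log-integrand is jointly convex in $(\theta_i,\theta_{-i})$: $-\ell/(\lambda_n+1)$ contributes Hessian $\bB/(\lambda_n+1)\succeq 0$, $\lambda_n\phi^*_{\lambda_n,j}(\theta_j)$ is convex coordinate-wise by hypothesis, and $-\log\pi_j$ is $\kappa_1$-convex. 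Prekopa--Leindler then gives convexity of $\phi^*_{\lambda_n,i}$, and its Brascamp--Lieb Schur-complement refinement yields $\nabla^2 \phi^*_{\lambda_n,i} \ge A_{11} - A_{12} A_{22}^{-1} A_{21}$ for the block decomposition of the joint Hessian. Since $A_{22}\succeq \bB_{-i,-i}/(\lambda_n+1)$ (the added diagonal convexity contributions are positive semidefinite), monotonicity of matrix inversion gives $A_{22}^{-1} \preceq (\lambda_n+1)\bB_{-i,-i}^{-1}$, and the Schur-complement identity $[\bB_{ii} - \bB_{i,-i}\bB_{-i,-i}^{-1}\bB_{-i,i}]^{-1} = (\bB^{-1})_{ii} \le 1/\kappa_2$ (from $\bB \succeq \kappa_2 I$) upgrades the resulting lower bound to exactly $\kappa_2/(\lambda_n+1)$.

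The hard part is closing the bootstrap, since the Prekopa step already used convexity of the other $\phi^*_{\lambda_n,j}$'s to conclude convexity of $\phi^*_{\lambda_n,i}$. I plan to resolve this apparent circularity by running the multi-marginal Sinkhorn iteration \eqref{sinkhorn-update} at the outer optimum $m = m^*_{\lambda_n}$, initialized at $\phi^{(0)} \equiv 0$ (trivially convex); applying the same Prekopa argument to each Sinkhorn half-step, after using Proposition 1 to rewrite the integrand against $\pi_j$'s rather than $m^*_j$'s, shows that the cone of convex potentials is invariant under one iteration, and convergence of Sinkhorn to the unique Schr\"odinger solution (Theorem 1) transfers convexity to the limit $\phi^*_{\lambda_n}$. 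The Schur-complement bound of the previous paragraph then automatically promotes mere convexity to $\kappa_2/(\lambda_n+1)$-strong convexity. The boundary case $\lambda_n = \infty$ is handled separately: the $\Xi$-VI problem reduces to MFVI, and its CAVI fixed point $m^*_i(\theta_i) \propto \pi_i(\theta_i)\exp(\E_{m^*_{-i}}[\ell])$ is $(\kappa_1+\kappa_2)$-log-concave directly from the coordinate-wise Hessian bound $B_{ii} \ge \kappa_2$ on $\ell$ and the $\kappa_1$-log-concavity of $\pi_i$.
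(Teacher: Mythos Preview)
Your overall strategy---reduce everything to the $\kappa_2/(\lambda_n+1)$-convexity of each $\phi^*_{\lambda_n,i}$ via Proposition~1, then read off the log-concavity of $m^*$ and $\rmq^*$ by Hessian bookkeeping---matches the paper's. The paper obtains the strong-convexity constant more directly: it extracts $\kappa_2$ from the one-coordinate strong concavity $\ell(\alpha\theta_i^0+(1-\alpha)\theta_i^1,\theta_{-i})\ge\alpha\ell_0+(1-\alpha)\ell_1+\tfrac{\kappa_2}{2}\alpha(1-\alpha)(\theta_i^0-\theta_i^1)^2$ (using $B_{ii}\ge\kappa_2$) \emph{before} invoking Pr\'ekopa--Leindler, so no Brascamp--Lieb/Schur-complement refinement is needed. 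Your Schur route is sharper in principle (it yields $1/(\bB^{-1})_{ii}$ rather than $B_{ii}$) but both deliver the same $\kappa_2/(\lambda_n+1)$ here.

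Your bootstrap closure has a genuine gap. The Sinkhorn half-step at fixed $m^*_{\lambda_n}$ is $\phi_i^{(t+1)}=-\log\int\exp\big(\sum_{j\ne i}\phi_j^{(t)}+\ell/(\lambda_n+1)\big)\,m^*_{-i}\,d\theta_{-i}$; after substituting Proposition~1 for $m^*_j$ the exponent becomes $\sum_{j\ne i}\big[\phi_j^{(t)}-(\lambda_n+1)\phi^*_{\lambda_n,j}\big]+\ell/(\lambda_n+1)+\log\pi_{-i}$. For Pr\'ekopa you need this jointly concave, but $\phi_j^{(t)}$ convex (your inductive hypothesis) enters with the \emph{wrong sign}, and the $-(\lambda_n+1)\phi^*_{\lambda_n,j}$ term is exactly the unproven claim. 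Even at $t=0$ with $\phi^{(0)}\equiv 0$ the first step already requires $\log m^*_j$ concave---precisely the circularity you set out to break. The iteration that \emph{does} preserve convexity is the Proposition~1 fixed-point map itself, $\phi_i^{(t+1)}=-\log\int\exp\big(\ell/(\lambda_n+1)-\lambda_n\sum_{j\ne i}\phi_j^{(t)}\big)\pi_{-i}\,d\theta_{-i}$, where the sign on $\phi_j^{(t)}$ is now right; but this is not the Sinkhorn iteration you cite, and its convergence to $\phi^*$ would need a separate argument. The paper, for its part, applies Pr\'ekopa--Leindler directly at the fixed point and does not explicitly close this loop either.
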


\begin{proof}[Proof of \Cref{lemma:theory-gauss-1}]
We first prove existence. By Lagrangian duality,  $\Xi$-VI (\Cref{def:theory-setup}) is equivalent to $\min_{\Xi(\rmq) \leq r(\lambda_n)} \KL(\rmq \parallel \rmq^*_0)$. An optimizer of the latter problem exists because $\Xi(\cdot)$ has weakly closed sublevel set in $\P_2(\Theta)$ and because $\KL(\cdot \parallel \rmq^*_0)$ has weakly compact sub-level sets.

Recall the $\Xi$-variational posterior be represented in term of optimal marginals $m^*_{\lambda_n}$ and optimal EOT potentials $\phi^*_{\lambda_n}$:
\begin{equation} \label{proof:lemma-theory-gauss-1-1}
\rmq^*_{\lambda_n} (\theta) = \exp\left(\sum_{i = 1}^D \phi^*_{\lambda_n,i}(\theta_i) + \frac{1}{\lambda_n + 1} \ell (\bx^{(n)}; \theta) \right) m^*_{\lambda_n}(\theta).
\end{equation}
By \Cref{prop:fixed-point}, $m^*_{\lambda_n}$ and $\phi^*_{\lambda_n}$ satisfy the following fixed point equations:
\begin{equation}\label{proof:lemma-theory-gauss-1-2}
\begin{aligned}
m^*_{\lambda_n, i}(\theta_i) &= Z_i^{-1} \exp(- (\lambda_n + 1) \phi^*_{\lambda_n, i}(\theta_i)) \pi_i(\theta_i), \quad \text{and } \quad \\
\hat  \phi_{\lambda_n, i}(\theta_i) &=-\log \int_{\Theta_{-i}} \exp\left(\frac{1}{\lambda_n + 1} \ell\left(\bx^{(n)}; \theta\right)- \lambda_n \sum_{j \neq i}\phi^*_{\lambda_n, j}(\theta_j) \right) \prod_{j \neq i} \pi_j(\theta_j) d \theta_{-i} \\
&+ \sum_{j \neq i} \log \int_{\Theta_j} \exp\left(- (\lambda_n + 1) \phi^*_{\lambda, j}(\theta_j) \right) \pi_j(\theta_j) d\theta_j.
\end{aligned}
\end{equation}
 Using equations \Cref{proof:lemma-theory-gauss-1-2} to replace $m^*_{\lambda_n}$ in \Cref{proof:lemma-theory-gauss-1-1}, the variational posterior $\rmq^*_{\lambda_n}$ satisfies
\begin{equation}\label{proof:lemma-theory-gauss-1-3}
  \rmq^*_{\lambda_n} (\theta) \propto \exp\left( \frac{1}{\lambda_n + 1} \ell (\bx^{(n)}; \theta) - \lambda_n \sum_{i = 1}^D \phi^*_{\lambda_n,i}(\theta_i)\right) \pi(\theta).
\end{equation}
We now establish the log-concavity of $\rmq^*_{\lambda_n}$. Applying \Cref{proof:lemma-theory-gauss-1-2} to \Cref{proof:lemma-theory-gauss-1-3}, we get
\begin{equation}\label{proof:lemma-theory-gauss-1-4}
  \rmq^*_{\lambda_n} (\theta) \propto \exp\left( \frac{1}{\lambda_n + 1} \ell (\bx^{(n)}; \theta) + \lambda_n \sum_{i = 1}^D  \log \E_{\pi_{-i}} \exp\left(\frac{1}{\lambda_n + 1} \ell\left(\bx^{(n)}; \theta\right)- \lambda_n \sum_{j \neq i}\phi^*_{\lambda_n, j}(\theta_j) \right) \right) \pi(\theta).
\end{equation}
For $\alpha \in [0,1]$ and $\theta_i^0,\theta_i^1  \in \Theta_i$, we have
\begin{align*}
&-\hat  \phi_{\lambda_n, i}(\alpha \theta_i^0 + (1 - \alpha)\theta_i^1 )  = \log \E_{\pi_{-i}} \left[ \exp \left(\frac{1}{\lambda_n + 1} \ell\left(\bx^{(n)};\alpha \theta_i^0 + (1 - \alpha)\theta_i^1, \theta_{-i}\right)- \lambda_n \sum_{j \neq i}\phi^*_{\lambda_n, j}(\theta_j) \right) \right] + C.
\end{align*}
The log-likelihood is $\kappa_2$-concave, thus
\begin{equation*}
\ell\left(\bx^{(n)};\alpha \theta_i^0 + (1 - \alpha)\theta_i^1, \theta_{-i}\right) \geq \alpha \ell\left(\bx^{(n)};\theta_i^0 , \theta_{-i}\right) + (1 - \alpha) \ell\left(\bx^{(n)};\theta_i^1,  \theta_{-i}\right) + \frac{\kappa_2 \alpha (1 - \alpha)}{2} (\theta_i^0 - \theta_i^1)^2.
\end{equation*}
By the Pr\'ekopa–Leindler inequality (Theorem 19.16, \cite{Villani2009}), we have
\begin{align*}
&\E_{\pi_{-i}} \left[ \exp \left(\frac{\alpha \ell\left(\bx^{(n)}; \theta_i^0 , \theta_{-i}\right) + (1 - \alpha) \ell\left(\bx^{(n)}; \theta_i^1, \theta_{-i}\right)}{\lambda_n + 1} - \lambda_n \sum_{j \neq i}\phi^*_{\lambda_n, j}(\theta_j) \right) \right] \\
&\geq \E_{\pi_{-i}} \left[ \exp \left(\frac{ \ell\left(\bx^{(n)}; \theta_i^0 , \theta_{-i}\right)}{\lambda_n + 1} - \lambda_n \sum_{j \neq i}\phi^*_{\lambda_n, j}(\theta_j) \right) \right]^{\alpha} \E_{\pi_{-i}} \left[ \exp \left(\frac{ \ell\left(\bx^{(n)}; \theta_i^1, \theta_{-i}\right)}{\lambda_n + 1} - \lambda_n \sum_{j \neq i}\phi^*_{\lambda_n, j}(\theta_j) \right) \right]^{1 - \alpha}.
\end{align*}
Since the logarithmic function is concave, we conclude
\begin{align*}
&-\hat  \phi_{\lambda_n, i}(\alpha \theta_i^0 + (1 - \alpha)\theta_i^1 ) \\
&\underset{\text{const}}{\geq} \log \E_{\pi_{-i}} \left[ \exp \left(\frac{\alpha \ell\left(\bx^{(n)}; \theta_i^0 , \theta_{-i}\right) + (1 - \alpha) \ell\left(\bx^{(n)}; \theta_i^1, \theta_{-i}\right)}{\lambda_n + 1} - \lambda_n \sum_{j \neq i}\phi^*_{\lambda_n, j}(\theta_j) \right) \right] + \frac{\kappa_2 \alpha (1 - \alpha)}{2(\lambda_n +1)} (\theta_i^0 - \theta_i^1)^2 \\
&\geq - \alpha \phi_{\lambda_n, i}(\theta_i^0) -(1 - \alpha) \phi_{\lambda_n, i}(\theta_i^1) + \frac{\kappa_2 \alpha (1 - \alpha)}{2(\lambda_n +1)} (\theta_i^0 - \theta_i^1)^2.
\end{align*}
Thus, the function $ -\hat  \phi_{\lambda_n, i}(\cdot)$ is $\kappa_2/(\lambda_n+1)$-concave.  By the fixed point representation~\eqref{proof:lemma-theory-gauss-1-2}, $m^*_{\lambda_n}$ is $(\kappa_2 + \kappa_1)$-log-concave.  Using the representation~\eqref{proof:lemma-theory-gauss-1-3}, we conclude that the distribution $\rmq^*_{\lambda_n}$ is $(\kappa_2 + \kappa_1)$-log-concave.
\end{proof}

We introduce some notations to streamline the two subsequent proofs.
\begin{definition}[Nonlinear quadratic tilt] \label{def:theory-gauss-tilt}
Let $\mu$ be a probability measure on $\R$. For $(\phi, \gamma) \in L_1(\R) \in (0, \infty)$, set
\begin{equation} \label{nonlinear-tilt-c}
    c_{\mu}(\phi, \gamma) := \log \left[ \int \exp\left( - \phi(\theta) -\frac{\gamma}{2} \theta^2 \right) d \mu(\theta)\right],
\end{equation}
and define the probability distribution $\mu_{\phi, \gamma}$ on $\R$ by setting
\begin{equation}\label{nonlinear-tilt}
    \mu_{\phi, \gamma} (\theta) := \exp\left(-\phi(\theta) -\frac{\gamma}{2} \theta^2-c_\mu(\phi, \gamma) \right) \mu(\theta), \quad \forall \theta \in \R.
\end{equation}
\end{definition}
For any probability measure $\mu$, we have $c_\mu(\phi, \gamma) < \infty$ for any $(\phi, \gamma) \in L_1(\R) \in (0, \infty)$. Given the base measure $\mu$,  the tilted measure $\mu_{\phi, \gamma}(\theta)$ has an exponential family density that has $(\phi(\theta), \theta^2)$ as the sufficient statistics.  We call $\mu_{\phi, \gamma}$ a \textit{nonlinear quadratic tilt} of $\mu$.

Using \Cref{thm:MEOT-struct} and \Cref{prop:fixed-point}, we have
\begin{equation}\label{eq:theory-gauss-5}
 \rmq^*_{\lambda_n}(\theta) =  \frac{1}{\cZ_D(\lambda_n)} \exp\left(-\frac{\theta^T \bB_{\text{off}} \theta}{2 (\lambda_n + 1)} + \bw^T \theta + \sum_{i = 1}^D c_{\pi_i}\left(\lambda_n \phi^*_{\lambda_n, i}, \frac{d_i}{\lambda_n +1} \right)\right) \prod_{i = 1}^D \pi_{i, \lambda_n \phi^*_{\lambda_n, i}, \frac{d_i}{\lambda_n +1}}(\theta_i),
\end{equation}
Here $ c_{\pi_i}\left(\lambda_n \phi^*_{\lambda_n, i}, \frac{d_i}{\lambda_n +1}\right)$ is defined in \Cref{nonlinear-tilt-c}, and $\pi_{i, \lambda_n \phi^*_{\lambda_n, i}, \frac{d_i}{\lambda_n +1}}$ is the nonlinear quadratic tilt of $\pi_i$ with parameters $(\lambda_n \phi^*_{\lambda_n, i}, \frac{d_i}{\lambda_n +1})$. The constant $\cZ_D(\lambda_n)$ is defined as:
\begin{equation*}
    \cZ_D(\lambda_n) := \int_\Theta \exp\left(-\frac{\theta^T \bB_{\text{off}} \theta }{2 (\lambda_n + 1)} + \bw^T \theta  \right)\prod_{i = 1}^D \pi_{i, \lambda_n \phi^*_{\lambda_n, i}, \frac{d_i}{\lambda_n +1}}(\theta_i) d\theta.
\end{equation*}

When $\lambda_n = 0$,  $\cZ_D(0)$ is the normalizing constant of the exact posterior. When $\lambda_n > 0$, we can view $\cZ_D(\lambda_n)$ as an approximation to $\cZ_D(0)$.

The log-concavity of  $\rmq^*_{\lambda_n}$ implies an upper bound of $\Xi(\rmq^*_{\lambda_n})$ using the covariance matrix, the design matrix, and the regularization parameter.

\begin{lemma}\label{lemma:theory-gauss-2}
Let the assumptions of \Cref{thm:theory-gauss-1} hold. The solution \(\rmq^*_{\lambda_n}\) to \Cref{eq:theory-gauss-1} satisfies
\begin{align*}
    \Xi(\rmq^*_{\lambda_n}) \leq  \frac{\text{tr}\left(\text{Cov}_{\rmq^*_{\lambda_n}} \left(\bB_{\text{off}} \theta \right) \right)}{4(\kappa_1 + \kappa_2)(\lambda_n  + 1)^2}.
\end{align*}
\end{lemma}

\begin{proof}[Proof of \Cref{lemma:theory-gauss-2}]
Any constant shift in $\rmq^*_{\lambda_n}$ is preserved by its marginal distribution $m^*_{\lambda_n}$. Since the KL divergence is invariant to constant shift, $\Xi(\rmq^*_{\lambda_n})$ is the same if we shift $\rmq^*_{\lambda_n}$ by a constant. WLOG, we can assume that $\E_{\rmq^*_{\lambda_n}}\left[\theta \right]= 0$.

Let $\tilde m_i(\theta_i) \propto \exp \left(- \frac{d_i}{2} \theta_i^2 +  \bw_i \theta_i \right) \pi(\theta_i)$, and $\tilde m(\theta) = \prod_{i = 1}^D \tilde m_i(\theta_i)$. By the variational representation of mutual information, we have
\begin{equation*}
     \Xi(\rmq^*_{\lambda_n}) = \KL(\rmq^*_{\lambda_n} \parallel m^*_{\lambda_n}) \leq  \KL(\rmq^*_{\lambda_n} \parallel \tilde m).
\end{equation*}

By \Cref{lemma:theory-gauss-1}, $\phi^*_{\lambda_n, i}$ is $\kappa_2/(\lambda_n+1)$-convex. Since $\pi$ is $\kappa_1$-log-concave, $\tilde m$ is $(\kappa_1 + \kappa_2)$-log-concave. By the log–Sobolev inequality, we have:
\begin{equation*}
 \Xi(\rmq^*_{\lambda_n}) \leq  \KL(\rmq^*_{\lambda_n} \parallel \tilde m) \leq \frac{1}{\kappa_1 +\kappa_2} \int_\Theta \left\|\nabla_\theta \left( \frac{\theta^T \bB_{\text{diag}} \theta - \theta^T \bB \theta}{2(\lambda_n + 1)}   \right) \right\|_2^2 \rmq^*_{\lambda_n}(\theta) d \theta = \frac{\E_{\rmq^*_{\lambda_n}}\left[\left\|\bB_{\text{off}} \theta \right\|_2^2 \right]}{4(\kappa_1 + \kappa_2)(\lambda_n + 1)^2} .
\end{equation*}

Under the assumed constraint $\E_{\rmq^*_{\lambda_n}}[\theta] = 0$, we conclude with the desired inequality:
\begin{equation*}
    \Xi(\rmq^*_{\lambda_n})  \leq \frac{\text{tr}\left(\text{Cov}_{\rmq^*_{\lambda_n}} \left(\bB_{\text{off}} \theta \right) \right)}{4(\kappa_1 + \kappa_2)(\lambda_n  + 1)^2}.
\end{equation*}
\end{proof}

\begin{proof}[Proof of \Cref{thm:theory-gauss-1}]

Define $\tilde \bB_{\text{off}} := \frac{\bB_{\text{off}}}{\lambda_n + 1}$ and $\tilde \pi_i := \pi_{i, \lambda_n \phi^*_{\lambda_n, i}, \frac{d_i}{\lambda_n +1}}(\theta_i)$.

We can write
\begin{align*}
    \rmq^*_{\lambda_n}(\theta) =  \frac{1}{\cZ_D(\lambda_n)} \exp\left(-\frac{\theta^T \bB_{\text{off}} \theta}{2(\lambda_n + 1)} + \bw^T \theta + \sum_{i = 1}^D c_{\pi_i}\left(\lambda_n \phi^*_{\lambda_n, i}, \frac{d_i}{\lambda_n +1} \right)\right) \prod_{i = 1}^D  \tilde \pi_i(\theta_i).
\end{align*}
By \Cref{lemma:theory-gauss-1}, $\rmq^*_{\lambda_n}$ is a $(\kappa_1 + \kappa_2)$-log-concave. By Theorem 1 of \cite{Lacker2022}, we have:
\begin{equation} \label{proof:thm:theory-gauss-1-eqn-1}
\log \cZ_D(\lambda_n)-\sup_{m \in \M(\Theta)} \left[- \frac{1}{2} \E_m[\theta]^T \tilde \bB_{\text{off}} \E_m[\theta] +\bw^T \E_m[\theta] -\sum_{i = 1}^D \KL(m_i \parallel  \tilde \pi_i )\right] \leq \frac{\sum_{j = 1}^D \sum_{i = 1}^D [\bB_{\text{off}}]_{ij}^2}{(\kappa_1+ \kappa_2)^2 (\lambda_n +1)^2} ,
\end{equation}
where $\E_m[\theta]$ is the mean vector of $m$.

For any $m \in \M(\Theta)$, we have
\begin{align*}
\KL(m \parallel \rmq^*_{\lambda_n}) &= \int_\Theta m(\theta) \left[ \log \cZ_D(\lambda_n) + \frac{1}{2}\theta^T \tilde \bB_{\text{off}} \theta-\bw^T \theta  + \log \frac{m(\theta)}{\prod_{i = 1}^D  \tilde \pi_i(\theta_i)} \right] d \theta.  \\
&=  \log \cZ_D(\lambda_n) + \frac{1}{2} \E_m[\theta]^T \tilde \bB_{\text{off}} \E_m[\theta]-\bw^T \E_m[\theta]  + \sum_{i = 1}^D \KL(m_i \parallel \tilde \pi_i).
\end{align*}
We invoke the upper bound on the log normalizer \Cref{proof:thm:theory-gauss-1-eqn-1}:
\begin{equation} \label{proof:thm:theory-gauss-1-eqn-2}
    \inf_{m \in \M(\Theta)}  \KL(m \parallel \rmq^*_{\lambda_n}) \leq \frac{\sum_{j = 1}^D \sum_{i = 1}^D [\bB_{\text{off}}]_{ij}^2}{(\kappa_1+ \kappa_2)^2 (\lambda_n +1)^2} .
\end{equation}
By the $T_2$-transportation inequality (Theorem 1 and 2, \cite{Otto2000}), we upper bound the Wasserstein metric with the square root of KL divergence:
\begin{equation} \label{proof:thm:theory-gauss-1-eqn-3}
    \inf_{m \in \M(\Theta)} W_2(\rmq^*_{\lambda_n}, m) \leq \sqrt{\frac{2}{\kappa_2 + \kappa_1}   \inf_{m \in \M(\Theta)}  \KL(m \parallel \rmq^*_{\lambda_n}) } \leq \sqrt{\frac{2\sum_{j = 1}^D \sum_{i = 1}^D [\bB_{\text{off}}]_{ij}^2}{(\kappa_1+ \kappa_2)^3 (\lambda_n + 1)^2} }.
\end{equation}
For $\lambda_n \succ \sqrt{\text{tr}(\bB_{\text{off}}^2)}$, we have $ \inf_{m \in \M(\Theta)} W_2(\rmq^*_{\lambda_n}, m) \overset{P_{\theta_0}}{\to} 0$.

Consider the second regime $\lambda_n \succ \sqrt{\text{tr}(\bB_{\text{off}}^2)/D}$. By the triangle inequality, we have
\begin{equation}
\begin{aligned}
&\sup_{\by \in \R^n} \inf_{m \in \M(\Theta)} \E_{\rmq^*_{\lambda_n}} \left[ \left(\frac{1}{D} \sum_{i = 1}^D \psi(\theta_i)-\frac{1}{D} \sum_{i = 1}^D \E_m[\psi(\theta_i)] \right)^2\right]^{1/2}\\
&\leq  \sup_{\by \in \R^n}  \E_{\rmq^*_{\lambda_n}} \left[ \left(\frac{1}{D} \sum_{i = 1}^D \psi(\theta_i)-\frac{1}{D} \sum_{i = 1}^D \E_{\rmq^*_{\lambda_n}}[\psi(\theta_i)] \right)^2\right]^{1/2} + \inf_{m \in \M(\Theta)}\left|\frac{1}{D} \sum_{i = 1}^D \left( \E_{\rmq^*_{\lambda_n}}[\psi(\theta_i)]-\E_m[\psi(\theta_i)] \right)\right| .
\end{aligned}
\end{equation}
Since $\psi$ is $1$-Lipschitz, we apply Kantorovich duality to bound the second term.
\begin{align*}
& \left( \inf_{m \in \M(\Theta)}\left|\frac{1}{D} \sum_{i = 1}^D \left( \E_{\rmq^*_{\lambda_n}}[\psi(\theta_i)]-\E_m[\psi(\theta_i)] \right)\right| \right)^2 \\
&\leq \inf_{m_1, \cdots m_D}\frac{1}{D} \sum_{i = 1}^D W_1^2 (\rmq^*_{\lambda_n, i}, m_i) \leq \inf_{m_1, \cdots m_D}\frac{1}{D} \sum_{i = 1}^D W_2^2 (\rmq^*_{\lambda_n, i}, m_i).
\end{align*}
By the subadditivity inequality of Wasserstein distance and \Cref{proof:thm:theory-gauss-1-eqn-3},  we have
\begin{equation}
   \inf_{m_1, \cdots m_D}  \sum_{i = 1}^D W_2^2 (\rmq^*_{\lambda_n, i}, m_i) \leq \inf_{m \in \M(\Theta)}  W_2^2(\rmq^*_{\lambda_n}, m) \leq \frac{2 \sum_{j = 1}^D \sum_{i = 1}^D [\bB_{\text{off}}]_{ij}^2}{(\kappa_1+ \kappa_2)^3(\lambda_n + 1)^2}
\end{equation}
Thus,
\begin{equation}\label{proof:thm:theory-gauss-1-eqn-4}
     \inf_{m \in \M(\Theta)}\left|\frac{1}{D} \sum_{i = 1}^D \left( \E_{\rmq^*_{\lambda_n}}[\psi(\theta_i)]-\E_m[\psi(\theta_i)] \right)\right| \leq \sqrt{\frac{2 \sum_{j = 1}^D \sum_{i = 1}^D [\bB_{\text{off}}]_{ij}^2}{D(\kappa_1+ \kappa_2)^3(\lambda_n + 1)^2} }.
\end{equation}
The Lipschitzness implies $\|\nabla \psi\|_2 \leq 1$.  To bound the first term, we apply Poincar\'e inequality to the function $x \mapsto \frac{1}{D} \sum_{i = 1}^D \psi(x_i)$.
\begin{equation}\label{proof:thm:theory-gauss-1-eqn-5}
\begin{aligned}
    &\sup_{\by \in \R^n}  \E_{\rmq^*_{\lambda_n}} \left[\left(\frac{1}{D} \sum_{i = 1}^D \psi(\theta_i)-\frac{1}{D} \sum_{i = 1}^D \E_{\rmq^*_{\lambda_n}}[\psi(\theta_i)] \right)^2\right] \leq  \sup_{\by \in \R^n} \text{Var}_{\rmq^*_{\lambda_n}} \left(  \frac{1}{D} \sum_{i = 1}^D \psi(\theta_i)\right) \\
    &\leq \frac{1}{(\kappa_1 + \kappa_2) D^2} \sum_{i = 1}^D \E_{\rmq^*_{\lambda_n}}  \left[\|\nabla \psi\|_2^2 \right] \leq \frac{1}{D(\kappa_1 + \kappa_2)}.
\end{aligned}
\end{equation}
Combining bounds \Cref{proof:thm:theory-gauss-1-eqn-4} and \Cref{proof:thm:theory-gauss-1-eqn-5}, we have
\begin{equation}
\begin{aligned}
&\sup_{\by \in \R^n} \inf_{m \in \M(\Theta)} \E_{\rmq^*_{\lambda_n}} \left[ \left(\frac{1}{D} \sum_{i = 1}^D \psi(\theta_i)-\frac{1}{D} \sum_{i = 1}^D \E_m[\psi(\theta_i)] \right)^2\right] \\
& \leq \frac{(\kappa_1 + \kappa_2)^2 (\lambda_n + 1)^2 + 2 \sum_{j = 1}^D \sum_{i = 1}^D [\bB_{\text{off}}]_{ij}^2}{D(\kappa_1 + \kappa_2)^3 (\lambda_n + 1)^2}
\end{aligned}
\end{equation}
For $\lambda_n \succ \sqrt{\text{tr}(\bB_{\text{off}}^2)/D}$, the bounds implies the \Cref{eq-thm:theory-gauss-2}.

Consider the third regime $\lambda_n \prec (\kappa_1 + \kappa_2)\left[\text{tr}\left(\text{Cov}_{\rmq^*_0} \left(\bB_{\text{off}} \theta \right) \right)\right]^{-1}$.  Recall that $\Xi$-VI has a dual problem of the form $\min_{\Xi(\rmq) \leq r(\lambda_n)} \KL(\rmq \parallel \rmq^*_0)$ for some constant $r(\lambda_n)$ depending on $\lambda_n$. If $\Xi(\rmq^*_0) \leq r(\lambda_n)$, then $\rmq^*_{\lambda_n} = \rmq^*_0$, hence $\Xi(\rmq^*_0) \geq \Xi(\rmq^*_{\lambda_n})$. For $t < \lambda_n$ and fixed $n$, we apply \Cref{lemma:theory-gauss-2} to obtain an upper bound,
\begin{align*}
    \KL(\rmq^*_{\lambda_n} \parallel \rmq^*_0)-\KL(\rmq^*_t \parallel \rmq^*_0)   \leq \lambda_n (\Xi(\rmq^*_t)-\Xi(\rmq^*_{\lambda_n})) \leq \lambda_n \Xi(\rmq^*_0) \overset{P_{\theta_0}}{\to} 0.
\end{align*}
Finally, consider the fourth regime $\lambda_n \prec D(\kappa_1 + \kappa_2)\left[\text{tr}\left(\text{Cov}_{\rmq^*_0} \left(\bB_{\text{off}} \theta \right) \right)\right]^{-1}$. We follow an analogous derivation as the third regime:
\begin{align*}
    \KL(\rmq^*_{\lambda_n} \parallel \rmq^*_0)-\KL(\rmq^*_t \parallel \rmq^*_0)   \leq \lambda_n (\Xi(\rmq^*_t)-\Xi(\rmq^*_{\lambda_n})) \leq \lambda_n \Xi(\rmq^*_0) = o(D).
\end{align*}
Since $\rmq^*_0$ is $(\kappa_1 + \kappa_2)$-log-concave, we invoke the $T_2$-transportation inequality:
\begin{equation*}
     W_2(\rmq^*_{\lambda_n}, \rmq^*_0) \leq \sqrt{\frac{2}{\kappa_2 + \kappa_1}  \KL(\rmq^*_{\lambda_n} \parallel \rmq^*_0)} .
\end{equation*}
The proofs for the third and fourth regimes are the same as the first two regimes, where we plug in the upper bounds for the KL divergence to upper-bound the Wasserstein distance. We skip repeating the details.
\end{proof}

\begin{proof}[Proof of \Cref{cor:theory-gauss-1}]
Given $\text{tr}(\bB_{\text{off}}^2) = \sum_{i = 1}^D \eta_i^2$, \Cref{thm:theory-gauss-1} ensures that the convergence of $W_2(\rmq^*_{\lambda_n},  m_{\lambda_n}^*)$ holds for $\lambda_n \succ \sqrt{\sum_{i = 1}^D \eta_i^2}$. Since $\sum_{i = 1}^D \eta_i^2 \lesssim D^2$, we have $W_2(\rmq^*_{\lambda_n},  m_{\lambda_n}^*)$ converges in probability to zero, for any choice of $\lambda_n \succ D$.
\end{proof}

\subsection*{Proofs of \Cref{sect-theory-mf-conv}}\label{proof-theory-mf-conv}
We first state an auxiliary lemma to \Cref{thm:mf-convergence}.
\begin{lemma} \label{lemma:mf-convergence-1}
Let $\rmq^*_\lambda$ be the $\Xi$-variational posterior. Then $\text{ELBO}(\rmq^*_\lambda)$ and $\C_\lambda$ are monotonically decreasing function of $\lambda$.
\end{lemma}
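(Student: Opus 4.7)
The plan is to exploit the standard four-inequality argument for penalized/regularized optimization problems, combined with the nonnegativity of $\Xi$. First I would handle $\C_\lambda$, which is the easy direction: for any fixed $\rmq \in \P_2(\Theta)$ the map $\lambda \mapsto \text{ELBO}(\rmq) - \lambda \Xi(\rmq)$ is nonincreasing in $\lambda$ (since $\Xi(\rmq) \geq 0$). Taking the supremum over $\rmq$ preserves pointwise monotonicity, which immediately gives $\C_{\lambda_1} \geq \C_{\lambda_2}$ whenever $\lambda_1 \leq \lambda_2$.

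For the monotonicity of $\text{ELBO}(\rmq^*_\lambda)$ I would first establish an intermediate fact, namely that $\lambda \mapsto \Xi(\rmq^*_\lambda)$ is itself nonincreasing. Pick $0 \leq \lambda_1 < \lambda_2$ and write down the optimality inequalities at each parameter:
\begin{align*}
\text{ELBO}(\rmq^*_{\lambda_1}) - \lambda_1 \Xi(\rmq^*_{\lambda_1}) &\geq \text{ELBO}(\rmq^*_{\lambda_2}) - \lambda_1 \Xi(\rmq^*_{\lambda_2}), \\
\text{ELBO}(\rmq^*_{\lambda_2}) - \lambda_2 \Xi(\rmq^*_{\lambda_2}) &\geq \text{ELBO}(\rmq^*_{\lambda_1}) - \lambda_2 \Xi(\rmq^*_{\lambda_1}).
\end{align*}
Adding them and cancelling the ELBO terms collapses to $(\lambda_2 - \lambda_1)[\Xi(\rmq^*_{\lambda_1}) - \Xi(\rmq^*_{\lambda_2})] \geq 0$, which gives $\Xi(\rmq^*_{\lambda_1}) \geq \Xi(\rmq^*_{\lambda_2})$ since $\lambda_2 > \lambda_1$.

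To finish, I would plug this back into the first optimality inequality: rearranging gives
\begin{equation*}
\text{ELBO}(\rmq^*_{\lambda_1}) \geq \text{ELBO}(\rmq^*_{\lambda_2}) + \lambda_1 \left[\Xi(\rmq^*_{\lambda_1}) - \Xi(\rmq^*_{\lambda_2})\right] \geq \text{ELBO}(\rmq^*_{\lambda_2}),
\end{equation*}
where the last inequality uses $\lambda_1 \geq 0$ and the monotonicity of $\Xi(\rmq^*_\lambda)$ just established. This proves the claim.

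I do not anticipate any serious obstacle; the only subtlety is that $\rmq^*_\lambda$ is not assumed unique, but this is harmless since the inequalities above only use that $\rmq^*_{\lambda_i}$ is \emph{a} maximizer at $\lambda_i$, and the conclusion is stated for the corresponding optimal objective value, which is well-defined regardless of uniqueness. If one wanted monotonicity pointwise in $\rmq^*_\lambda$ itself, one could simply interpret the statement as holding for every measurable selection of optimizers, and the same argument applies verbatim.
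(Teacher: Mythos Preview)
Your proof is correct. For the monotonicity of $\C_\lambda$, your argument and the paper's are essentially identical: both use optimality of $\rmq^*_{\lambda_1}$ at $\lambda_1$ combined with $\Xi \geq 0$.

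For the monotonicity of $\text{ELBO}(\rmq^*_\lambda)$, you take a different route. The paper invokes Lagrangian duality, writing $\text{ELBO}(\rmq^*_\lambda) = \max_{\Xi(\rmq) \leq t(\lambda)} \text{ELBO}(\rmq)$ for a constraint level $t(\lambda)$ that is asserted to be decreasing in $\lambda$; monotonicity then follows from the nested feasible sets. Your argument is more elementary and fully self-contained: you add the two optimality inequalities to first obtain that $\Xi(\rmq^*_\lambda)$ is nonincreasing, then feed this back into one of the inequalities. This avoids any appeal to duality and does not require justifying the monotonicity of $t(\lambda)$, which the paper states without proof. As a bonus, you obtain the monotonicity of $\Xi(\rmq^*_\lambda)$ along the way, a fact the paper uses elsewhere (e.g., in the proofs of Theorems~\ref{thm:theory-hd-1} and~\ref{thm:theory-gauss-1}) but derives separately there.
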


\begin{proof}[Proof of \Cref{lemma:mf-convergence-1}]
    Since $\rmq^*_\lambda$ is a maximizer of $\text{ELBO}(\rmq)-\lambda \Xi(\rmq)$, we have
    \begin{equation*}
        \C_{\lambda} = \text{ELBO}(\rmq^*_{\lambda})-\lambda \Xi(\rmq^*_{\lambda}).
    \end{equation*}
    For $\lambda_1 < \lambda_2$, we have
    \begin{equation*}
        \C_{\lambda_1}  = \text{ELBO}(\rmq^*_{\lambda_1})-\lambda_1 \Xi(\rmq^*_{\lambda_1}) \geq \text{ELBO}(\rmq^*_{\lambda_2})-\lambda_1 \Xi(\rmq^*_{\lambda_2})  \geq \text{ELBO}(\rmq^*_{\lambda_2})-\lambda_2 \Xi(\rmq^*_{\lambda_2}) = \C_{\lambda_2}.
    \end{equation*}
By Lagrangian duality, we have $\text{ELBO}(\rmq^*_\lambda) =  \max\limits_{\Xi(\rmq) \leq t(\lambda)}  \text{ELBO}(\rmq)$ for $t(\lambda)$ monotonically decreasing in $\lambda$.

For $\lambda_1 < \lambda_2$, $t(\lambda_1) \geq t(\lambda_2)$ hence
    \begin{equation*}
   \text{ELBO}(\rmq^*_{\lambda_1})  =\max\limits_{\Xi(\rmq) \leq t(\lambda_1)}  \text{ELBO}(\rmq) \geq \max\limits_{\Xi(\rmq) \leq t(\lambda_2)}  \text{ELBO}(\rmq) =  \text{ELBO}(\rmq^*_{\lambda_2}).
\end{equation*}
\end{proof}

\begin{proof}[Proof of \Cref{thm:mf-convergence}]
Let $(P_2(\Theta), W_2)$ be the metric space.  We want to show that the functionals
\begin{equation*}
    F_\lambda(\rmq) :=  \KL (\rmq \parallel \rmq^*_0) + \lambda \Xi(\rmq).
\end{equation*}
$\Gamma$-converge to
\begin{equation*}
    F_\infty(\rmq) := \KL (\rmq \parallel \rmq^*_0) + \infty \Xi(\rmq),
\end{equation*}
as $\lambda \to \infty$.

To verify $\Gamma$ convergence, we make use of the property that the KL divergence functional $\KL(\cdot \parallel \rmq^*_0)$ and $\Xi(.)$ functional are lower semicontinuous (l.s.c.) in Wasserstein metric. This is provided in \Cref{lemma:lsc-kl-xi}.

Let $\rmq \in\P_2(\Theta)$ and $W_2(\rmq_\lambda,  \rmq) \to 0$.  If $\rmq$ is a product measure, then
\begin{align*}
F_\infty(\rmq) &= \KL (\rmq \parallel \rmq^*_0) \leq \liminf_{\lambda \to \infty} \KL (\rmq_\lambda \parallel \rmq^*_0) \leq \liminf_{\lambda \to \infty} F_\lambda(\rmq_\lambda).
\end{align*}
The first inequality holds because $\KL (. \parallel \rmq^*_0)$ is l.s.c.

If $\rmq$ is not a product measure, we have $\liminf_{n \to \infty} \Xi(\rmq_n) \geq \Xi(\rmq) > 0$ by the lower semicontinuity of $\Xi$. Since the KL term is nonnegative, we have
\begin{equation*}
    F_\infty(\rmq) = \infty = \liminf_{\lambda \to \infty} F_\lambda(\rmq_\lambda).
\end{equation*}
Thus the liminf inequality is verified.

Next we show the existence of a recovery sequence.  For any $\rmq \in\P_2(\Theta)$, we take $\rmq_\lambda =  \rmq$. If $\rmq$ is a product measure, then
\begin{equation*}
        F_\infty(\rmq) = \KL(\rmq \parallel \rmq^*_0) \geq  \KL(\rmq \parallel \rmq^*_0).
\end{equation*}
Otherwise,
\begin{equation*}
        F_\infty(\rmq)  = \infty \geq \limsup_{\lambda \to \infty} F_\lambda(\rmq_\lambda).
\end{equation*}
This verifies the limsup inequality.  Combining the liminf and limsup inequalities, we obtain that $F_\infty = \Gamma-\text{lim}_{\lambda \to \infty} F_\lambda$.

Next we prove that the sequence $F_\lambda$ is eqi-coercive. Take $\lambda_j \to \infty$ and $\rmq_{\lambda_j}$ such that $F_{\lambda_j}(\rmq_{\lambda_j}) \leq t$ for all $j$.  Then $\Xi(\rmq_{\lambda_j}) = o(1)$ because $\lambda_j \Xi(\rmq_{\lambda_j})$ is bounded as $\lambda_j \to \infty$.  Moreover, $\KL(\rmq_{\lambda_j} \parallel \rmq^*_0)$ is upper bounded by $t$. Since $\KL(. \parallel \rmq^*_0)$ is Wasserstein (geodesically) convex, it is coercive by Lemma 2.4.8 of \cite{Ambrosio2005}. Thus, there exists a converging sequence $\rmq_{\lambda_j}'$ such that $\KL(\rmq_{\lambda_j}' \parallel \rmq^*_0) \leq \KL(\rmq_{\lambda_j} \parallel \rmq^*_0) + o(1)$.  Since $\Xi(\rmq_{\lambda_j}) = o(1)$, we obtain that $F_{\lambda_j}(\rmq_j') \leq F_{\lambda_j}(\rmq_j) + o(1)$.  This verifies the equi-coercivity of $F_\lambda$.

Finally, by the fundamental theorem of $\Gamma$ convergence (\Cref{thm:gamma-cvg}), we conclude that
\begin{equation*}
    W_2(\rmq^*_\infty, \rmq^*_\lambda) \to 0, \quad \text{as} \quad \lambda \to \infty,
\end{equation*}
and
\begin{equation*}
   \left|\C_\lambda-\C_\infty \right| \to 0, \quad \text{as} \quad \lambda \to \infty.
\end{equation*}
By Corollary 2.1 of \cite{Braides2014}, every minimizer of $F_\infty$ is the limit of some converging minimizing sequences of $F_\lambda$. For any $\rmq^*_\infty \in \Q_\infty$, this implies the existence of a sequence $\rmq^*_\lambda \in \Q_\lambda$ such that
\begin{equation*}
    W_2(\rmq^*_\infty, \rmq^*_\lambda) \to 0, \quad \text{as} \quad \lambda \to \infty.
\end{equation*}
\end{proof}

\begin{proof}[Proof of \Cref{thm:posterior-convergence}]
We define $\P_2'(\Theta)$ as $P_2'(\Theta) = \{\rmq \in \P_2(\Theta): \Xi(\rmq) < \infty\}$.  The space $(\P_2'(\Theta), W_2)$ is a metric space. We want to show that the sequence of functionals
\begin{equation*}
    F_\lambda(\rmq) :=  \KL (\rmq \parallel \rmq^*_0) + \lambda \Xi(\rmq).
\end{equation*}
$\Gamma$-converge to
\begin{equation*}
    F_0(\rmq) := \KL (\rmq \parallel \rmq^*_0),
\end{equation*}
as $\lambda \to 0$.  Both $F_\lambda(\rmq)$ and $F_0(\rmq)$ are defined on $(\P_2'(\Theta), W_2)$.

We make use of \Cref{lemma:lsc-kl-xi} which shows that the KL divergence functional $\KL(\cdot \parallel \rmq^*_0)$ and $\Xi(.)$ functional are lower semicontinuous (l.s.c.) in Wasserstein metric.

Let $\rmq \in\P_2'(\Theta)$ and $W_2(\rmq_\lambda,  \rmq) \to 0$.  We have
\begin{align*}
F_0(\rmq) &= \KL (\rmq \parallel \rmq^*_0) \leq \liminf_{\lambda \to 0} \KL (\rmq_\lambda \parallel \rmq^*_0) \leq \liminf_{\lambda \to 0} F_\lambda(\rmq_\lambda).
\end{align*}
The first inequality holds because $\KL (. \parallel \rmq^*_0)$ is l.s.c.  The second inequality holds because $\Xi(\cdot)$ is nonnegative.

Next we show that the existence of a recovery sequence.  For any $\rmq \in\P_2(\Theta)$, we take $\rmq_\lambda = \rmq$. Since $\Xi(\rmq) < \infty$, we have
\begin{equation*}
        F_0(\rmq) = \KL(\rmq \parallel \rmq^*_0) \geq \limsup_{\lambda \to 0} \KL(\rmq \parallel \rmq^*_0) + \lambda \Xi(\rmq).
\end{equation*}
This verifies the limsup inequality.  Combining the liminf and limsup inequalities, we obtain that $F = \Gamma-\text{lim}_{\lambda \to \infty} F_\lambda$.

We proceed to establish equi-coercivity of the sequence \( F_\lambda \). Consider a sequence \( \lambda_j \to 0 \) and \( \rmq_{\lambda_j} \in\P_2'(\Theta) \) for which \( F_{\lambda_j}(\rmq_{\lambda_j}) \leq t \) holds for all \( j \). Given that \( \Xi(\rmq_{\lambda_j}) \geq 0 \), it follows that \( \KL(\rmq \parallel \rmq^*_0) \leq t \). Owing to the geodesic convexity of the Kullback-Leibler divergence \( \KL(\cdot \parallel \rmq^*_0) \) in the Wasserstein space, Lemma 2.4.8 from \cite{Ambrosio2005} ensures that it is coercive, implying that the set \( \{\rmq \in\P_2'(\Theta) \mid \KL(\rmq \parallel \rmq^*_0) \leq t\} \) is compact in the metric space \( (\P_2'(\Theta), W_2) \). Sequential compactness guarantees the existence of a convergent subsequence of \( \rmq_{\lambda_j} \), which converges to some \( \rmq_0 \) in \(\P_2'(\Theta) \). Since \( \KL(\cdot \parallel \rmq^*_0) \) is lower semicontinuous (l.s.c.), we conclude that:

\begin{align*}
    F_{\lambda_j}(\rmq_0) &= \KL(\rmq_0 \parallel \rmq^*_0) + \lambda_j \Xi(\rmq_0) \leq \KL(\rmq_{\lambda_j} \parallel \rmq^*_0) + \lambda_j \Xi(\rmq_0) \\
    &\leq F_{\lambda_j}(\rmq_{\lambda_j}) + \lambda_j \Xi(\rmq_0) = F_{\lambda_j}(\rmq_{\lambda_j}) + o(1),
\end{align*}

Finally, by the fundamental theorem of $\Gamma$ convergence, we conclude that
\begin{equation*}
    W_2(\rmq^*_0, \rmq^*_\lambda) \to 0, \quad \text{as} \quad \lambda \to 0,
\end{equation*}
where $\rmq_0$ is a minimizer of $F_0$,  and
\begin{equation*}
   \left|\C_\lambda-\C_0 \right| \to  0, \quad \text{as} \quad \lambda \to 0.
\end{equation*}
 Since $\rmq^*_0$ is the unique minimizer of $F_0$, we conclude that $\rmq_0  = \rmq^*_0$.

To prove the convergence of optimal cost, we note that
\begin{equation*}
    F_\lambda(\rmq^*_\lambda) \leq F_\lambda(\rmq^*_0) = F_0(\rmq^*_0) + \lambda \Xi(\rmq^*_0).
\end{equation*}
Thus,
\begin{equation*}
    |\C_\lambda-\C_0| = |F_\lambda(\rmq^*_\lambda)-F_0(\rmq^*_0)| \leq \lambda \Xi(\rmq^*_0).
\end{equation*}
\end{proof}

Define a functional $\Phi_\lambda$ that combines the objective functional of the inner variational objective problem and \Cref{assumption:lip-cost}:
\begin{equation} \label{def:stability1}
    \Phi_\lambda(\rmq) :=  \E_\rmq\left[-\ell(\bx^{(n)}; \theta) +  \sum_{i = 1}^D \phi_i(\theta_i)  \right] + (\lambda + 1)\Xi(\rmq),
\end{equation}
where $\phi_i: \Theta_i \mapsto \R$ are the one-dimensional function in the Lipschitz cost assumption (\Cref{assumption:lip-cost}).  Since $\phi_i$ are tensorized, minimizing $\Phi_\lambda$ over $\C(m)$ is equivalent to solving the inner variational problem over $\C(m)$.

For proving \Cref{thm:stability}, we introduce a Pythagorean theorem for the inner variational problem.
\begin{lemma}  \label{lemma:stability1}
Let $ \rmq_\lambda \in \C(m)$ be a optimizer of $\Phi_\lambda$ over $\C(m)$. Then
\[
\KL(\rmq, \rmq_\lambda) \leq \Phi_\lambda(\rmq)-\Phi_\lambda(\rmq_\lambda), \quad \text{for all }  \rmq \in  \C(m).
\]
\end{lemma}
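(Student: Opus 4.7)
The plan is to exploit the structural representation of the minimizer $\rmq_\lambda$ from \Cref{thm:MEOT-struct} and the fact that every $\rmq \in \C(m)$ shares the same marginals $m_i$, so any tensorized additive term integrates to the same constant under both $\rmq$ and $\rmq_\lambda$. Concretely, the structure theorem gives measurable potentials $\phi_i^*$ with
\[
\log \rmq_\lambda(\theta) = \sum_{i=1}^D \phi_i^*(\theta_i) + \frac{1}{\lambda+1}\ell(\bx^{(n)};\theta) + \log m(\theta),
\]
and substituting this into $\KL(\rmq,\rmq_\lambda) = \E_\rmq[\log\rmq] - \E_\rmq[\log\rmq_\lambda]$, together with the identity $\E_\rmq[\log\rmq] = \Xi(\rmq) + \E_\rmq[\log m]$ valid for any $\rmq \in \C(m)$, collapses the expression to
\[
\KL(\rmq,\rmq_\lambda) = \Xi(\rmq) - \sum_{i=1}^D \E_{m_i}[\phi_i^*] - \frac{1}{\lambda+1}\E_\rmq[\ell(\bx^{(n)};\theta)].
\]

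Next I would apply the same identity to $\rmq_\lambda$ itself, using $\KL(\rmq_\lambda,\rmq_\lambda) = 0$, to solve for $\sum_i \E_{m_i}[\phi_i^*] = \Xi(\rmq_\lambda) - \frac{1}{\lambda+1}\E_{\rmq_\lambda}[\ell]$. Substituting this back yields the clean expression
\[
\KL(\rmq,\rmq_\lambda) = \bigl(\Xi(\rmq) - \Xi(\rmq_\lambda)\bigr) + \frac{1}{\lambda+1}\bigl(\E_{\rmq_\lambda}[\ell] - \E_\rmq[\ell]\bigr).
\]
At the same time, because the $\phi_i$'s in $\Phi_\lambda$ are tensorized and $\rmq,\rmq_\lambda \in \C(m)$, the $\phi$-terms cancel and
\[
\Phi_\lambda(\rmq) - \Phi_\lambda(\rmq_\lambda) = \bigl(\E_{\rmq_\lambda}[\ell] - \E_\rmq[\ell]\bigr) + (\lambda+1)\bigl(\Xi(\rmq) - \Xi(\rmq_\lambda)\bigr).
\]

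Writing $A := \Xi(\rmq) - \Xi(\rmq_\lambda)$ and $B := \E_{\rmq_\lambda}[\ell] - \E_\rmq[\ell]$, the desired inequality $\KL(\rmq,\rmq_\lambda) \le \Phi_\lambda(\rmq)-\Phi_\lambda(\rmq_\lambda)$ reduces to $A + \tfrac{B}{\lambda+1} \le B + (\lambda+1)A$, i.e., $\tfrac{\lambda}{\lambda+1}\bigl(B + (\lambda+1)A\bigr) \ge 0$. Since $\lambda \ge 0$, this is equivalent to $\Phi_\lambda(\rmq) - \Phi_\lambda(\rmq_\lambda) \ge 0$, which holds by optimality of $\rmq_\lambda$ on $\C(m)$. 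The main (minor) subtlety to watch is justifying the integrability needed to split the expectations: this is handled by \Cref{thm:MEOT-struct}, which guarantees $\phi_i^* \in L^\infty(m_i)$ after the normalization $\E_{m_i}[\phi_i^*] \ge 0$, together with the standing uniform boundedness of $\ell(\bx^{(n)};\cdot)$, so every term above is finite whenever $\Xi(\rmq) < \infty$ (and the claim is vacuous otherwise).
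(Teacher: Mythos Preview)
Your proof is correct and differs from the paper's in a useful way. The paper introduces the auxiliary density $\rmq_{\text{aux}} \propto e^{(\ell - \sum_i \phi_i)/(\lambda+1)} m$, notes that for $\rmq \in \C(m)$ the functional $\Phi_\lambda(\rmq)$ equals $(\lambda+1)\KL(\rmq \parallel \rmq_{\text{aux}})$ up to an additive constant, and then invokes Csisz\'ar's Pythagorean inequality for the I-projection onto the convex set $\C(m)$. You instead plug in the explicit form of $\rmq_\lambda$ supplied by \Cref{thm:MEOT-struct} and compute $\KL(\rmq,\rmq_\lambda)$ directly, using only that every $\rmq \in \C(m)$ shares the marginals $m_i$. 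This is more self-contained (no external information-theoretic lemma needed), and in fact your computation yields the exact identity $(\lambda+1)\KL(\rmq,\rmq_\lambda) = \Phi_\lambda(\rmq)-\Phi_\lambda(\rmq_\lambda)$, from which the lemma follows immediately since $\lambda \ge 0$; the paper's Pythagorean route a priori gives only an inequality unless one also observes that $\C(m)$ is a linear family.
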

\begin{proof}[Proof of \Cref{lemma:stability1}]
We recall definition of the auxiliary measure $\rmq_{\text{aux}}$ in the proof of \Cref{thm:MEOT-struct}, $\rmq_{\text{aux}}(\theta) = \alpha^{-1} \exp\left( \frac{\ell(\bx^{(n)}; \theta)-\sum_{i = 1}^D \phi_i(\theta_i)}{\lambda + 1} \right) m(\theta)$, where $\alpha$ is the normalizing constant. Then
\begin{equation} \label{eqn:lemma:stability-1}
    \Phi_\lambda(\rmq) = \KL(\rmq \parallel \rmq_{\text{aux}})-\log \alpha,
\end{equation}
so that the entropic optimal transport problem is equivalent to minimizing $\KL(\cdot \parallel \rmq_{\text{aux}})$. In particular, $\rmq_\lambda = \arg\min_{\C(m)}\KL(\rmq \parallel \rmq_{\text{aux}})$ and the Pythagorean theorem for relative entropy (Theorem 2.2, \citep{Csiszar1975}) yields
\begin{equation*}
    \KL(\rmq \parallel \rmq_{\text{aux}}) \geq \KL(\rmq_\lambda \parallel \rmq_{\text{aux}}) + \KL(\rmq \parallel \rmq_\lambda) \quad \text{for all }  \rmq \in \C(m).
\end{equation*}
In view of \Cref{eqn:lemma:stability-1}, the desired claim holds.
\end{proof}
The next Lemma is also auxiliary to the proof of \Cref{thm:stability}.
\begin{lemma} \label{lemma:stability2}
    Let $\rmq^*_\lambda \in \C(m^*)$ be a optimizer of $\Phi_\lambda$ over $\C(m^*)$, and $\rmq_\lambda^s \in \C(\tilde m)$ be its shadow. Then
    \begin{equation*}
        \left|\Phi_\lambda(\rmq^*_\lambda)-\Phi_\lambda(\rmq_\lambda^s) \right| \leq L W_2(\rmq^*_\lambda, \rmq_\lambda^s).
    \end{equation*}
\end{lemma}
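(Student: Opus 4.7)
The plan is to split the difference $\Phi_\lambda(\rmq^*_\lambda) - \Phi_\lambda(\rmq^s_\lambda)$ into a linear part (in $-\ell + \sum_i \phi_i$) and an expressivity part (in $\Xi$), and to handle the two signs of the absolute value via complementary applications of the shadow property (\Cref{lemma:shadow1}). The linear part is controlled directly by \Cref{assumption:lip-cost}, while the expressivity part requires more care because a single shadow step yields only a one-sided inequality.

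I would start from the decomposition
\begin{equation*}
\Phi_\lambda(\rmq^*_\lambda) - \Phi_\lambda(\rmq^s_\lambda) = \int_\Theta \Bigl(-\ell(\bx^{(n)};\theta) + \sum_{i=1}^D \phi_i(\theta_i)\Bigr)\bigl(\rmq^*_\lambda(\theta) - \rmq^s_\lambda(\theta)\bigr)\, d\theta + (\lambda+1)\bigl(\Xi(\rmq^*_\lambda) - \Xi(\rmq^s_\lambda)\bigr).
\end{equation*}
Because $\rmq^*_\lambda \in \C(m^*)$ and $\rmq^s_\lambda \in \C(\tilde m)$, \Cref{assumption:lip-cost} bounds the linear term by $L\, W_2(\rmq^*_\lambda, \rmq^s_\lambda)$ in absolute value. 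For the easy direction, recall that $\Xi(\rmq) = \KL(\rmq \parallel \prod_i \rmq_i)$ is an $f$-divergence between a joint and its marginals, so \Cref{lemma:shadow1} yields $\Xi(\rmq^s_\lambda) \leq \Xi(\rmq^*_\lambda)$. The expressivity term in the decomposition is therefore nonnegative, which combined with the linear bound immediately gives $\Phi_\lambda(\rmq^s_\lambda) - \Phi_\lambda(\rmq^*_\lambda) \leq L\, W_2(\rmq^*_\lambda, \rmq^s_\lambda)$.

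The harder direction, $\Phi_\lambda(\rmq^*_\lambda) - \Phi_\lambda(\rmq^s_\lambda) \leq L\, W_2(\rmq^*_\lambda, \rmq^s_\lambda)$, is where I expect the main obstacle, since the nonnegative expressivity gap pushes the difference in the wrong direction. To close this gap I would introduce a \emph{reverse shadow} $\rmq^{s'}_\lambda \in \C(m^*)$ of $\rmq^s_\lambda \in \C(\tilde m)$, built from the optimal one-dimensional couplings from $\tilde m_i$ back to $m^*_i$. By the optimality of $\rmq^*_\lambda$ on $\C(m^*)$ we have $\Phi_\lambda(\rmq^*_\lambda) \leq \Phi_\lambda(\rmq^{s'}_\lambda)$, and hence
\begin{equation*}
\Phi_\lambda(\rmq^*_\lambda) - \Phi_\lambda(\rmq^s_\lambda) \leq \Phi_\lambda(\rmq^{s'}_\lambda) - \Phi_\lambda(\rmq^s_\lambda).
\end{equation*}
Applying the same decomposition to the right-hand side, the linear part is bounded by $L\, W_2(\rmq^{s'}_\lambda, \rmq^s_\lambda)$, which in turn equals $L\, W_2(m^*, \tilde m) = L\, W_2(\rmq^*_\lambda, \rmq^s_\lambda)$ by the $W_2$-isometry of shadowing (\Cref{lemma:shadow1} applied in both directions). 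The expressivity part is nonpositive by a second application of \Cref{lemma:shadow1} to the reverse shadow, which gives $\Xi(\rmq^{s'}_\lambda) \leq \Xi(\rmq^s_\lambda)$. Combining these two observations yields the claim.

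The key conceptual step, and the main obstacle if one attempts a direct estimate, is recognizing that a single shadow step controls only one side of the inequality; the reverse shadow, together with the optimality of $\rmq^*_\lambda$ over $\C(m^*)$, is exactly what restores the other side while preserving the $W_2$-distance constant.
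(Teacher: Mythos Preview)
Your proof is correct. The easy direction matches the paper exactly: the Lipschitz cost assumption plus the $f$-divergence contraction in \Cref{lemma:shadow1} (applied to $\Xi(\rmq)=\KL(\rmq\parallel m)$ for $\rmq\in\C(m)$) gives $\Phi_\lambda(\rmq_\lambda^s)-\Phi_\lambda(\rmq^*_\lambda)\le L\,W_2(\rmq^*_\lambda,\rmq_\lambda^s)$.

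For the second direction the paper writes only ``the claim follows by a symmetric argument.'' A literal symmetric argument---shadowing $\rmq_\lambda^s$ back to $\C(m^*)$---does not return $\rmq^*_\lambda$, so some additional ingredient is needed. Your reverse-shadow construction $\rmq_\lambda^{s'}\in\C(m^*)$ together with the optimality of $\rmq^*_\lambda$ over $\C(m^*)$ is exactly the right way to make that phrase precise: optimality furnishes $\Phi_\lambda(\rmq^*_\lambda)\le\Phi_\lambda(\rmq_\lambda^{s'})$, and then the same Lipschitz-plus-contraction step (now applied to the pair $(\rmq_\lambda^s,\rmq_\lambda^{s'})$, with $W_2(\rmq_\lambda^s,\rmq_\lambda^{s'})=W_2(\tilde m,m^*)=W_2(\rmq^*_\lambda,\rmq_\lambda^s)$ by two uses of the shadow isometry) finishes the bound. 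So your approach is the same in spirit as the paper's and more explicit in execution; in particular it makes clear that the optimality hypothesis on $\rmq^*_\lambda$ is genuinely used for the hard direction.
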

\begin{proof}[Proof of \Cref{lemma:stability2}]
Using the Lipschitz cost assumption and \Cref{lemma:shadow1}, we have
\begin{align*}
    \Phi_\lambda(\rmq^*_\lambda) &= \E_{\rmq^*_\lambda}\left[-\ell(\bx^{(n)}; \theta) +  \sum_{i = 1}^D \phi_i(\theta_i)  \right] + (\lambda + 1)\Xi(\rmq^*_\lambda). \\
    &\geq \E_{\rmq_\lambda^s}\left[-\ell(\bx^{(n)}; \theta) +  \sum_{i = 1}^D \phi_i(\theta_i)  \right]-L W_2(\rmq^*_\lambda, \rmq_\lambda^s) + (\lambda + 1)\Xi(\rmq_\lambda^s) \\
    &= \Phi_\lambda(\rmq_\lambda^s)-L W_2(\rmq^*_\lambda, \rmq_\lambda^s).
\end{align*}
The claim follows by a symmetric argument.
\end{proof}

\begin{proof}[Proof of \Cref{thm:stability}]
 Consider the optimizers $\tilde \rmq_\lambda \in \C(\tilde m)$ and $\rmq^*_\lambda \in \C(m^*_\lambda)$. Let $\rmq^s_\lambda \in \C(\tilde m)$ be the shadow of $\rmq^*_\lambda$. By \Cref{lemma:shadow1} and the Lipschitz cost assumption, we have:
\begin{align*}
    \Phi_\lambda(\rmq_\lambda^s)-\Phi_\lambda(\rmq^*_\lambda) &\leq \int_\Theta \left(\ell(\bx^{(n)}; \theta)-\sum_{i = 1}^D \phi_i(\theta_i) \right)(\rmq_\lambda^s(\theta)-\rmq^*_\lambda(\theta)) d \theta \\
    &\leq L W_2(\rmq_\lambda^s,  \rmq^*_\lambda) \leq L W_2(m^*_\lambda, \tilde m).
\end{align*}

\Cref{lemma:stability2} implies $\Phi_\lambda(\tilde \rmq_\lambda)-\Phi_\lambda(\rmq^*_\lambda) \leq LW_2(m^*_\lambda, \tilde m)$. Adding the inequalities shows:
\begin{equation*}
    |\Phi_\lambda(\tilde \rmq_\lambda) - \Phi_\lambda(\rmq_\lambda^s)| \leq 2 L W_2(m^*_\lambda, \tilde m).
\end{equation*}
By \Cref{lemma:stability1}, we have that $\KL(\tilde{\pi}, \pi^{*}) \leq 2L W_2(m^*_\lambda, \tilde m)$, and the transport inequality assumption implies:
\begin{equation*}
    W_\rho(\rmq_\lambda^s, \tilde \rmq_\lambda) \leq C_\rho(2L W_2(m^*_\lambda, \tilde m))^{\frac{1}{2\rho}}.
\end{equation*}

By \Cref{lemma:shadow1}, we get $W_2(\rmq^*_\lambda, \rmq_\lambda^s)  = W_2(m^*_\lambda, \tilde m)$. We conclude the proof via the triangle inequality,
\begin{equation*}
    W_2(\rmq^*_\lambda, \tilde \rmq_\lambda) \leq W_2(\rmq^*_\lambda, \rmq_\lambda^s) + W_2(\rmq_\lambda^s, \tilde \rmq_\lambda) \leq W_2(m^*_\lambda, \tilde m) + C_\rmq (2L W_2(m^*_\lambda, \tilde m))^{\frac{1}{2 \rmq}}.
\end{equation*}
\end{proof}

\section*{Details of \Cref{example:multivariate-Gaussian}} \label{sect-proof-Gaussian}
\begin{proof}[Proof of \Cref{prop:mGaussian}]
Let the true precision
matrix be $\Lambda_0$. Let $f(\mu, \Sigma)$ be the objective function \Cref{eqn-mGaussian-1}
parameterized by the variational mean $\mu$ and covariance $\Sigma$. A
direct calculation shows
\begin{equation*}
f(\mu, \Sigma) = \frac{1}{2}\left[  (\mu_0-\mu)^T \Lambda_0
       (\mu_0-\mu) + \lambda \sum_{K = 1}^D \log
       \Sigma_{KK} + \text{tr} \left\{ \Lambda_0
       \Sigma \right\}-(\lambda + 1) \log |\Sigma|\right].
\end{equation*}
First, we confirm that the optimal $\mu^*$ is equal to the true
$\mu_0$.  The first-order optimality condition of $f$ with respect to $\mu$ yields
\begin{equation*}
    \partial_\mu f(\mu, \Sigma)  = 0 \implies \Lambda_0(\mu - \mu_0) = 0.
\end{equation*}
Since $\Sigma_0$ is full-rank, its inverse $\Lambda_0$ is full-rank hence the equality above yields $\mu^* = \mu_0$.

Now we turn to $\Sigma^*$. The first-order optimality yields the following characterization of the optimal precision matrix
$\Lambda^*$:
\begin{equation}
  \label{eqn-mGaussian-2}
\partial_{\Sigma} f(\mu, \Sigma) = 0  \implies \lambda (\Sigma_{\text{diag}}^{*})^{-1} + \Lambda_0 - (\lambda + 1) \Lambda= 0 \implies \Lambda^*  =  \frac{1}{\lambda + 1} \Lambda_0  +\frac{\lambda}{\lambda + 1} (\Sigma_{\text{diag}}^{*})^{-1},
\end{equation}
where $\Sigma_{\text{diag}}^*, \Sigma_{\text{off}}^*$ denote the
diagonal and off-diagonal minor of $\Sigma^*$, respectively.
By \Cref{eqn-mGaussian-2}, we have
\begin{equation*}
\begin{aligned}
 \Lambda^*_{\text{diag}} = \frac{1}{\lambda + 1} \Lambda_{0,diag} + \frac{\lambda}{\lambda + 1} (\Sigma^*_{\text{diag}})^{-1} &\implies (\lambda + 1)\Lambda^*_{\text{diag}}-\lambda  (\Sigma^*_{\text{diag}})^{-1} = \Lambda_{0, \text{diag}}
\end{aligned}
\end{equation*}
By \Cref{lemma:matrix-2}, we have the inequality
\begin{equation}\label{proof-mGaussian-1}
  \Sigma^{*^{-1}}_{\text{diag}}\leq \Lambda^*_{\text{diag}}  \leq \Lambda_{0, \text{diag}}.
\end{equation}
By the Woodbury identity (Eq.(156) in \cite{petersen2008matrix}) with $C = I_D$, we have
\begin{align*}
        \Sigma^*  &= \left[\frac{1}{\lambda + 1} \Lambda_0 + \frac{\lambda}{\lambda + 1} \Sigma^{*^{-1}}_{\text{diag}} \right]^{-1} = \frac{\lambda +1}{\lambda}  \Sigma^{*}_{\text{diag}}-\left[\frac{\lambda}{\lambda + 1} \Sigma^{*^{-1}}_{\text{diag}} + \frac{\lambda^2}{\lambda +1 }  \Sigma^{*^{-1}}_{\text{diag}}\Sigma_0  \Sigma^{*^{-1}}_{\text{diag}} \right]^{-1}.
\end{align*}
Taking the diagonal elements on both sides, we have
\begin{align*}
 \frac{1}{\lambda}  \Sigma^{*}_{\text{diag}} &= \left(\left[\frac{\lambda}{\lambda + 1} \Sigma^{*^{-1}}_{\text{diag}} + \frac{\lambda^2}{\lambda +1 }   \Sigma^{*^{-1}}_{\text{diag}}\Sigma_0 \Sigma^{*^{-1}}_{\text{diag}}  \right]^{-1}\right)_{\text{diag}} \\
 &= \left(\left[\underbrace{\frac{\lambda}{\lambda + 1} \Sigma^{*^{-1}}_{\text{diag}} + \frac{\lambda^2}{\lambda +1 }  \Sigma^{*^{-1}}_{\text{diag}} \Sigma_{0, \text{diag}} \Sigma^{*^{-1}}_{\text{diag}}}_{A} + \underbrace{\frac{\lambda^2}{\lambda +1 }  \Sigma^{*^{-1}}_{\text{diag}} \Sigma_{0, \text{off}}  \Sigma^{*^{-1}}_{\text{diag}}}_{B} \right]^{-1}\right)_{\text{diag}}.
\end{align*}
Note that $B$ is a matrix with zero diagonal entries.
By \Cref{lemma:matrix-2}, we have
\begin{align*}
     \frac{1}{\lambda}  \Sigma^{*}_{\text{diag}} = \left([A + B]^{-1}\right)_{\text{diag}} \geq [A + B]_{\text{diag}}^{-1}= A^{-1}.
\end{align*}
This implies that
\begin{equation*}
   \frac{1}{\lambda}  \Sigma^{*}_{\text{diag}} \geq  \Sigma^{*}_{\text{diag}}\left[ \frac{\lambda}{\lambda + 1} \Sigma^{*}_{\text{diag}} + \frac{\lambda^2}{\lambda +1 }  \Sigma_{0, \text{diag}} \right]^{-1} \Sigma^{*}_{\text{diag}},
\end{equation*}
which after simplification yields
\begin{equation} \label{proof-mGaussian-2}
     \Sigma^{*}_{\text{diag}} \leq \Sigma_{0, \text{diag}}.
\end{equation}
By Hua's identity, we have
\begin{align*}
        \Sigma^*  &= \left[\frac{1}{\lambda + 1} \Lambda_0 + \frac{\lambda}{\lambda + 1} \Sigma^{*^{-1}}_{\text{diag}} \right]^{-1} = (\lambda + 1) \Sigma_0-\left[\frac{1}{\lambda + 1} \Lambda_0 + \frac{1}{\lambda(\lambda + 1)} \Lambda_0  \Sigma^{*}_{\text{diag}} \Lambda_0 \right]^{-1}.
\end{align*}
It follows that
\begin{align*}
    \Sigma^* -\Sigma_0 &= \lambda \Sigma_0-\left[\frac{1}{\lambda + 1} \Lambda_0 + \frac{1}{\lambda(\lambda + 1)} \Lambda_0  \Sigma^{*}_{\text{diag}} \Lambda_0 \right]^{-1}.
\end{align*}
By \Cref{proof-mGaussian-2}, the matrix $\Sigma^* -\Sigma_0$ is negative semidefinite. By \Cref{proof-mGaussian-1}, we have $\Sigma^{*}_{\text{diag}} \geq \Lambda_{0, \text{diag}}^{-1}$. Then
\begin{align*}
    \|\Sigma^* -\Sigma_0\| &\leq \left\|\lambda \Sigma_0-\left[\frac{1}{\lambda + 1} \Lambda_0 + \frac{1}{\lambda(\lambda + 1)} \Lambda_0   \Lambda_{0, \text{diag}}^{-1} \Lambda_0 \right]^{-1} \right\|.
\end{align*}

Since $ \Sigma^{*}_{\text{diag}} \leq \Sigma_{0, \text{diag}}$, we obtain a lower bound with analogous techniques.
\begin{align*}
 \|\Sigma^* -\Sigma_0\| &\geq \left\|\lambda \Sigma_0-\left[\frac{1}{\lambda + 1} \Lambda_0 + \frac{1}{\lambda(\lambda + 1)} \Lambda_0    \Sigma_{0, \text{diag}} \Lambda_0 \right]^{-1} \right\|.
\end{align*}
This lower bound holds when the matrix on the right hand side is negative semidefinite.  To see that, we have
\begin{align*}
    \lambda \Sigma_0-\left[\frac{1}{\lambda + 1} \Lambda_0 + \frac{1}{\lambda(\lambda + 1)} \Lambda_0    \Sigma_{0, \text{diag}} \Lambda_0 \right]^{-1} &= -\Sigma_0  + \left[\frac{1}{\lambda + 1} \Lambda_0 + \frac{\lambda}{\lambda + 1}   \Sigma_{0, \text{diag}}^{-1}\right]^{-1},
\end{align*}
Since $\Lambda_{0, \text{diag}} \geq \Sigma_{0, \text{diag}}^{-1}$, we have
\begin{equation*}
    \left(-\Sigma_0  + \left[\frac{1}{\lambda + 1} \Lambda_0 + \frac{\lambda}{\lambda + 1}   \Sigma_{0, \text{diag}}^{-1}\right]^{-1}\right)_{\text{diag}} \leq   -\Sigma_{0, \text{diag}} +  \Sigma_{0, \text{diag}} = 0.
\end{equation*}
This completes the proof.
\end{proof}
Next, we provide the explicit formula for the $\Xi$-VI solution when the exact posterior is bivariate Gaussian.
\begin{proposition}
  \label{prop:mGaussian2}
  Let the exact posterior $\rmq_0^*$ be a bivariate Gaussian distribution with mean $\mu_0$ and precision matrix
  $\Lambda_0 = \begin{pmatrix}
    a_0 & b_0 \\
    b_0 & c_0
  \end{pmatrix}$.
  Then the $\Xi$-variational solution in \Cref{eqn-mGaussian-1} is a bivariate Gaussian distribution with mean $\mu_0$ and the following precision and covariance matrices:
  \begin{align*}
    \Lambda^* &= \begin{pmatrix}
      \frac{a_0}{2} + \sqrt{\frac{a_0^2}{4}-\frac{\lambda}{(\lambda + 1)^2} \frac{a_0 b_0^2}{c_0}} & \frac{1}{\lambda+1} b_0 \\
      \frac{1}{\lambda+1} b_0 &  \frac{c_0}{2} + \sqrt{\frac{c_0^2}{4}-\frac{\lambda}{(\lambda + 1)^2} \frac{c_0 b_0^2}{a_0}}
    \end{pmatrix}, \\
    \Sigma^* &= \frac{1}{a_0 c_0-b_0^2}\begin{pmatrix}
                 c_0 &-\frac{b_0}{\psi(\lambda)}\\
                 - \frac{b_0}{\psi(\lambda)}& a_0
               \end{pmatrix},
  \end{align*}
  where $\psi(\lambda) = \frac{1}{2} \left(\lambda + 1 +
    \sqrt{\left(\lambda -\frac{2 b_0^2-a_0 c_0}{a_0 c_0}\right)^2 +
      \frac{4b_0^2}{a_0^2 c_0^2}\left(a_0 c_0-b_0^2\right)} \right)$.
\end{proposition}
Compared to the exact covariance, the variational covariance matrix is adjusted by a factor depending on the regularizer $\lambda$. The adjusting function $\psi: [0, \infty) \mapsto [0, \infty)$ is strictly increasing. Thus, as $\lambda$ increases, we have element-wise strictly decreasing convergence to the mean-field covariance, i.e.
$ \lim_{\lambda \to \infty} \Sigma^* = \begin{pmatrix} a_0^{-1} & 0 \\
  0 & c_0^{-1}
  \end{pmatrix}$.
\begin{proof}[Proof of \Cref{prop:mGaussian2}]
Denote $\Lambda^* = \begin{pmatrix}
    a & b \\
    b & c
\end{pmatrix}$. The inverse is
\begin{equation*}
    \Sigma^* = \frac{1}{ac-b^2} \begin{pmatrix}
    c & -b \\
    -b & a
\end{pmatrix}.
\end{equation*}
As shown in \Cref{example:multivariate-Gaussian}, we have
\begin{equation*}
      \begin{pmatrix}
    a & b \\
    b & c
\end{pmatrix}  = \frac{1}{\lambda + 1} \begin{pmatrix}
    a_0 & b_0 \\
    b_0 & c_0
\end{pmatrix} + \frac{\lambda}{\lambda + 1}  \begin{pmatrix}
    a-\frac{b^2}{c} & 0 \\
    0 & d-\frac{b^2}{c}
\end{pmatrix}.
\end{equation*}
This implies
\begin{equation*}
\begin{pmatrix}
a + \lambda \frac{b^2}{c} & (\lambda +1)b \\
(\lambda +1)b & c + \lambda \frac{b^2}{a}
\end{pmatrix}  = \begin{pmatrix}
    a_0 & b_0 \\
    b_0 & c_0
\end{pmatrix}.
\end{equation*}
This translates to a system of equations
\begin{align*}
    (\lambda +1)b &= b_0 \\
    a + \lambda \frac{b^2}{c} &= a_0 \\
   c + \lambda \frac{b^2}{a} &= c_0.
\end{align*}
The first Equation yields $b = \frac{b_0}{\lambda + 1}$.  The other two equations yield
\begin{equation*}
    \frac{a_0 c}{c_0 a} = \frac{ac + b^2}{ac + b^2}  = 1 \implies c = \frac{c_0}{a_0} a .
\end{equation*}
Substituting $c$ gives us
\begin{equation*}
     a + \lambda \frac{a_0 b^2}{c_0 a} = a_0.
\end{equation*}
which yields $a =\frac{a_0}{2} \pm \sqrt{\frac{a_0^2}{4}-\frac{\lambda}{(\lambda + 1)^2} \frac{a_0 b_0^2}{c_0}}$.

Similarly, substituting $a$ with $c$ yields  $c =  \frac{c_0}{2} \pm  \sqrt{\frac{c_0^2}{4}-\frac{\lambda}{(\lambda + 1)^2} \frac{c_0 b_0^2}{a_0}}$.

Finally, use the fact that $\Lambda^* = \Lambda_0$ when $\lambda = 0$ to obtain the solution set
\begin{equation*}
   a =\frac{a_0}{2} + \sqrt{\frac{a_0^2}{4}-\frac{\lambda}{(\lambda + 1)^2} \frac{a_0 b_0^2}{c_0}}, \quad b = \frac{b_0}{\lambda + 1}, \quad c =  \frac{c_0}{2} + \sqrt{\frac{c_0^2}{4}-\frac{\lambda}{(\lambda + 1)^2} \frac{c_0 b_0^2}{a_0}}.
\end{equation*}
To obtain the covariance matrix, note that
\begin{equation*}
    |\Lambda^*| = ac-b^2 = \frac{a_0 c_0}{2} \left(1 + \sqrt{1-\frac{\lambda}{(\lambda +1)^2} \frac{4b_0^2}{a_0 c_0}} \right)-\frac{1}{\lambda + 1} b_0^2.
\end{equation*}
By the matrix inversion formula,
\begin{align*}
\Sigma^* = \frac{1}{|\Lambda^*|} \begin{pmatrix}
c & -b \\
-b& a
\end{pmatrix} &=  \begin{pmatrix}
\frac{\frac{c_0}{2} + \sqrt{\frac{c_0^2}{4}-\frac{\lambda}{(\lambda + 1)^2} \frac{c_0 b_0^2}{a_0}}}{\frac{a_0 c_0}{2} \left(1 + \sqrt{1-\frac{\lambda}{(\lambda +1)^2} \frac{4b_0^2}{a_0 c_0}} \right)-\frac{1}{\lambda + 1} b_0^2} & -\frac{b_0}{(\lambda + 1)\left( \frac{a_0 c_0}{2} \left(1 + \sqrt{1-\frac{\lambda}{(\lambda +1)^2} \frac{4b_0^2}{a_0 c_0}} \right)-\frac{1}{\lambda + 1} b_0^2\right)} \\
-\frac{b_0}{(\lambda + 1)\left( \frac{a_0 c_0}{2} \left(1 + \sqrt{1-\frac{\lambda}{(\lambda +1)^2} \frac{4b_0^2}{a_0 c_0}} \right)-\frac{1}{\lambda + 1} b_0^2\right)} &  \frac{\frac{a_0}{2} + \sqrt{\frac{a_0^2}{4}-\frac{\lambda}{(\lambda + 1)^2} \frac{a_0 b_0^2}{c_0}}}{\frac{a_0 c_0}{2} \left(1 + \sqrt{1-\frac{\lambda}{(\lambda +1)^2} \frac{4b_0^2}{a_0 c_0}} \right)-\frac{1}{\lambda + 1} b_0^2}  \\
\end{pmatrix} \\
&= \begin{pmatrix}
\frac{c_0}{a_0 c_0-\frac{2}{\lambda + 1 + \sqrt{(\lambda + 1)^2-\lambda \frac{4b_0^2}{a_0 c_0}}} b_0^2}  &-\frac{b_0}{\frac{a_0 c_0}{2}\left(\lambda + 1 + \sqrt{(\lambda +1)^2-\lambda \frac{4b_0^2}{a_0 c_0}}\right)-b_0^2} \\
- \frac{b_0}{\frac{a_0 c_0}{2}\left(\lambda + 1 + \sqrt{(\lambda +1)^2-\lambda \frac{4b_0^2}{a_0 c_0}}\right)-b_0^2} &  \frac{a_0}{a_0 c_0-\frac{2}{\lambda + 1 + \sqrt{(\lambda + 1)^2-\lambda \frac{4b_0^2}{a_0 c_0}}} b_0^2}
\end{pmatrix} \\
&= \frac{1}{a_0 c_0-\psi^{-1}(\lambda) b_0^2}\begin{pmatrix} c_0 &-\frac{b_0}{\psi(\lambda)}\\
- \frac{b_0}{\psi(\lambda)}& a_0
\end{pmatrix}.
\end{align*}
where $\psi(\lambda) = \frac{1}{2} \left(\lambda + 1 + \sqrt{\left(\lambda -\frac{2 b_0^2-a_0 c_0}{a_0 c_0}\right)^2 + \frac{4b_0^2}{a_0^2 c_0^2}\left(a_0 c_0-b_0^2\right)} \right)$.
\end{proof}
\begin{proof}[Proof of \Cref{cor:stability-1}]
We use the well known property of the $2$-Wasserstein metric that for any $\rmq_0, \rmq_1 \in \P_2(\Theta)$,
\begin{equation}\label{proof:cor:stability-1-eqn-1}
    W_2^2(\prod_{i = 1}^D \rmq_{0, i},\prod_{i = 1}^D \rmq_{1, i} )  \leq \sum_{i =1}^D W_2^2 ( \rmq_{0, i}, \rmq_{1,i}) \leq W_2^2 (\rmq_0, \rmq_1).
\end{equation}
By \Cref{thm:mf-convergence}, $W_2 (\rmq^*_\lambda, \rmq^*_\infty) \to 0$ as $\lambda \to \infty$, hence $W_2(m^*_\lambda, \rmq^*_\infty) \to 0$ by the identity \Cref{proof:cor:stability-1-eqn-1} which implies $W_2(\rmq^{*^{(\infty)}}_\lambda, \rmq^*_\lambda) \to 0$ by \Cref{thm:stability}. An analogous derivation holds for $\lambda \to 0$.
\end{proof}

\section{Additional Simulation Results} \label{sect-additional-simulation}
\begin{table}
\centering
\begin{tabular}{c c c c c}
\toprule
$N$
& $\frac{\|\E_{\hat{\rmq}_{10,N}}(\theta)-\E_{\rmq_0^*}(\theta)\|_2}{\|\E_{\rmq_0^*}(\theta)\|_2}$
& $\frac{\|\mathrm{Cov}_{\hat{\rmq}_{10,N}}(\theta)-\mathrm{Cov}_{\rmq_0^*}(\theta)\|_F}{\|\mathrm{Cov}_{\rmq_0^*}(\theta)\|_F}$
& $W_2(\hat{\rmq}_{10,N},\,\rmq_0^*)$
& Runtime (sec)
\\
\midrule
5  & 0.069 {[0.059,\,0.078]}
   & 0.813 {[0.767,\,0.884]}
   & 0.321 {[0.307,\,0.336]}
   & 0.017 {[0.017,\,0.017]}
\\
10 & 0.049 {[0.041,\,0.052]}
   & 0.672 {[0.633,\,0.724]}
   & 0.243 {[0.231,\,0.263]}
   & 0.222 {[0.220,\,0.225]}
\\
15 & 0.040 {[0.035,\,0.044]}
   & 0.639 {[0.611,\,0.664]}
   & 0.221 {[0.215,\,0.231]}
   & 1.076 {[1.074,\,1.081]}
\\
20 & 0.031 {[0.029,\,0.035]}
   & 0.588 {[0.570,\,0.605]}
   & 0.196 {[0.191,\,0.204]}
   & 3.458 {[3.428,\,3.494]}
\\
25 & 0.032 {[0.027,\,0.035]}
   & 0.583 {[0.559,\,0.604]}
   & 0.192 {[0.184,\,0.201]}
   & 8.528 {[8.475,\,8.602]}
\\
30 & 0.026 {[0.024,\,0.032]}
   & 0.582 {[0.538,\,0.596]}
   & 0.188 {[0.176,\,0.195]}
   & 17.848 {[17.723,\,18.021]}
\\
35 & 0.026 {[0.024,\,0.028]}
   & 0.545 {[0.538,\,0.565]}
   & 0.178 {[0.173,\,0.182]}
   & 32.286 {[32.016,\,32.531]}
\\
\bottomrule
\end{tabular}

\caption{Median and interquantile range of the approximation errors of the $\Xi$-VI posterior $\hat{\rmq}_{10,N}$ for varying Monte Carlo support sizes $N$ in Laplace linear regression.
For each $N$, we run \Cref{alg-approx} with $\lambda=10$ and compare $\hat{\rmq}_{10,N}$ to the exact posterior $\rmq_0^*$ obtained via MCMC.
Across $20$ experimental replicates, we report the median and interquantile range of the relative mean error, relative covariance error, the $W_2$ distance, and the runtime.
Although larger $N$ improves posterior accuracy, the computational cost increases sharply and the accuracy gains beyond $N \approx 20$ are modest, thus $N=20$ is a reasonable choice for accurate and scalable inference in this problem.}
\label{tab:xi-vi-sensitivity}
\end{table}

\begin{figure}[t]
  \centering
  \includegraphics[width=0.8\textwidth]{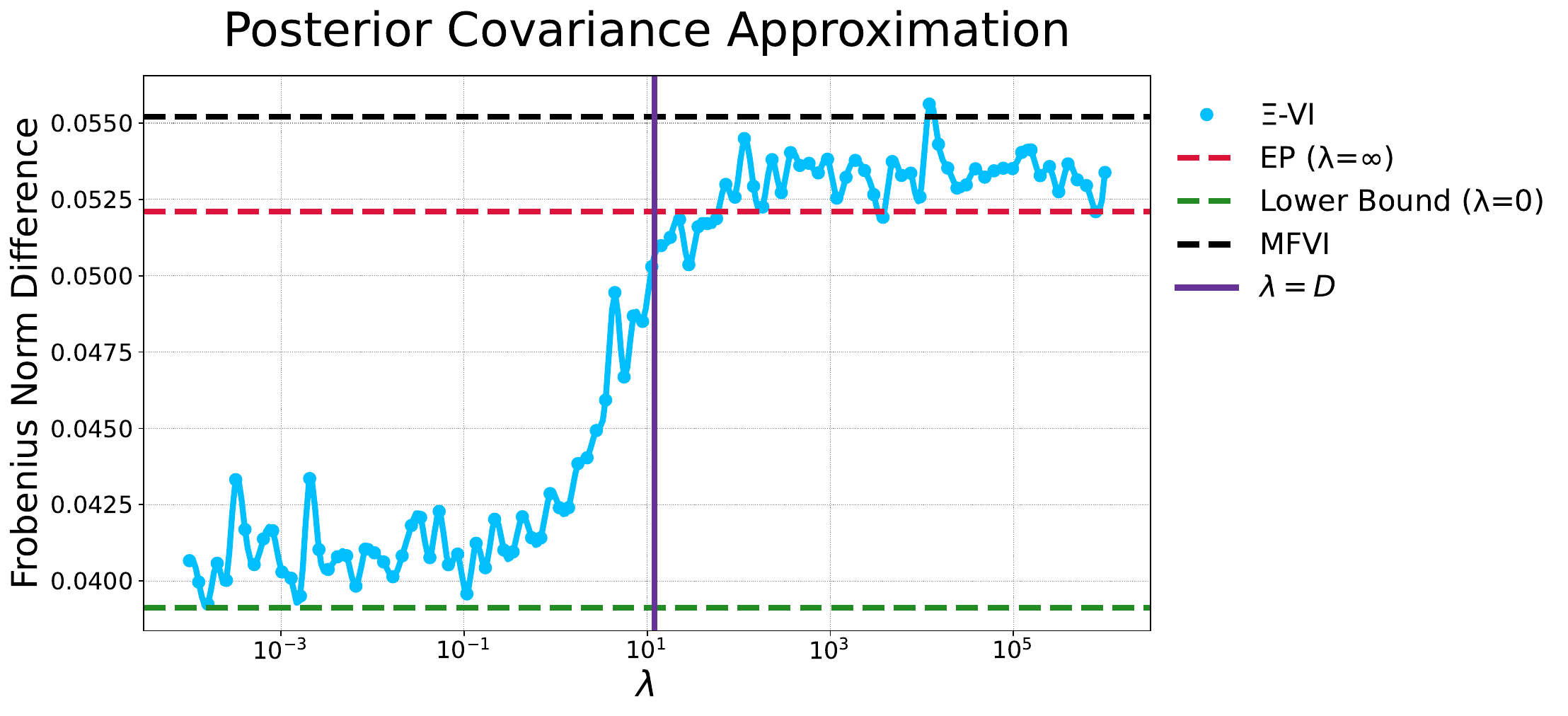}
  \caption{Approximation errors for posterior covariance for Laplace linear regression. The experiment implements \Cref{alg-approx} with expectation propagation as the first step. Errors are quantified using the Frobenius norm and contrasted across a spectrum of $\lambda$ values, including the theoretical lower bound at \(\lambda = 0\) and the diagonal EP approximation at \(\lambda = \infty\). The vertical line marks the regularization parameter \(\lambda = D\).}
\label{fig:laplace_cov1}
\end{figure}

\newpage
\section{Additional Analysis of the 8-Schools Model} \label{sect-additional-school}
In this section, we present additional results for the eight school example in \Cref{example:Eight-school}.

 Here the goal is to compare treatment effects between schools. We compute the posterior credible intervals for the differences in treatment effects $\theta_i - \theta_j$ between schools $i$ and $j$. \Cref{tab:credible_intervals_random_pairs} shows credible intervals for $\theta_i - \theta_j$ across ten randomly chosen school pairs, calculated under the exact posterior, MFVI, and $\Xi$-VI with $\lambda \in \{0, 1, 10, 1000\}$. The results show $\Xi$-VI, especially with lower $\lambda$ values, yields intervals that more accurately reflect those derived from the exact posterior, while MFVI produces the most inaccurate intervals.

\begin{table}[t]
\centering
\caption{$95\%$ posterior credible intervals for \(\theta_i-\theta_j\) for 10 randomly selected pairs of schools.}
\label{tab:credible_intervals_random_pairs}
\begin{tabular}{|c|c|c|c|c|c|}
\hline
\textbf{Method} & $\theta_2-\theta_5$ & $\theta_6-\theta_7$ & $\theta_2-\theta_4$ & $\theta_4-\theta_8$ & $\theta_1-\theta_2$ \\
\hline
MFVI & [-12.08, 18.48] & [-20.12, 11.37] & [-13.99, 15.79] & [-15.37, 14.97] & [-13.47, 17.10] \\
True & [-8.50, 14.90] & [-17.74, 7.30] & [-11.28, 12.40] & [-13.02, 12.52] & [-9.21, 16.55] \\
$\lambda=0$ & [-7.81, 13.77] & [-16.17, 6.49] & [-10.42, 12.06] & [-11.19, 12.70] & [-8.12, 15.81] \\
$\lambda = 1$ & [-9.02, 13.54] & [-16.71, 7.44] & [-11.57, 11.87] & [-11.73, 13.33] & [-9.09, 15.97] \\
$\lambda = 10$ & [-10.43, 13.01] & [-15.77, 8.75] & [-12.57, 12.82] & [-12.21, 14.50] & [-10.54, 15.41] \\
$\lambda = 1000$ & [-10.51, 12.93] & [-15.86, 9.53] & [-12.55, 13.19] & [-12.71, 14.47] & [-10.79, 15.64] \\
\hline
\end{tabular}

\begin{tabular}{|c|c|c|c|c|c|}
\hline
\textbf{Method} & $\theta_2-\theta_8$ & $\theta_3-\theta_8$ & $\theta_5-\theta_6$ & $\theta_2-\theta_7$ & $\theta_3-\theta_4$ \\
\hline
MFVI & [-14.83, 17.07] & [-17.71, 13.88] & [-16.77, 14.19] & [-17.30, 13.41] & [-17.62, 13.49] \\
True & [-12.09, 12.73] & [-16.08, 11.05] & [-12.62, 10.31] & [-14.97, 9.20] & [-14.79, 10.45] \\
$\lambda=0$ & [-10.15, 13.41] & [-14.54, 11.56] & [-11.12, 9.70] & [-13.66, 8.31] & [-14.33, 10.41] \\
$\lambda = 1$ & [-11.26, 13.57] & [-14.31, 12.75] & [-11.49, 10.27] & [-15.28, 8.95] & [-14.95, 11.25] \\
$\lambda = 10$ & [-11.58, 14.47] & [-14.69, 13.74] & [-11.15, 10.96] & [-15.09, 10.55] & [-15.18, 12.35] \\
$\lambda = 1000$ & [-12.14, 14.34] & [-14.48, 14.32] & [-12.00, 11.09] & [-14.75, 10.63] & [-15.21, 13.12] \\
\hline
\end{tabular}
\end{table}

\end{document}